\newcounter{todocounter}
\newtheorem{theorem}{Theorem}
\newtheorem*{definition*}{Definition}
\title{Do I look like a ``cat.n.01'' to you? \\ A Taxonomy Image Generation Benchmark}
\author{First Author \\
  Affiliation / Address line 1 \\
  Affiliation / Address line 2 \\
  Affiliation / Address line 3 \\
  \texttt{email@domain} \\\And
  Second Author \\
  Affiliation / Address line 1 \\
  Affiliation / Address line 2 \\
  Affiliation / Address line 3 \\
  \texttt{email@domain} \\}
\author{
 \textbf{Viktor Moskvoretskii\textsuperscript{1,2}},
 \textbf{Alina Lobanova\textsuperscript{2}},
 \textbf{Ekaterina Neminova\textsuperscript{1,2}},
 \textbf{Chris Biemann\textsuperscript{3}},
\\
 \textbf{Alexander Panchenko\textsuperscript{1,4}},
 \textbf{Irina Nikishina\textsuperscript{3}},
\\
\\
 \textsuperscript{1}Skoltech,
 \textsuperscript{2}HSE University,
 \textsuperscript{3}University of Hamburg,
 \textsuperscript{4}AIRI
\\
 \small{
   \textbf{Correspondence:} \href{mailto:vvmoskvoretskii@gmail.com}{vvmoskvoretskii@gmail.com}
 }
}
\begin{document}
\maketitle
\begin{abstract}
This paper explores the feasibility of using text-to-image models in a zero-shot setup to generate images for taxonomy concepts. 
While text-based methods for taxonomy enrichment are well-established, the potential of the visual dimension remains unexplored. 
To address this, we propose a comprehensive benchmark for Taxonomy Image Generation that assesses models' abilities to understand taxonomy concepts and generate relevant, high-quality images.
The benchmark includes common-sense and randomly sampled WordNet concepts, alongside the LLM generated predictions. 
The 12 models are evaluated using 9 novel taxonomy-related text-to-image metrics and human feedback. Moreover, we pioneer the use of pairwise evaluation with GPT-4 feedback for image generation.
Experimental results show that the ranking of models differs significantly from standard T2I tasks. \textit{Playground-v2} and \textit{FLUX} consistently outperform across metrics and subsets and the retrieval-based approach performs poorly. These findings highlight the potential for automating the curation of structured data resources.
\end{abstract}

\section{Introduction}

In recent years, Large Language Models (LLMs) and Visual Language Models (VLMs) have demonstrated remarkable quality across a wide range of single- and cross-domain tasks \cite{esfandiarpoor2024if,esfandiarpoor2023follow,NEURIPS2023_1d5b9233,jiang2024surveylargelanguagemodels}. Their capabilities also expand to the tasks traditionally dominated by human input, such as annotation and data collection \cite{tan2024largelanguagemodelsdata}. At the very same time, the urge for manually created datasets and databases still remains popular, as more accurate and reliable \cite{zhou2023limaalignment}, even though they are time-consuming and expensive to be kept up-to-date. 

In this paper, we focus on taxonomies --- lexical databases that organize words into a hierarchical structure of ''IS-A'' relationships. 
WordNet \cite{miller1998wordnet} is the most popular taxonomy for English, forming the graph backbone for many downstream tasks \cite{mao-etal-2018-word,lenz-2023-case,fedorova-etal-2024-definition}. In addition to textual data, taxonomies also extend to visual sources, e.g. ImageNet \cite{deng2009imagenet}.
ImageNet is built upon the WordNet taxonomy by associating concepts or ``synsets'' (sets of synonyms, aka lemmas) with thousands of manually curated images. However, it covers a very small portion of WordNet taxonomy (5,247 out of 80,000 synsets in total, 6.5\%).

\begin{figure}[t!]
    \centering
    \includegraphics[width=\linewidth]{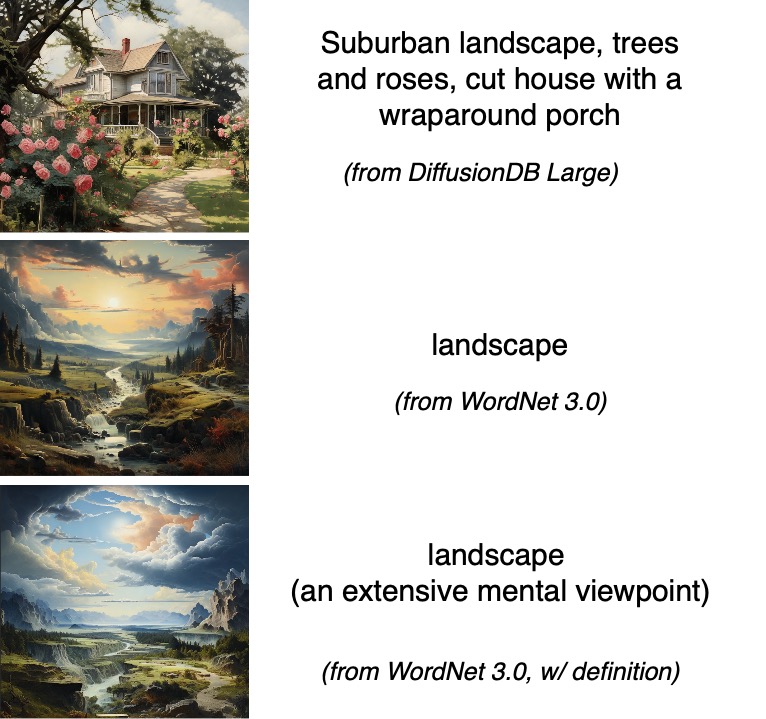}
    \caption{Comparison of generations of the \textit{Playground} model for the input prompt from the DiffusionDB dataset and available inputs from the WordNet-3.0. It can be seen, that the input from the TTI dataset is more detailed and the inner model representation could be misguiding even when the difinition is given.}
    \label{fig:example}
\end{figure}

\begin{figure*}[t]
    \centering
    \parbox[b]{0.24\linewidth}{
        \centering
        \includegraphics[width=\linewidth]{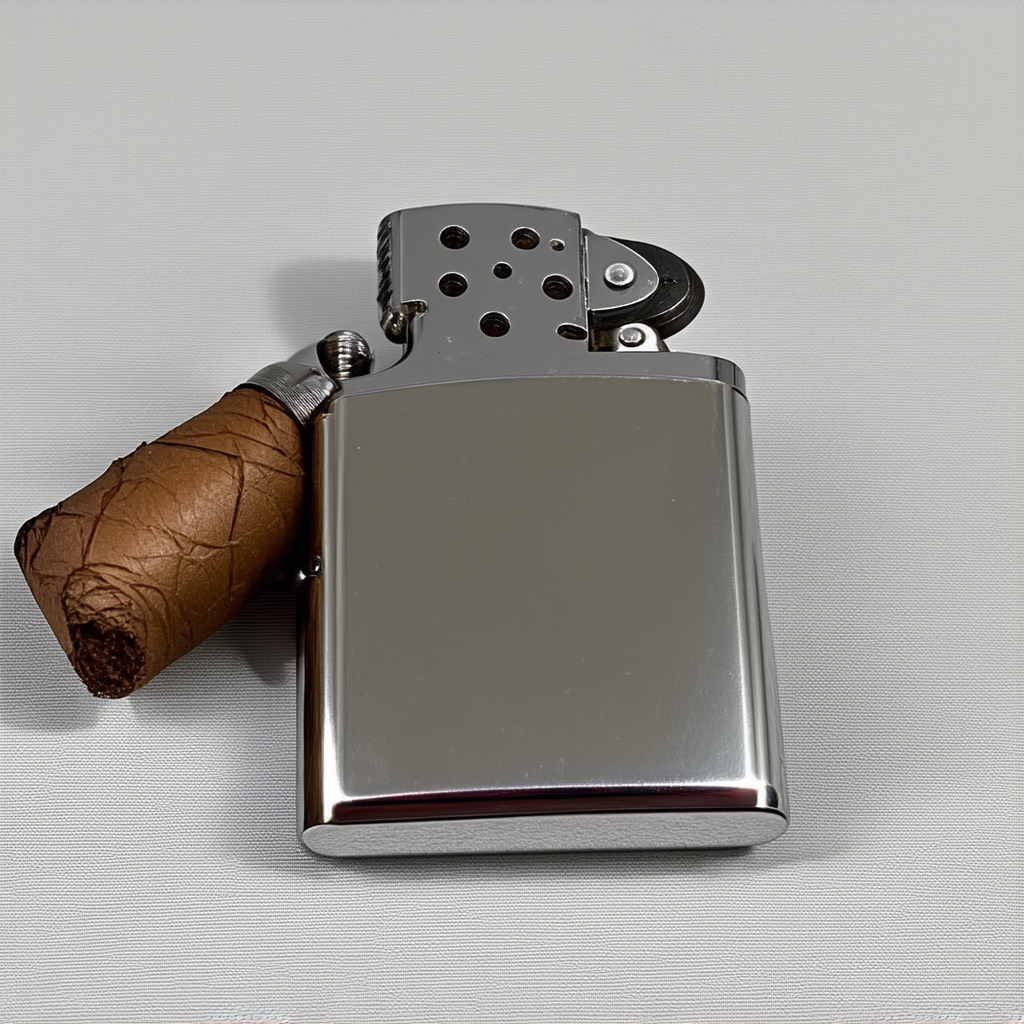}
        \\ (a) Kandinsky 3
    } \hfill
    \parbox[b]{0.24\linewidth}{
        \centering
        \includegraphics[width=\linewidth]{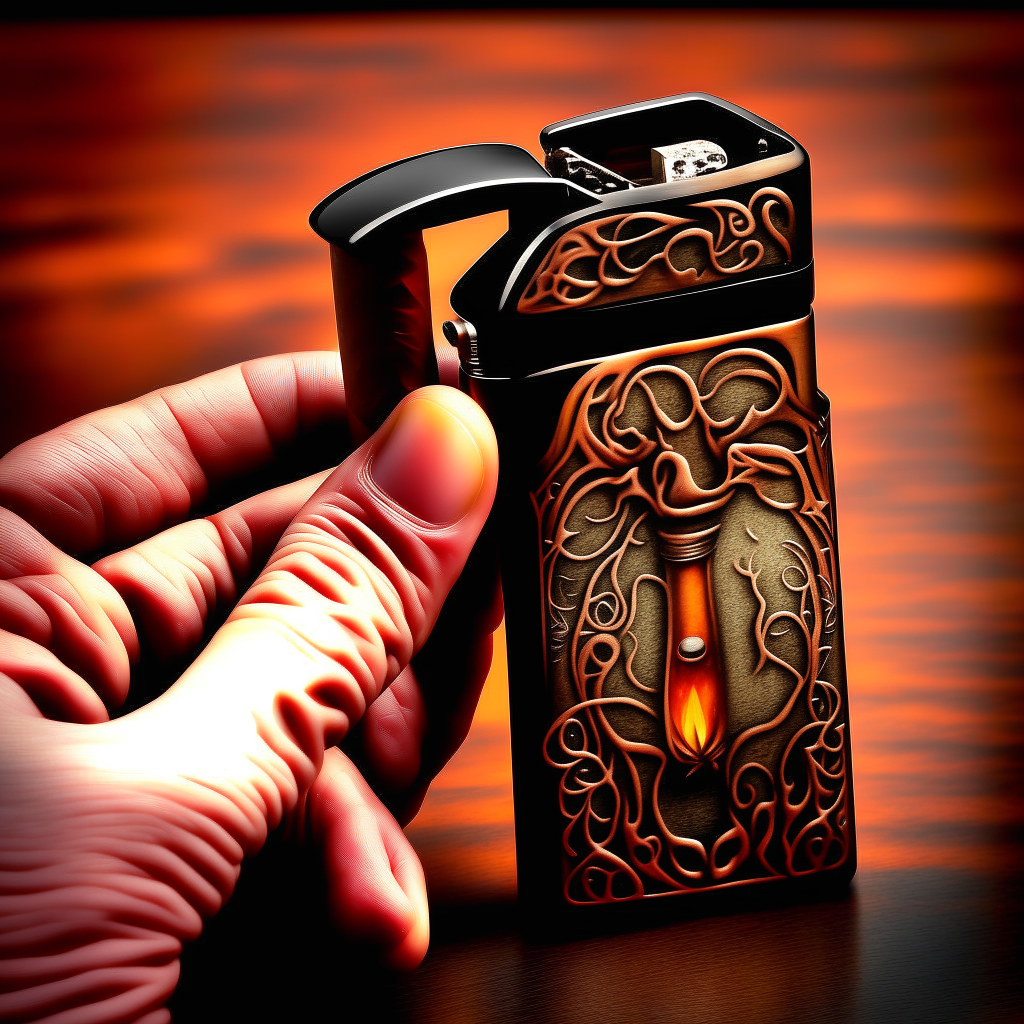}
        \\ (b) SD3
    } \hfill
    \parbox[b]{0.24\linewidth}{
        \centering
        \includegraphics[width=\linewidth]{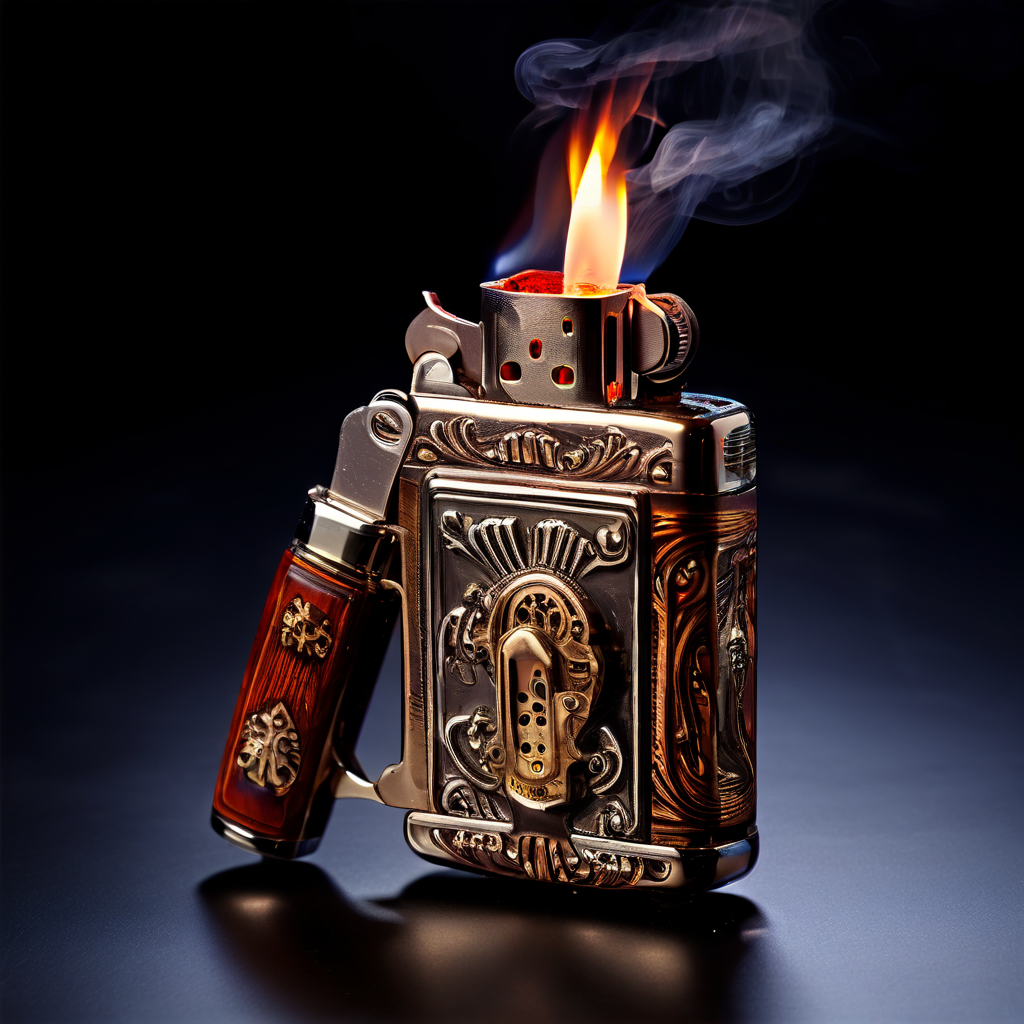}
        \\ (c) Playground
    } \hfill
    \parbox[b]{0.24\linewidth}{
        \centering
        \includegraphics[width=\linewidth]{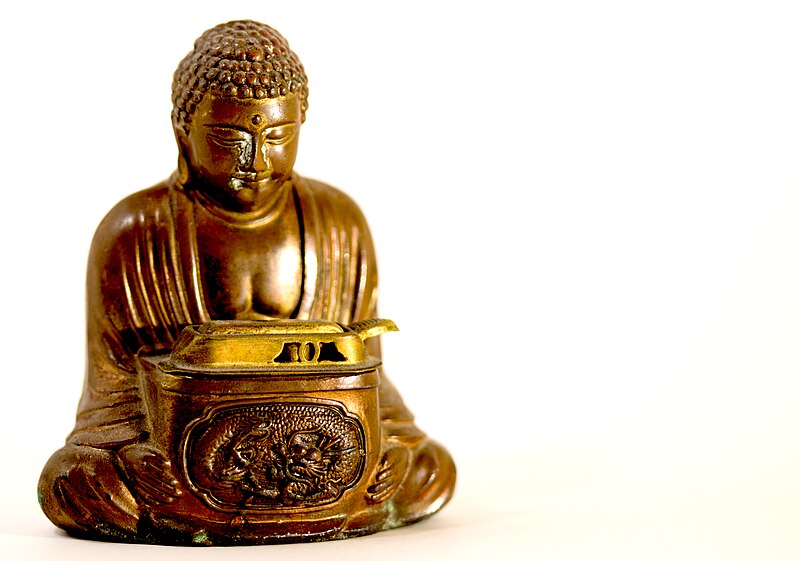}
        \\ (d) Retrieval
    }
    \caption{The example of a generation and retrieval results for \textit{cigar lighter}. As can be observed, the generation approach is significantly superior to the retrieval approach, as the retrieved image is quite unconventional.}
    \label{fig:four_images}
    
\end{figure*}

From the visual perspective, Text-to-Image models are widely used for the visualizations \cite{ng2023dreamcreature,sha2023defakedetectionattributionfake}, but only occasionally for taxonomies \cite{DBLP:conf/aaai/PatelGBY24}. Therefore, there is limited knowledge about how well text-to-image models are capable of visualizing concepts of different level of abstraction in comparison to humans \cite{DBLP:conf/aaai/LiaoCFD0WH024}. 
Image generation for taxonomies could be quite specific and require additional research: 
Figure \ref{fig:example} highlights the key differences in prompt usage for the DiffusionDB dataset \cite{wang-etal-2023-diffusiondb} and WordNet-3.0.
Moreover, the output taxonomy-linked depictions should aim succinctly portraying the synset's core idea and/or sometimes revealing insights about the concept that are challenging to convey textually.

Therefore, in this paper, we address this gap by investigating the use of automated methods for updating taxonomies in the image dimension (depicting). 
Specifically, we develop an evaluation benchmark comprising 9 metrics for Taxonomy Image generation using both human and automatic evaluation and a Bradley-Terry model ranking in line with recent top-rated evaluation methodology \cite{chiang2024chatbotarenaopenplatform,DBLP:conf/nips/ZhengC00WZL0LXZ23}.
Suprisingly, our task yields different rankings for models compared to those in text-to-image benchmarks \cite{jiang2024genaiarenaopenevaluation}, higlighting the task importance. We also uncover that modern Text-to-Image models outperform traditional retrieval-based methods in covering a broader range of concepts, highlighting their ability to better represent and visualize these previously underexplored areas.


The contributions of the paper are as follows:

\begin{itemize}
    \item We propose a benchmark comprising 9 metrics, 
    including several taxonomy-specific text-to-image metrics  grounded with theoretical justification drawing on KL Divergence and Mutual Information. 

    \item We test on the dataset specifically designed for Taxonomy Image Generation task, which presents challenges that were previously unaddressed in text-to-image research.
    \item We are the first to evaluate the performance of the 12 publicly available Text-to-Image models to generate images for WordNet concepts on the developed benchmark.
    \item We perform pairwise preference evaluation with GPT-4 for text-to-image generation and analyze its alignment with human preferences, biases, and overall performance.
    \item We publish the dataset of the images generated by the best Text-to-Image approach  from the benchmark that fully covers WordNet-3.0 extending the ImageNet dataset.
\end{itemize}

We publish all datasets, generated wordnet images and collected preferences.\footnote{\url{https://huggingface.co/collections/VityaVitalich/generated-image-wordnet-67d2c868ff1414ec2f8e0d3d}}

\section{Datasets}\label{sec:data}

This section provides an overview of the datasets used to evaluate the performance of text-to-image (TTI) models. It includes the Easy Concepts dataset, the TaxoLLaMA test set derived from WordNet, and the predictions generated by the TaxoLLaMA model. The aim of the datasets is to assess the models' sensitivity to easier/harder dataset and to existing/AI-generated entities.

\begin{table*}[ht!]
\centering
\resizebox{0.7\linewidth}{!}{
\begin{tabular}{lccl}
\toprule
\multicolumn{1}{c}{Model name} & Size & Model Family  & \multicolumn{1}{c}{Paper}                      \\
\midrule
SD-v1-5                        & 400M & \multirow{6}{*}{U-Net}                                                             & \citet{Rombach_2022_CVPR}  \\
SDXL                           & 6.6B &                     &                                       \citet{podell2023sdxlimprovinglatentdiffusion}         \\ 
SDXL Turbo & 3.5B    &  &                 \citet{sauer2023adversarialdiffusiondistillation}                                                                                                   \\
Kandinsky 3 & 12B & & \citet{arkhipkin2023kandinsky} \\

Playground-v2-aesthetic & 2.6B & & \citet{playground-v2}  \\

Openjourney & 123M  & & \citet{openjourney} \\
\midrule
IF &   4.3B   & \multirow{4}{*}{\begin{tabular}[c]{@{}c@{}}Diffusion \\ Transformers\end{tabular}} &               \citet{deepfloyd/if}                 \\
SD3 &  2B & &  \citet{esser2024scalingrectifiedflowtransformers} \\

PixArt-Sigma & 900M & & \citet{chen2024pixartsigmaweaktostrongtrainingdiffusion} \\

Hunyuan-DiT & 1.5B & &\citet{li2024hunyuandit} \\

FLUX & 12B &  & \citet{flux} \\
\midrule
Wikimedia 
Commons\footnote{\url{https://commons.wikimedia.org/}}                      & -    & Retrieval                &   {\begin{tabular}[l]{@{}l@{}}\citet{10.1145/3184558.3191646} \\ \citet{jones2022abstract}\end{tabular}}                                \\
\bottomrule
\end{tabular}}
    \caption{List of the approaches evaluated in the Taxonomy Image Generation benchmark.}
    \label{tab:models}
\end{table*}

\subsection{Easy Concept Dataset}

The Easy Concepts dataset from \citet{nikishina2023predicting} comprises 22 synsets selected by the authors as common-sense concepts (e.g. \textit{``coin.n.01, chromatic\_color.n.01, makeup.n.01, furniture.n.01''}, etc.). We extend this list by including their direct hyponyms (``children nodes''), following the methodology outlined in the original paper and based on the English WordNet \citep{miller1998wordnet}. The resulting dataset comprises 483 entities and represents a broader set of common knowledge entities.

\subsection{Random Split from WordNet}

To generate the second dataset, we use the algorithm from TaxoLLaMA \citep{moskvoretskii-etal-2024-large}. We randomly sample the nodes the following types of hierarchical relations between synsets:

\begin{itemize}
    \item \textbf{Hyponymy (Hypo)}: from a broader word (\textit{``working\_dog.n.01''}) to a more specific (\textit{``husky.n.01''}). Here we take a broader word for image generation.
    \item \textbf{Hypernymy (Hyper)}: from a more specific word (\textit{``capuccino.n.01''}) to a broader concept (\textit{``coffee.n.01''}). Here we take a more specific word for image generation.
    \item \textbf{Synset Mixing (Mix)}: nodes  created by mixing at least two nodes (e.g. \textit{``milk.n.01''} is  a \textit{``beverage.n.01''} and a \textit{``diary\_product.n.01''}). Here we take the node created by mixing for depiction.
\end{itemize}

The algorithm for sampling uses a $0.1$ probability for sampling Hyponymy, a $0.1$ probability for sampling Synset Mixing, and a $0.8$ probability for sampling Hypernymy. The dominance of Hypernymy is necessary because it is the most useful relation for training TaxoLLaMA \cite{moskvoretskii2024taxollama}. To mitigate this bias, the probabilities of occurrence in the test set differ between cases: for Hypernymy is set very low at $1 \times 10^{-5}$, higher for Hyponymy at $0.05$, and highest for Synset Mixing at $0.1$, as these cases are rare.

The resulting test set includes 1,202 nodes: 828 from Hypernymy relations, 170 from  Synset Mixing relations, and 204 from  Hyponymy relations.

\subsection{LLM Predictions Datasets}

As our final goal is depicting of the new concepts for taxonomy extension, we should also test TTI models with LLM predictions rather than ground-truth synsets.  Therefore, we finetune an LLM model on the Taxonomy Enrichment task to use its predictions and assess the sensitivity of text-to-image (TTI) models to AI-generated content.

The workflow comprises three steps: 
(i) exclude the Easy Concept dataset and random split from the overall WordNet data for LLM model training;
(ii) train the updated version of TaxoLLaMA with LLaMA-instruct-3.1 \cite{DBLP:journals/corr/abs-2407-21783};
(iii) solve the Taxonomy Enrichment task for the test data to generate concepts for vizualization.
When training the TaxoLLaMA-3.1 model, we follow the methodology outlined in \citet{moskvoretskii-etal-2024-large}.

This process resulted in 1,685 items. To match the original WordNet synsets, we generate definitions for every generated node with GPT4, described in Appendix~\ref{sec:definitions}.

\begin{figure*}[ht!]
    \centering
    \includegraphics[width=0.8\linewidth]{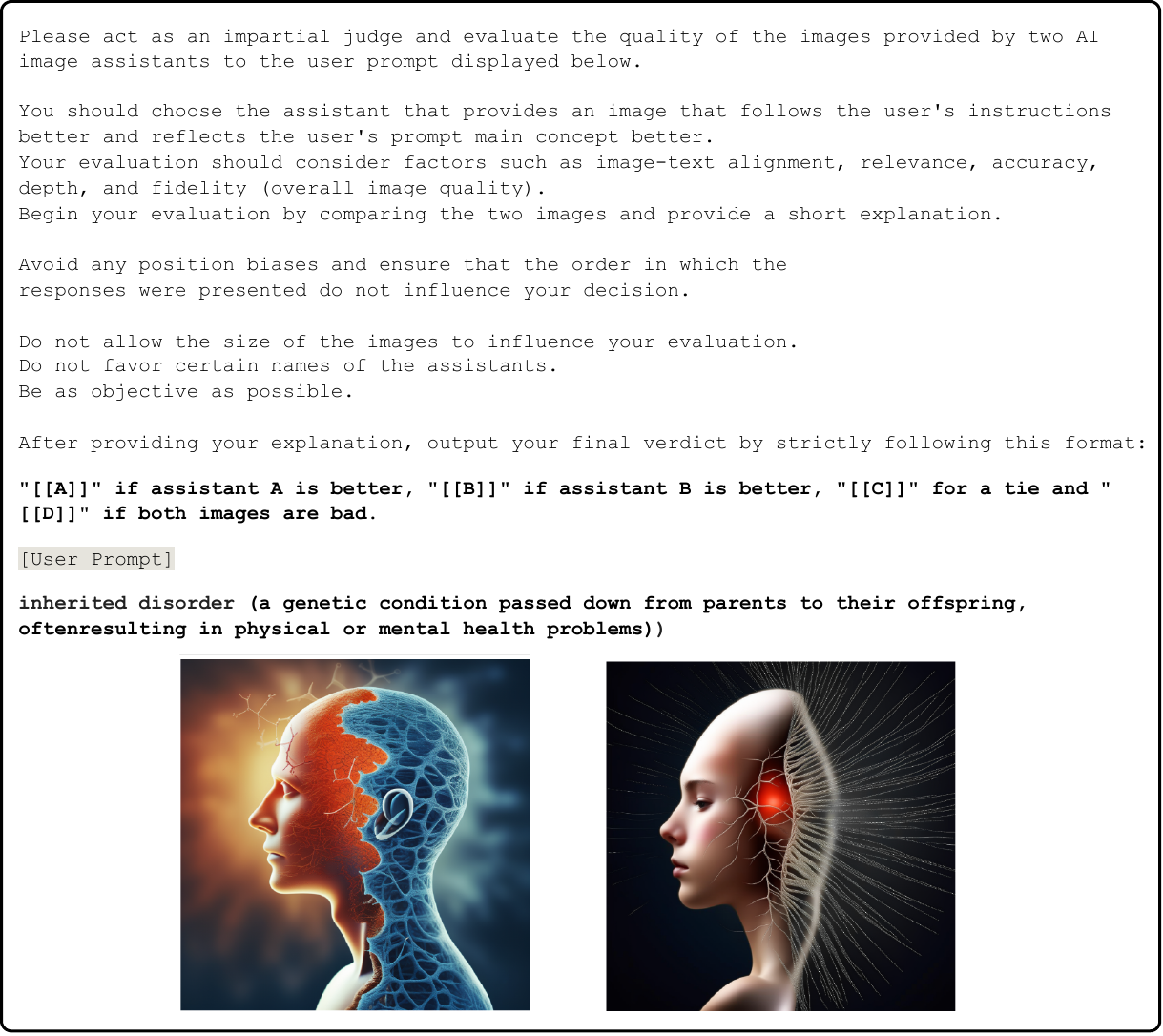}
    \caption{LLM prompt example for
evaluating text-to-image assistants.}
    \label{fig:gpt4_prompt}
\end{figure*}

\section{Models} \label{sec:models}

In this section, we describe ten TTI models and one Retrieval model (12 in total) and the details of image collection. Table \ref{tab:models} comprises the full list of the models compared in the evaluation benchmark, their description can be found in Appendix \ref{sec:appendix_models}.

An example of a prompt for image generation is demonstrated below, details are described in Appendix~\ref{sec:appendix_techniqal}. We perform experiments with two versions of the prompt: with and without definition. Figure \ref{fig:four_images} shows the example of a generation and retrieval results guided by the prompt:

\texttt{TEMPLATE: An image of <CONCEPT> (<DEFINITION>)}

\texttt{EXAMPLE: An image of cigar lighter (a lighter for cigars or cigarettes)}



\begin{table*}[t]
\centering
\resizebox{\textwidth}{!}{
\begin{tabular}{lccccccccc}
\toprule
\multirow{2}{*}{\textbf{Metric}} & \multirow{2}{*}{\textbf{Mean}} & \multicolumn{4}{c}{\textbf{Ground Truth}} & \multicolumn{4}{c}{\textbf{Predicted}} \\ 
\cmidrule(lr){3-6} \cmidrule(lr){7-10}
 & & \textbf{Easy} & \textbf{Hypo} & \textbf{Hyper} & \textbf{Mix} & \textbf{P-Easy} & \textbf{P-Hypo} & \textbf{P-Hyper} & \textbf{P-Mix} \\ \midrule

ELO GPT (w/ def) & \cellcolor[HTML]{9AFF99} Playground & \cellcolor[HTML]{9AFF99} Playground & \cellcolor[HTML]{9AFF99} Playground & \cellcolor[HTML]{9AFF99} Playground & \cellcolor[HTML]{9AFF99} Playground & \cellcolor[HTML]{DC8BF6} PixArt & \cellcolor[HTML]{DC8BF6} PixArt & \cellcolor[HTML]{9AFF99} Playground & \cellcolor[HTML]{FFCE93} SD3\textsuperscript{\textbf{*}}  \\

ELO GPT (w/o def) & \cellcolor[HTML]{9AFF99} Playground &  Kandinsky3 & \cellcolor[HTML]{DC8BF6} PixArt\textsuperscript{*} & \cellcolor[HTML]{9AFF99} Playground & \cellcolor[HTML]{73707E} FLUX & \cellcolor[HTML]{73707E} FLUX & \cellcolor[HTML]{9AFF99} Playground & \cellcolor[HTML]{9AFF99} Playground &  Kandinsky \\

ELO Human (w def) & \cellcolor[HTML]{73707E} FLUX & 
Playground / DeepFloyd &
Kandinsky3 &
\cellcolor[HTML]{73707E} FLUX & \cellcolor[HTML]{9AFF99} Playground & \cellcolor[HTML]{73707E} FLUX & \cellcolor[HTML]{73707E} FLUX & \cellcolor[HTML]{9AFF99} Playground & \cellcolor[HTML]{DC8BF6} PixArt \\

ELO Human (w/o def) & \cellcolor[HTML]{73707E} FLUX & \cellcolor[HTML]{FFCE93} SD3 & Kandinsky3 & \cellcolor[HTML]{9AFF99} Playground & \cellcolor[HTML]{DC8BF6} PixArt & \cellcolor[HTML]{73707E} FLUX & \cellcolor[HTML]{73707E} FLUX & \cellcolor[HTML]{CBE2FB} SDXL & \cellcolor[HTML]{CBE2FB} SDXL \\

Reward Model (w/ def) & \cellcolor[HTML]{9AFF99} Playground & \cellcolor[HTML]{9AFF99} Playground & \cellcolor[HTML]{9AFF99} Playground & \cellcolor[HTML]{9AFF99} Playground & \cellcolor[HTML]{9AFF99} Playground & \cellcolor[HTML]{9AFF99} Playground & \cellcolor[HTML]{9AFF99} Playground & \cellcolor[HTML]{9AFF99} Playground & \cellcolor[HTML]{9AFF99} Playground \\

Reward Model (w/o def) & \cellcolor[HTML]{9AFF99} Playground & \cellcolor[HTML]{9AFF99} Playground & \cellcolor[HTML]{9AFF99} Playground & \cellcolor[HTML]{9AFF99} Playground & \cellcolor[HTML]{9AFF99} Playground & \cellcolor[HTML]{9AFF99} Playground & \cellcolor[HTML]{9AFF99} Playground & \cellcolor[HTML]{9AFF99} Playground & \cellcolor[HTML]{9AFF99} Playground   \\

Lemma Similarity & 
\cellcolor[HTML]{CBCEFB} SDXL-turbo & \cellcolor[HTML]{CBCEFB} SDXL-turbo & \cellcolor[HTML]{CBCEFB} SDXL-turbo & \cellcolor[HTML]{CBCEFB} SDXL-turbo & \cellcolor[HTML]{CBCEFB} SDXL-turbo & \cellcolor[HTML]{CBCEFB} SDXL-turbo & \cellcolor[HTML]{CBCEFB} SDXL-turbo & \cellcolor[HTML]{CBCEFB} SDXL-turbo & \cellcolor[HTML]{CBCEFB} SDXL-turbo   \\

Hypernym Similarity & 
\cellcolor[HTML]{CBCEFB} SDXL-turbo & \cellcolor[HTML]{73707E} FLUX &
FLUX / SDXL-turbo &
\cellcolor[HTML]{CBCEFB} SDXL-turbo &
FLUX  / SDXL-turbo &
\cellcolor[HTML]{CBCEFB} SDXL-turbo & \cellcolor[HTML]{CBCEFB} SDXL-turbo & \cellcolor[HTML]{FFFC9E} Draw & \cellcolor[HTML]{FFFC9E}  Draw   \\

Cohyponyms Similarity & 
\cellcolor[HTML]{CBCEFB} SDXL-turbo & \cellcolor[HTML]{73707E} FLUX & \cellcolor[HTML]{CBCEFB} SDXL-turbo & \cellcolor[HTML]{CBCEFB} SDXL-turbo & \cellcolor[HTML]{CBCEFB} SDXL-turbo & \cellcolor[HTML]{CBCEFB} SDXL-turbo & \cellcolor[HTML]{CBCEFB} SDXL-turbo & \cellcolor[HTML]{CBCEFB} SDXL-turbo & SDXL-turbo / SDXL  \\

Specificity & 
\cellcolor[HTML]{ECD1AC} SD1.5 &
{SD1.5 / Playground} &
\cellcolor[HTML]{ECD1AC} SD1.5 &
SD1.5 / Playground &
\cellcolor[HTML]{FFFC9E} Draw & \cellcolor[HTML]{FFFC9E} Draw & \cellcolor[HTML]{CBCEFB} SDXL-turbo / SDXL 
& \cellcolor[HTML]{CBCEFB} SDXL-turbo 
& \cellcolor[HTML]{CBCEFB} SDXL-turbo   \\

FID &
\cellcolor[HTML]{ECD1AC} SD1.5 & \cellcolor[HTML]{73707E} FLUX &
\cellcolor[HTML]{73707E} FLUX &
\cellcolor[HTML]{73707E} FLUX & \cellcolor[HTML]{FB79C4}  HDiT & \cellcolor[HTML]{73707E} FLUX & \cellcolor[HTML]{73707E} FLUX & \cellcolor[HTML]{73707E} FLUX & \cellcolor[HTML]{96FFFB} DeepFloyd    \\

IS & \cellcolor[HTML]{FFCE93}  SD3 & \cellcolor[HTML]{DC8BF6}  PixArt & \cellcolor[HTML]{9AFF99} Playground &\cellcolor[HTML]{9AFF99} Playground & \cellcolor[HTML]{FFCE93}  SD3 & \cellcolor[HTML]{FFCE93} SD3 & \cellcolor[HTML]{73707E} FLUX & FLUX / Retrieval & \cellcolor[HTML]{9AFF99} Playground \\

\bottomrule
\end{tabular}
}
\caption{Summary of the Top-1 model for each metric and subset. Each cell shows the best-rated model. If two models tie, both are listed with a slash; if more than two tie, "Draw" is written, indicating insufficient specificity. Results marked with \textbf{*} have negligible differences within the confidence interval. Subsets and models are described in Sections \ref{sec:data} and \ref{sec:models}.
}
\label{table:main_winners}
\vspace{-0.5cm}
\end{table*}

\section{Evaluation} \label{sec:metrics}

In this section, we describe the evaluation process and metrics.

Our evaluation consists of 9 metrics that we assess to provide a comprehensive evaluation using the latest methods. 
To formally define our metrics, let \( V \) be a finite set of concepts \( v \), \( A(v) \subseteq V \) the set of hypernyms for \( v \), and \( N(v) \subseteq V \) the set of cohyponyms for \( v \). Let \( X^j \) be the set of all possible images \( x^j \) in a finite model space \( j \in J \) and \( |J| = 12 \) in our case. We define a mapping \( g^j: V \to X^j \) for each model \( j \), which assigns an image to each concept \( v \).


\subsection{Preferences Metrics} 

In this section, we describe metrics based on pairwise preferences or those learned to mimic them.

\paragraph{ELO Scores} We evaluate the ELO scores of the model by first assigning pairwise preferences, similar to the modern evaluation of text models \cite{chiang2024chatbotarenaopenplatform}. Each object $v$ is assigned two uniformly sampled random models 
$A, B \sim \mathbbm{U}[J]$
, and their outputs $(x^{A}, x^{B})$ engage in a battle.
Then, either a human assessor or GPT-4 serves as a function to assign a win to model A (0) or model B (1), represented as $f(v, x^{A}, x^{B}) \in \{0, 1\}$. Ties are omitted in both the notation and the BT model. To compute ELO scores, the likelihood is maximized with respect to the BT coefficients for each model, which correspond to their ELO scores. More details of the approach can be found in the Chatbot Arena paper \cite{chiang2024chatbotarenaopenplatform}.

Following this methodology, we calculate the ELO score based on the Bradley-Terry (BT) model with bootstrapping to build 95\% confidence intervals. The Bradley-Terry model \cite{bradley1952rank} is a probabilistic framework used to predict the outcome of pairwise comparisons between items or entities. It assigns a latent strength parameter $\pi$ to each item $i$ and the probability that item $i$ is preferred over item $j$ is given by: $P(i > j) = \frac{\pi_i}{\pi_i + \pi_j}$.
Here, $\pi_i, \pi_j > 0$ represent the strengths of items $i$ and $j$, respectively. The parameters are typically estimated from observed comparison data using maximum likelihood estimation.  We also adopt a labeling technique that includes the ``Tie'' and ``Both Bad'' categories, indicating cases where the models are equally good or both produce poor outputs. 
We modify the prompt from previous studies evaluating text assistants \cite{NEURIPS2023_91f18a12} for images, as presented in Figure~\ref{fig:gpt4_prompt} (also see prompt \ref{fig:fullprompt} in Appendix \ref{sec:appendix_prompts} for more details).

We conduct the \textbf{Human ELO Evaluation} along with the \textbf{GPT-4 ELO Evaluation} on 3370 pair images from two different models ($\approx 600$ samples from each model). For Human ELO, we employ 4 assessors expert in computational linguistics, both male and female with at least bachelor degrees. The Spearman correlation between annotators is 0.8 ($p$-value $\leq0.05$) for the images generated with definitions. 
For the automatic calculation of the ELO score we use GPT-4, which is highly correlated with human evaluations \cite{NEURIPS2023_91f18a12} and has proven to be an effective image evaluator on its own \cite{cui2024exploring}, as well as a great pairwise preferences evaluator \cite{chen2024mllmasajudgeassessingmultimodalllmasajudge}.

\begin{figure*}[t]
  \centering
  \includegraphics[width=0.9\textwidth]{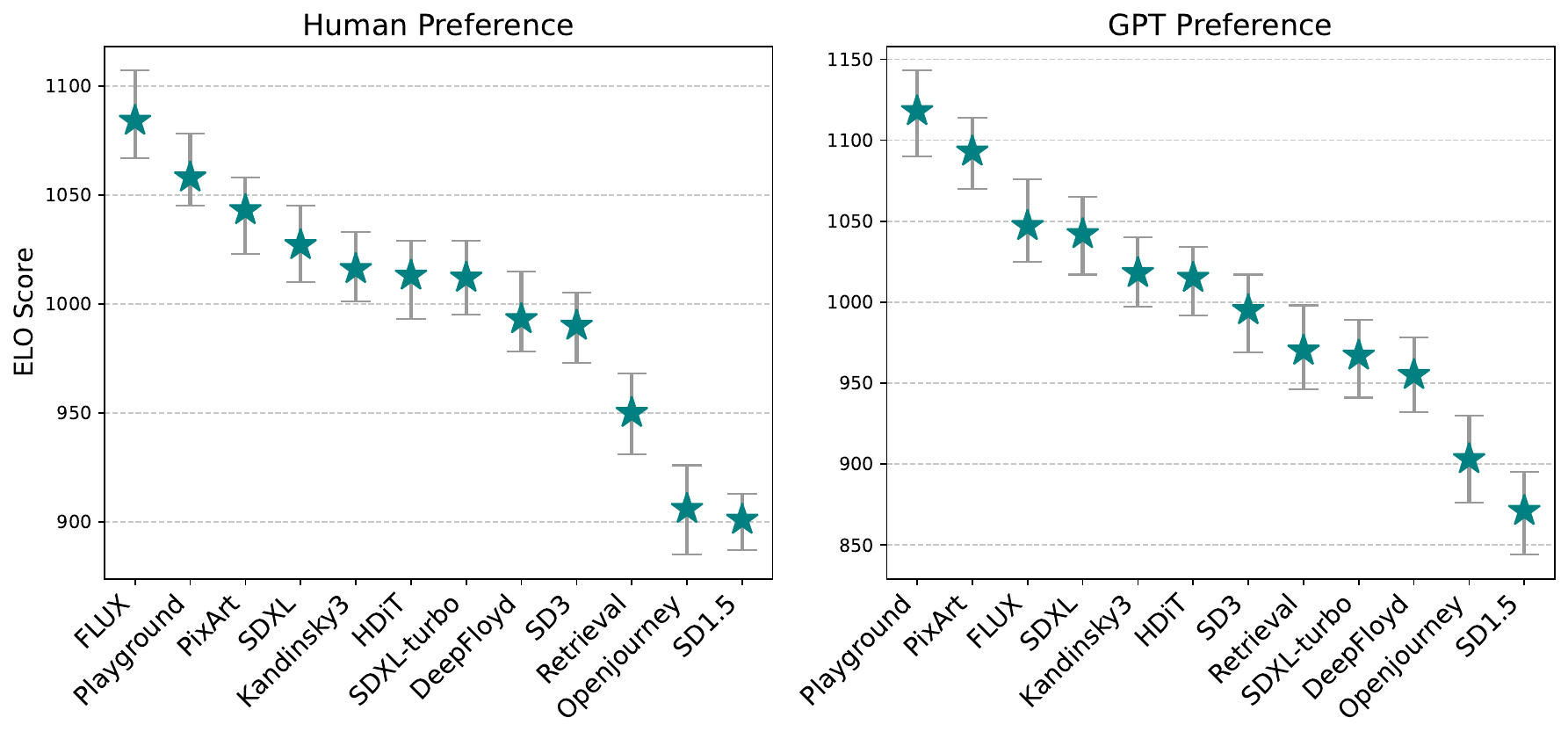}
  \caption{ELO scores for human and GPT4 preferences. The prompt includes the definition. Overall Spearman correlation of model rankings remains significantly high at $0.92$, $p$-value $\leq 0.05$.}
  \label{fig:elo_def}
  
\end{figure*}

\paragraph{Reward Model} We utilize the reward model from a recent study \cite{xu2024imagereward}, which is trained to align with human feedback preferences, focusing on text-image alignment and image fidelity. This score demonstrates a strong correlation with human annotations and outperformed the CLIP Score and BLIP Score. 
Formally, this metric is similar to ELO Scores, as the reward model was tuned using preferences and the BT model. However, the key difference is that each object $v$ is assigned a real-valued score and takes only one model image $x^j$ as input: $f_{reward}(v, x^j) \in \mathbb{R}$.

\subsection{Similarities}

In this section, we introduce novel similarity metrics that leverage taxonomy structure and are derived from KL Divergence and Mutual Information, with formal probabilistic definitions provided in Appendix~\ref{sec:metrics_def}.
They all have CLIP similarities under the hood, which have been already validated against human judgements \cite{hessel2021clipscore}. This ensures that our metrics, by extension, are aligned with human judgements.  

In practice, we approximate the probabilities using CLIP similarity~\cite{hessel2021clipscore}, as it is the most reliable measure of text-image co-occurrence.

Formally, CLIP model $C(\text{text or image}) \in \mathbb{R}^{hidden\_dim}$, we calculate the cosine similarity between the embedding of concept $v$ and the embedding of image $x^j$, resulting in the score $\operatorname{sim}(C(v), \, C(x^j))$

\paragraph{Lemma Similarity} reflects how well the image aligns with the lemma's textual description and is defined as 
\begin{equation}
\begin{split}
S_{\text{lemma}}(v, x) := P(X = x \mid v) \\ \approx \operatorname{sim}(C(v), \, C(x^j)).
\end{split}
\end{equation}

\paragraph{Hypernym Similarity} reflects how similar the image is on average to the lemma hypernyms and is defined as 

\begin{equation}
\begin{split}
    S_{\text{hyper}}(v,x) := P(X=x \mid A(v)) = \\
    \frac{1}{|A(v)|} \sum_{a \in A(v)} P(X=x \mid a) \approx \\
    \frac{1}{|A(v)|} \sum\limits_{a \in A(v)} \operatorname{sim}(C(a), \, C(x)).
\end{split}
\end{equation}


\paragraph{Cohyponym Similarity} measures how similar the image is, on average, to the cohyponyms and defined as 

\begin{equation}
\begin{split}
    S_{\text{cohyponym}}(v,x) := P(X=x \mid N(v)) = \\
    \frac{1}{|N(v)|} \sum_{n \in N(v)} P(X=x \mid n) \approx \\
    \frac{1}{|N(v)|} \sum\limits_{n \in N(v)} \operatorname{sim}(C(n), \, C(x)).
\end{split}
\end{equation}

This metric should be interpreted in conjunction with Specificity, as a high Cohyponym Score paired with low Specificity does not necessarily indicate good generation.

In the T2I domain, it is not feasible to define ``accuracy'' in the traditional sense. It is difficult to determine whether the reflection of a concept is entirely correct or completely incorrect. This challenge is inherent to the nature of T2I tasks and is shared by other studies in this domain. To address this limitation, we propose an analogous measure to assess how well the image reflects the concept. We use the probability of the concept with respect to the generated image, denoted as $P(X=x|v)$, which is derived from Lemma Similarity. To further refine this measure, we also consider how well the generated image fits into the surrounding conceptual space by evaluating Hypernym Similarity and Cohyponym Similarity. These additional metrics help capture how accurately the image represents the broader context of the concept.

\paragraph{Specificity} helps to ensure that the image accurately represents the lemma rather than its cohyponyms with the relation of the CLIP-Score to the Cohyponym CLIP-Score 
\[
\frac{S_{\text{hyper}}(v,x)}{S_{\text{cohyponym}}(v,x)}
\]

This metric generalizes the In-Subtree Probability, as proposed in  \cite{baryshnikov2023hypernymy}. The key advantage of our metric is that it does not depend on a specific ImageNet classifier and can be applied to any type of taxonomy node. 

\subsection{FID and IS} 

We evaluate the Inception Score (IS) \cite{salimans2016improved} and the Fréchet Inception Distance (FID) \cite{heusel2017gans}. IS is primarily used to assess diversity, while FID measures image quality relative to true image distributions. In our case, we calculate FID based on retrieved images, meaning that in this specific setting, FID reflects the ``realness'' or closeness to retrieval rather than the semantic correctness of an image.

\begin{figure*}[t]
  \centering
  \includegraphics[width=\textwidth]{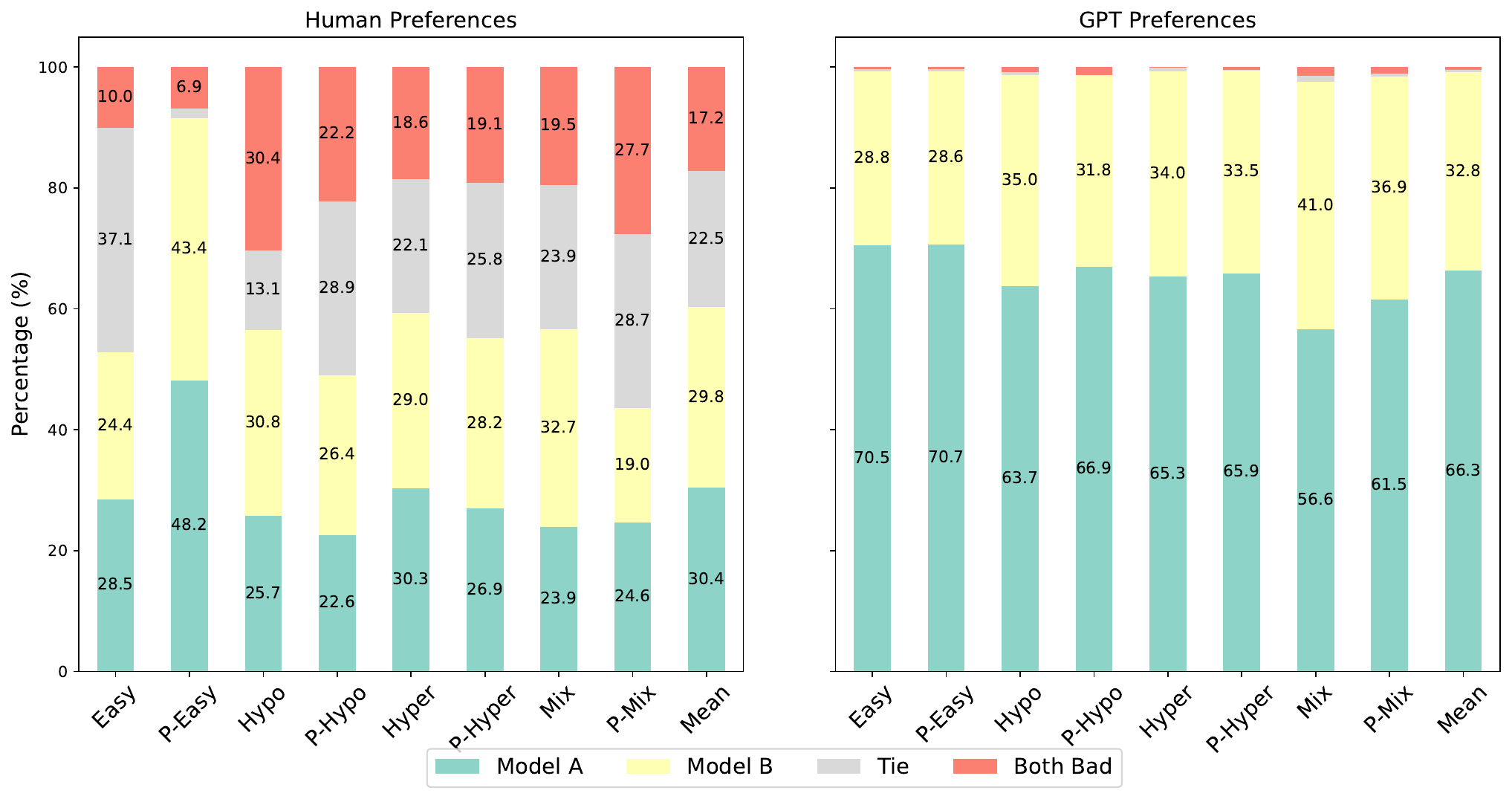}
  \caption{Distribution of preferences for Human and GPT across subsets in percentage. Prompt included definition.}
  \label{fig:elo_dist}
\end{figure*}

\section{Results \& Analysis}\label{sec:results}

The summary of the main results are presented in Table~\ref{table:main_winners} and in Appendix~\ref{sec:appendix_tables}: they show the best model for each subset and each metric. 
Additionally, we provide an error analysis in Appendix \ref{sec:appendix_vlm_mistakes} and discuss the strengths and weaknesses of the best-performing models.

\paragraph{ELO Scores} The preferences of human evaluators and GPT-4 resulted in the ELO Scores are shown in Figure~\ref{fig:elo_def}. FLUX and Playground rank the first and the second across both GPT-4 and human assessors, with PixArt securing the third place. While the other rankings are less consistent—likely due to the difficulty in distinguishing between middle-performing models—the overall Spearman correlation of model rankings remains significantly high at $0.88$, $p$-value $\leq 0.05$.

Ranking without definitions is presented in Figure~\ref{fig:elo_nodef} in Appendix \ref{sec:appendix_images}, where FLUX ranks first for the Human preferences and Playground for the GPT Preference. However, the confidence intervals for the GPT Preference suggest it is not a definitive winner, as it ranks similarly to PixArt. The correlation between human and GPT-4 rankings is $0.73$, $p\leq 0.05$, which, while lower, is still strong.

At the same time, we found no correlation between raw scores for individual battles. This issue stems from a strong bias toward the first option, as illustrated in Figure~\ref{fig:elo_dist} and the Confusion Matrix in Figure~\ref{fig:conf_matrix} in Appendix \ref{sec:appendix_images}, a bias not exhibited by humans. Most TTI models benefit from definitions in their input which exposes high human-GPT alignment, as shown in Figure~\ref{fig:def_profit} in Appendix~\ref{sec:definitions}.

\paragraph{Reward Model}
The results from the Reward Model, introduced in a previous study \cite{xu2024imagereward}, show that Playground is the most preferred model, followed by PixArt and FLUX, with no significant differences between the latter, as shown in Figure~\ref{fig:reward} in Appendix~\ref{sec:appendix_images}. Overall, the Reward Model demonstrates a high correlation with human evaluations ($0.79$) and a moderate correlation with GPT-4 ($0.59$). Playground is also the preferred model across all subsets, as illustrated in Figure~\ref{fig:reward_subset} in Appendix~\ref{sec:appendix_images}, while Figure~\ref{table:pval_reward} in Appendix~\ref{sec:appendix_tables} highlighting the statistical significance of these comparisons.

\paragraph{Similarities}   for lemmas, hypernyms, and cohyponyms consistently shows the dominance of SDXL-turbo across all subsets and FLUX for Easy Ground Truth subset. This result differs from AI preferences, possibly due to CLIP-Score focusing solely on text-image alignment without accounting for image quality. It is also noteworthy that SDXL-turbo ranks higher than SDXL, despite being a distilled version of the latter. The distillation process may have preserved more of the image-text alignment features while reducing overall image quality, as suggested in the original paper, while other models are not distilled or are specifically tuned to match user preferences.

\paragraph{Specificity}  shows no clear dominance, although the top models are SDXL-turbo, SD1.5, and Playground. 
SD1.5 ranks first in several subsets, though it performs poorly in terms of user preferences. 
Moreover, this result indicates that Playground's generations can be specific to the precise lemma, aligning both with preference and specificity.

\paragraph{FID}  results, presented in Table~\ref{tab:fid_is_def} and Table~\ref{tab:fid_subsets} in Appendix~\ref{sec:appendix_tables}, demonstrate that on average SD1.5 performs best, however FLUX dominates across nearly all subsets. We associate this performance with a stronger focus on reconstructing open-source crawled images, rather than aligning with human preferences and text-image alignment, however FLUX balancing to also appeal to human judgments. 
Notably, all models, except for SDXL-turbo, benefit from definitions, likely because the longer prompts may confuse the distilled SDXL-turbo model.

\paragraph{IS}  results in Table~\ref{tab:fid_is_def} and Table~\ref{tab:is_subset} in Appendix~\ref{sec:appendix_tables} indicate that SD3, Playground, and Retrieval rank first across different subsets, suggesting their generations are perceived as ``sharper'' and more ``distinct''. All versions of SDXL and SD3 do not benefit from the definitions, likely due to the specific characteristics of the SD family.

\paragraph{Overall}

Our results show that Playground and FLUX are among the top models across different metrics, both with and without definitions. While PixArt also demonstrates strong results, it is preferred by AI evaluations more than human preferences, indicating that the preference may be more AI-Judge specific.
However, the results are more heterogeneous for specificity, which measures how well the model reflects the concept itself and not its neighbors. Models from the SD family perform differently on different metrics and subsets, indicating that even when models trained with CLIP alignment may not guarantee specificity to the precise concept and the ability to reflect more detailed information of the node.

\section{Related Work}
In this section, we describe existing evaluation benchmarks for both texts and images and provide an overview of text-to-image generation models. We do not provide an overview on the existing taxonomy-related tasks and approaches and refer to \citet{zeng2024codetaxoenhancingtaxonomyexpansion} and \citet{moskvoretskii-etal-2024-large}. 

\subsection{Evaluation Benchmarks}

Popular benchmarks for language models include GLUE  \cite{wang2019gluemultitaskbenchmarkanalysis} and SuperGLUE \cite{sarlin2020supergluelearningfeaturematching}, MTEB \cite{muennighoff-etal-2023-mteb}, SQuAD  \cite{rajpurkar2016squad100000questionsmachine}, MT-Bench \cite{zheng2023judgingllmasajudgemtbenchchatbot} and others. For Text2Image Generation, there are benchmarks such as MS-COCO \cite{lin2015microsoftcococommonobjects}, Fashion-Gen \cite{rostamzadeh2018fashiongengenerativefashiondataset} or ConceptBed \cite{patel2024conceptbedevaluatingconceptlearning}. There are also platforms for interactive comparison of AI models, based on ELO-rating: LMSYS Chatbot Arena \cite{chiang2024chatbot} for LLM and GenAI Arena \cite{jiang2024genaiarenaopenevaluation} for comparing text-to-image models.
Moreover, due to latest AI's abilities, ``LLM-as-a-judge'' evaluation emerged: text or image generation outputs of different models are compared by another model, see  \cite{zheng2023judgingllmasajudgemtbenchchatbot, wei2024systematicevaluationllmasajudgellm, chen2024mllmasajudgeassessingmultimodalllmasajudge}.

\subsection{Text-to-Image Generation Models For Taxonomies}
Image generation has recently received significant attention in the field of machine learning. Previously, Generative Adversarial Networks (GANs) \citep{goodfellow2014generativeadversarialnetworks} and Variational Autoencoders (VAEs) \citep{kingma2022autoencodingvariationalbayes} were primarily used for this purpose. 
However, diffusion-based methods have now become the dominant approach and are widely used for the visualizations \cite{ng2023dreamcreature,sha2023defakedetectionattributionfake}, but only occasionally for taxonomies \cite{DBLP:conf/aaai/PatelGBY24}. 

To the best of our knowledge, the existing work on the evaluation of images for taxonomies comprises the paper of \citet{baryshnikov2023hypernymy}, which introduces In-Subtree Probability (ISP) and Subtree Coverage Score (SCS), which are revisited in our paper. Recently, \citet{DBLP:conf/aaai/LiaoCFD0WH024} introduced a novel task of text-to-image generation for
abstract concepts. The benchmark from \citet{DBLP:conf/aaai/PatelGBY24} addresses grounded quantitative evaluations of text conditioned
concept learners and the \citet{DBLP:conf/cvpr/0011WXWS22} also operates the notion of concepts for images when developping the concept forgetting and correction method.


\section{Conclusion}

We have proposed the Taxonomy Image Generation benchmark as a tool for the further evaluation of text-to-image models in taxonomies, as well as for generating images in existing and potentially automatically enriched taxonomies. It consists of 9 metrics and evaluates the ability of 12 open-source text-to-image models to generate images for taxonomy concepts. 
Our evaluation results show that Playground \cite{playground-v2} ranks first in all preference-based evaluations. 

\section*{Limitations}

\begin{itemize}
    \item Our evaluation focuses on open-source text-to-image models, as they are more convenient and cost-effective to use in any system than models relying on an API. Additionally, open-source models offer the flexibility for fine-tuning, which is not possible with closed-source models. However, it would be valuable to explore how closed-source models perform on this task, as our benchmark depends solely on the quality of the generated images.

    \item Preferences using GPT-4 were obtained with the use of Chain of Thought reasoning, following previous studies to optimize the prompt \cite{NEURIPS2023_91f18a12}. However, we did not utilize multiple generations with a majority vote to improve consistency, nor did we rename the models, which could help reduce positional bias. Additionally, we did not perform multiple runs with models alternating positions, as each model could appear in position A or B with equal probability. The Bradley-Terry model of preferences compensates for such inconsistencies and provides robust scoring, given a sufficient number of preference labels, as noted in a previous study \cite{NEURIPS2023_91f18a12}. Our assumption is further supported by the high correlation in the resulting rankings, even though the correlation between raw preferences is close to zero.

    \item Metrics based on CLIP-Score may be biased toward the CLIP model and lack specificity if CLIP is unfamiliar with or unspecific to the precise WordNet concept. Additionally, models could be fine-tuned to optimize for this particular metric. To mitigate this bias, we propose incorporating preferences from AI feedback and also employ preferences from human feedback to provide a more balanced and comprehensive evaluation.

    \item The Inception Score relies on the InceptionV3 model, which is specific to ImageNet1k. We included this metric as it is traditionally used to measure overall text-to-image performance. However, to address this potential bias, we introduced CLIP based metrics as well as generalization of the ISP metric from a previous study \cite{baryshnikov2023hypernymy}, which also relies on an ImageNet1k classifier, but we supplemented it with the use of CLIP to provide a broader evaluation.

\end{itemize}

\section*{Ethical Considerations}

In our benchmark, we utilized several text-to-image models as well as text assistants for evaluating and generating new concepts. While these models are highly effective for creating creative and novel content, both textual and visual, they may exhibit various forms of bias. The models tested in this benchmark have the potential to generate malicious or offensive content. However, we are not the creators of these models and focus solely on evaluating their capabilities; therefore, the responsibility for any unfair or malicious usage lies with the users and the models' authors. 

Additionally, our fine-tuning of the LLaMA 3.1 model was conducted using safe prompts sourced from WordNet. Although applying quantization and further fine-tuning could potentially reduce the model's safety, we did not observe any unsafe or offensive behavior during our testing.

    
    

\bibliography{custom}

\begin{thebibliography}{61}
\providecommand{\natexlab}[1]{#1}

\bibitem[{Arkhipkin et~al.(2023)Arkhipkin, Filatov, Vasilev, Maltseva, Azizov, Pavlov, Agafonova, Kuznetsov, and Dimitrov}]{arkhipkin2023kandinsky}
Vladimir Arkhipkin, Andrei Filatov, Viacheslav Vasilev, Anastasia Maltseva, Said Azizov, Igor Pavlov, Julia Agafonova, Andrey Kuznetsov, and Denis Dimitrov. 2023.
\newblock Kandinsky 3.0 technical report.
\newblock \emph{arXiv preprint arXiv:2312.03511}.

\bibitem[{Baryshnikov and Ryabinin(2023)}]{baryshnikov2023hypernymy}
Anton Baryshnikov and Max Ryabinin. 2023.
\newblock Hypernymy understanding evaluation of text-to-image models via wordnet hierarchy.
\newblock \emph{arXiv preprint arXiv:2310.09247}.

\bibitem[{BlackForestLabs(2024)}]{flux}
BlackForestLabs. 2024.
\newblock \href {[https://blackforestlabs.ai/announcements/](https://blackforestlabs.ai/announcements/)} {Announcements}.

\bibitem[{Bradley and Terry(1952)}]{bradley1952rank}
Ralph~Allan Bradley and Milton~E Terry. 1952.
\newblock \href {https://doi.org/10.1093/biomet/39.3-4.324} {Rank analysis of incomplete block designs: I. the method of paired comparisons}.
\newblock \emph{Biometrika}, 39(3/4):324--345.

\bibitem[{Chen et~al.(2024{\natexlab{a}})Chen, Chen, Zhang, Liu, Wang, Zhou, Zhang, Wan, Zhou, and Sun}]{chen2024mllmasajudgeassessingmultimodalllmasajudge}
Dongping Chen, Ruoxi Chen, Shilin Zhang, Yinuo Liu, Yaochen Wang, Huichi Zhou, Qihui Zhang, Yao Wan, Pan Zhou, and Lichao Sun. 2024{\natexlab{a}}.
\newblock \href {https://arxiv.org/abs/2402.04788} {Mllm-as-a-judge: Assessing multimodal llm-as-a-judge with vision-language benchmark}.
\newblock \emph{Preprint}, arXiv:2402.04788.

\bibitem[{Chen et~al.(2024{\natexlab{b}})Chen, Ge, Xie, Wu, Yao, Ren, Wang, Luo, Lu, and Li}]{chen2024pixartsigmaweaktostrongtrainingdiffusion}
Junsong Chen, Chongjian Ge, Enze Xie, Yue Wu, Lewei Yao, Xiaozhe Ren, Zhongdao Wang, Ping Luo, Huchuan Lu, and Zhenguo Li. 2024{\natexlab{b}}.
\newblock \href {https://arxiv.org/abs/2403.04692} {Pixart-$\sigma$: Weak-to-strong training of diffusion transformer for 4k text-to-image generation}.
\newblock \emph{Preprint}, arXiv:2403.04692.

\bibitem[{Chiang et~al.(2024{\natexlab{a}})Chiang, Zheng, Sheng, Angelopoulos, Li, Li, Zhang, Zhu, Jordan, Gonzalez, and Stoica}]{chiang2024chatbotarenaopenplatform}
Wei-Lin Chiang, Lianmin Zheng, Ying Sheng, Anastasios~Nikolas Angelopoulos, Tianle Li, Dacheng Li, Hao Zhang, Banghua Zhu, Michael Jordan, Joseph~E. Gonzalez, and Ion Stoica. 2024{\natexlab{a}}.
\newblock \href {https://arxiv.org/abs/2403.04132} {Chatbot arena: An open platform for evaluating llms by human preference}.
\newblock \emph{Preprint}, arXiv:2403.04132.

\bibitem[{Chiang et~al.(2024{\natexlab{b}})Chiang, Zheng, Sheng, Angelopoulos, Li, Li, Zhang, Zhu, Jordan, Gonzalez, and Stoica}]{chiang2024chatbot}
Wei-Lin Chiang, Lianmin Zheng, Ying Sheng, Anastasios~Nikolas Angelopoulos, Tianle Li, Dacheng Li, Hao Zhang, Banghua Zhu, Michael Jordan, Joseph~E. Gonzalez, and Ion Stoica. 2024{\natexlab{b}}.
\newblock \href {https://arxiv.org/abs/2403.04132} {Chatbot arena: An open platform for evaluating llms by human preference}.
\newblock \emph{Preprint}, arXiv:2403.04132.

\bibitem[{Cui et~al.(2024)Cui, Sun, Zhou, and Li}]{cui2024exploring}
Xiao Cui, Qi~Sun, Wengang Zhou, and Houqiang Li. 2024.
\newblock \href {https://openreview.net/forum?id=xmQoodG82a} {Exploring {GPT}-4 vision for text-to-image synthesis evaluation}.
\newblock In \emph{The Second Tiny Papers Track at ICLR 2024}.

\bibitem[{DeepFloyd.Lab(2023)}]{deepfloyd/if}
DeepFloyd.Lab. 2023.
\newblock \href {https://github.com/deep-floyd/IF} {Deepfloyd if}.

\bibitem[{Deng et~al.(2009)Deng, Dong, Socher, Li, Li, and Fei-Fei}]{deng2009imagenet}
Jia Deng, Wei Dong, Richard Socher, Li-Jia Li, Kai Li, and Li~Fei-Fei. 2009.
\newblock Imagenet: A large-scale hierarchical image database.
\newblock In \emph{2009 IEEE conference on computer vision and pattern recognition}, pages 248--255. Ieee.

\bibitem[{Du et~al.(2023)Du, Yang, Dai, Dai, Nachum, Tenenbaum, Schuurmans, and Abbeel}]{NEURIPS2023_1d5b9233}
Yilun Du, Sherry Yang, Bo~Dai, Hanjun Dai, Ofir Nachum, Josh Tenenbaum, Dale Schuurmans, and Pieter Abbeel. 2023.
\newblock \href {https://proceedings.neurips.cc/paper_files/paper/2023/file/1d5b9233ad716a43be5c0d3023cb82d0-Paper-Conference.pdf} {Learning universal policies via text-guided video generation}.
\newblock In \emph{Advances in Neural Information Processing Systems}, volume~36, pages 9156--9172. Curran Associates, Inc.

\bibitem[{Dubey et~al.(2024)Dubey, Jauhri, Pandey, Kadian, Al{-}Dahle, Letman, Mathur, Schelten, Yang, Fan, Goyal, Hartshorn, Yang, Mitra, Sravankumar, Korenev, Hinsvark, Rao, Zhang, Rodriguez, Gregerson, Spataru, Rozi{\`{e}}re, Biron, Tang, Chern, and et~al.}]{DBLP:journals/corr/abs-2407-21783}
Abhimanyu Dubey, Abhinav Jauhri, Abhinav Pandey, Abhishek Kadian, Ahmad Al{-}Dahle, Aiesha Letman, Akhil Mathur, Alan Schelten, Amy Yang, Angela Fan, Anirudh Goyal, Anthony Hartshorn, Aobo Yang, Archi Mitra, Archie Sravankumar, Artem Korenev, Arthur Hinsvark, Arun Rao, Aston Zhang, Aur{\'{e}}lien Rodriguez, Austen Gregerson, Ava Spataru, Baptiste Rozi{\`{e}}re, Bethany Biron, Binh Tang, Bobbie Chern, and et~al. 2024.
\newblock \href {https://doi.org/10.48550/ARXIV.2407.21783} {The llama 3 herd of models}.
\newblock \emph{CoRR}, abs/2407.21783.

\bibitem[{Esfandiarpoor and Bach(2023)}]{esfandiarpoor2023follow}
Reza Esfandiarpoor and Stephen~H Bach. 2023.
\newblock Follow-up differential descriptions: Language models resolve ambiguities for image classification.
\newblock \emph{arXiv preprint arXiv:2311.07593}.

\bibitem[{Esfandiarpoor et~al.(2024)Esfandiarpoor, Menghini, and Bach}]{esfandiarpoor2024if}
Reza Esfandiarpoor, Cristina Menghini, and Stephen~H Bach. 2024.
\newblock If clip could talk: Understanding vision-language model representations through their preferred concept descriptions.
\newblock \emph{arXiv preprint arXiv:2403.16442}.

\bibitem[{Esser et~al.(2024)Esser, Kulal, Blattmann, Entezari, Müller, Saini, Levi, Lorenz, Sauer, Boesel, Podell, Dockhorn, English, Lacey, Goodwin, Marek, and Rombach}]{esser2024scalingrectifiedflowtransformers}
Patrick Esser, Sumith Kulal, Andreas Blattmann, Rahim Entezari, Jonas Müller, Harry Saini, Yam Levi, Dominik Lorenz, Axel Sauer, Frederic Boesel, Dustin Podell, Tim Dockhorn, Zion English, Kyle Lacey, Alex Goodwin, Yannik Marek, and Robin Rombach. 2024.
\newblock \href {https://arxiv.org/abs/2403.03206} {Scaling rectified flow transformers for high-resolution image synthesis}.
\newblock \emph{Preprint}, arXiv:2403.03206.

\bibitem[{Fedorova et~al.(2024)Fedorova, Kutuzov, and Scherrer}]{fedorova-etal-2024-definition}
Mariia Fedorova, Andrey Kutuzov, and Yves Scherrer. 2024.
\newblock \href {https://doi.org/10.18653/v1/2024.findings-acl.339} {Definition generation for lexical semantic change detection}.
\newblock In \emph{Findings of the Association for Computational Linguistics: ACL 2024}, pages 5712--5724, Bangkok, Thailand. Association for Computational Linguistics.

\bibitem[{Ferrada et~al.(2018)Ferrada, Bravo, Bustos, and Hogan}]{10.1145/3184558.3191646}
Sebasti\'{a}n Ferrada, Nicol\'{a}s Bravo, Benjamin Bustos, and Aidan Hogan. 2018.
\newblock \href {https://doi.org/10.1145/3184558.3191646} {Querying wikimedia images using wikidata facts}.
\newblock In \emph{Companion Proceedings of the The Web Conference 2018}, WWW '18, page 1815–1821, Republic and Canton of Geneva, CHE. International World Wide Web Conferences Steering Committee.

\bibitem[{Goodfellow et~al.(2014)Goodfellow, Pouget-Abadie, Mirza, Xu, Warde-Farley, Ozair, Courville, and Bengio}]{goodfellow2014generativeadversarialnetworks}
Ian~J. Goodfellow, Jean Pouget-Abadie, Mehdi Mirza, Bing Xu, David Warde-Farley, Sherjil Ozair, Aaron Courville, and Yoshua Bengio. 2014.
\newblock \href {https://arxiv.org/abs/1406.2661} {Generative adversarial networks}.
\newblock \emph{Preprint}, arXiv:1406.2661.

\bibitem[{Hessel et~al.(2021)Hessel, Holtzman, Forbes, Bras, and Choi}]{hessel2021clipscore}
Jack Hessel, Ari Holtzman, Maxwell Forbes, Ronan~Le Bras, and Yejin Choi. 2021.
\newblock Clipscore: A reference-free evaluation metric for image captioning.
\newblock \emph{arXiv preprint arXiv:2104.08718}.

\bibitem[{Heusel et~al.(2017)Heusel, Ramsauer, Unterthiner, Nessler, and Hochreiter}]{heusel2017gans}
Martin Heusel, Hubert Ramsauer, Thomas Unterthiner, Bernhard Nessler, and Sepp Hochreiter. 2017.
\newblock Gans trained by a two time-scale update rule converge to a local nash equilibrium.
\newblock \emph{Advances in neural information processing systems}, 30.

\bibitem[{Jiang et~al.(2024{\natexlab{a}})Jiang, Ku, Li, Ni, Sun, Fan, and Chen}]{jiang2024genaiarenaopenevaluation}
Dongfu Jiang, Max Ku, Tianle Li, Yuansheng Ni, Shizhuo Sun, Rongqi Fan, and Wenhu Chen. 2024{\natexlab{a}}.
\newblock \href {https://arxiv.org/abs/2406.04485} {Genai arena: An open evaluation platform for generative models}.
\newblock \emph{Preprint}, arXiv:2406.04485.

\bibitem[{Jiang et~al.(2024{\natexlab{b}})Jiang, Wang, Shen, Kim, and Kim}]{jiang2024surveylargelanguagemodels}
Juyong Jiang, Fan Wang, Jiasi Shen, Sungju Kim, and Sunghun Kim. 2024{\natexlab{b}}.
\newblock \href {https://arxiv.org/abs/2406.00515} {A survey on large language models for code generation}.
\newblock \emph{Preprint}, arXiv:2406.00515.

\bibitem[{Jones and Oyen(2022)}]{jones2022abstract}
Shawn~M Jones and Diane Oyen. 2022.
\newblock Abstract images have different levels of retrievability per reverse image search engine.
\newblock In \emph{European Conference on Computer Vision}, pages 203--222. Springer.

\bibitem[{Kingma and Welling(2022)}]{kingma2022autoencodingvariationalbayes}
Diederik~P Kingma and Max Welling. 2022.
\newblock \href {https://arxiv.org/abs/1312.6114} {Auto-encoding variational bayes}.
\newblock \emph{Preprint}, arXiv:1312.6114.

\bibitem[{Lenz and Bergmann(2023)}]{lenz-2023-case}
Mirko Lenz and Ralph Bergmann. 2023.
\newblock Case-based adaptation of argument graphs with wordnet and large language models.
\newblock In \emph{Case-Based Reasoning Research and Development}, pages 263--278, Cham. Springer Nature Switzerland.

\bibitem[{Li et~al.(2023)Li, Kamko, Sabet, Akhgari, Xu, and Doshi}]{playground-v2}
Daiqing Li, Aleks Kamko, Ali Sabet, Ehsan Akhgari, Linmiao Xu, and Suhail Doshi. 2023.
\newblock \href {[https://huggingface.co/playgroundai/playground-v2-1024px-aesthetic](https://huggingface.co/playgroundai/playground-v2-1024px-aesthetic)} {Playground v2}.

\bibitem[{Li et~al.(2024)Li, Zhang, Lin, Xiong, Long, Deng, Zhang, Liu, Huang, Xiao, Chen, He, Li, Li, Zhang, Quan, Lu, Huang, Yuan, Zheng, Li, Zhang, Zhang, Chen, Liu, Fang, Wang, Xue, Tao, Zhu, Liu, Lin, Sun, Li, Wang, Chen, Hu, Xiao, Chen, Liu, Liu, Wang, Yang, Jiang, and Lu}]{li2024hunyuandit}
Zhimin Li, Jianwei Zhang, Qin Lin, Jiangfeng Xiong, Yanxin Long, Xinchi Deng, Yingfang Zhang, Xingchao Liu, Minbin Huang, Zedong Xiao, Dayou Chen, Jiajun He, Jiahao Li, Wenyue Li, Chen Zhang, Rongwei Quan, Jianxiang Lu, Jiabin Huang, Xiaoyan Yuan, Xiaoxiao Zheng, Yixuan Li, Jihong Zhang, Chao Zhang, Meng Chen, Jie Liu, Zheng Fang, Weiyan Wang, Jinbao Xue, Yangyu Tao, Jianchen Zhu, Kai Liu, Sihuan Lin, Yifu Sun, Yun Li, Dongdong Wang, Mingtao Chen, Zhichao Hu, Xiao Xiao, Yan Chen, Yuhong Liu, Wei Liu, Di~Wang, Yong Yang, Jie Jiang, and Qinglin Lu. 2024.
\newblock \href {https://arxiv.org/abs/2405.08748} {Hunyuan-dit: A powerful multi-resolution diffusion transformer with fine-grained chinese understanding}.
\newblock \emph{Preprint}, arXiv:2405.08748.

\bibitem[{Liao et~al.(2024)Liao, Chen, Fu, Du, He, Wang, Han, and Zhang}]{DBLP:conf/aaai/LiaoCFD0WH024}
Jiayi Liao, Xu~Chen, Qiang Fu, Lun Du, Xiangnan He, Xiang Wang, Shi Han, and Dongmei Zhang. 2024.
\newblock \href {https://doi.org/10.1609/AAAI.V38I4.28122} {Text-to-image generation for abstract concepts}.
\newblock In \emph{Thirty-Eighth {AAAI} Conference on Artificial Intelligence, {AAAI} 2024, Thirty-Sixth Conference on Innovative Applications of Artificial Intelligence, {IAAI} 2024, Fourteenth Symposium on Educational Advances in Artificial Intelligence, {EAAI} 2014, February 20-27, 2024, Vancouver, Canada}, pages 3360--3368. {AAAI} Press.

\bibitem[{Lin et~al.(2015)Lin, Maire, Belongie, Bourdev, Girshick, Hays, Perona, Ramanan, Zitnick, and Dollár}]{lin2015microsoftcococommonobjects}
Tsung-Yi Lin, Michael Maire, Serge Belongie, Lubomir Bourdev, Ross Girshick, James Hays, Pietro Perona, Deva Ramanan, C.~Lawrence Zitnick, and Piotr Dollár. 2015.
\newblock \href {https://arxiv.org/abs/1405.0312} {Microsoft coco: Common objects in context}.
\newblock \emph{Preprint}, arXiv:1405.0312.

\bibitem[{Mao et~al.(2018)Mao, Lin, and Guerin}]{mao-etal-2018-word}
Rui Mao, Chenghua Lin, and Frank Guerin. 2018.
\newblock \href {https://doi.org/10.18653/v1/P18-1113} {Word embedding and {W}ord{N}et based metaphor identification and interpretation}.
\newblock In \emph{Proceedings of the 56th Annual Meeting of the Association for Computational Linguistics (Volume 1: Long Papers)}, pages 1222--1231, Melbourne, Australia. Association for Computational Linguistics.

\bibitem[{Miller(1998)}]{miller1998wordnet}
George~A Miller. 1998.
\newblock \emph{WordNet: An electronic lexical database}.
\newblock MIT press.

\bibitem[{Moskvoretskii et~al.(2024{\natexlab{a}})Moskvoretskii, Neminova, Lobanova, Panchenko, and Nikishina}]{moskvoretskii2024taxollama}
Viktor Moskvoretskii, Ekaterina Neminova, Alina Lobanova, Alexander Panchenko, and Irina Nikishina. 2024{\natexlab{a}}.
\newblock Taxollama: Wordnet-based model for solving multiple lexical sematic tasks.
\newblock \emph{arXiv preprint arXiv:2403.09207}.

\bibitem[{Moskvoretskii et~al.(2024{\natexlab{b}})Moskvoretskii, Panchenko, and Nikishina}]{moskvoretskii-etal-2024-large}
Viktor Moskvoretskii, Alexander Panchenko, and Irina Nikishina. 2024{\natexlab{b}}.
\newblock \href {https://aclanthology.org/2024.lrec-main.133} {Are large language models good at lexical semantics? a case of taxonomy learning}.
\newblock In \emph{Proceedings of the 2024 Joint International Conference on Computational Linguistics, Language Resources and Evaluation (LREC-COLING 2024)}, pages 1498--1510, Torino, Italia. ELRA and ICCL.

\bibitem[{Muennighoff et~al.(2023)Muennighoff, Tazi, Magne, and Reimers}]{muennighoff-etal-2023-mteb}
Niklas Muennighoff, Nouamane Tazi, Loic Magne, and Nils Reimers. 2023.
\newblock \href {https://doi.org/10.18653/v1/2023.eacl-main.148} {{MTEB}: Massive text embedding benchmark}.
\newblock In \emph{Proceedings of the 17th Conference of the European Chapter of the Association for Computational Linguistics}, pages 2014--2037, Dubrovnik, Croatia. Association for Computational Linguistics.

\bibitem[{Ng et~al.(2023)Ng, Zhu, Song, and Xiang}]{ng2023dreamcreature}
Kam~Woh Ng, Xiatian Zhu, Yi-Zhe Song, and Tao Xiang. 2023.
\newblock Dreamcreature: Crafting photorealistic virtual creatures from imagination.
\newblock \emph{arXiv preprint arXiv:2311.15477}.

\bibitem[{Nikishina et~al.(2023)Nikishina, Chernomorchenko, Demidova, Panchenko, and Biemann}]{nikishina2023predicting}
Irina Nikishina, Polina Chernomorchenko, Anastasiia Demidova, Alexander Panchenko, and Chris Biemann. 2023.
\newblock Predicting terms in is-a relations with pre-trained transformers.
\newblock In \emph{Findings of the Association for Computational Linguistics: IJCNLP-AACL 2023 (Findings)}, pages 134--148.

\bibitem[{Patel et~al.(2024{\natexlab{a}})Patel, Gokhale, Baral, and Yang}]{DBLP:conf/aaai/PatelGBY24}
Maitreya Patel, Tejas Gokhale, Chitta Baral, and Yezhou Yang. 2024{\natexlab{a}}.
\newblock Conceptbed: Evaluating concept learning abilities of text-to-image diffusion models.
\newblock In \emph{{AAAI}}, pages 14554--14562. {AAAI} Press.

\bibitem[{Patel et~al.(2024{\natexlab{b}})Patel, Gokhale, Baral, and Yang}]{patel2024conceptbedevaluatingconceptlearning}
Maitreya Patel, Tejas Gokhale, Chitta Baral, and Yezhou Yang. 2024{\natexlab{b}}.
\newblock \href {https://arxiv.org/abs/2306.04695} {Conceptbed: Evaluating concept learning abilities of text-to-image diffusion models}.
\newblock \emph{Preprint}, arXiv:2306.04695.

\bibitem[{Podell et~al.(2023)Podell, English, Lacey, Blattmann, Dockhorn, Müller, Penna, and Rombach}]{podell2023sdxlimprovinglatentdiffusion}
Dustin Podell, Zion English, Kyle Lacey, Andreas Blattmann, Tim Dockhorn, Jonas Müller, Joe Penna, and Robin Rombach. 2023.
\newblock \href {https://arxiv.org/abs/2307.01952} {Sdxl: Improving latent diffusion models for high-resolution image synthesis}.
\newblock \emph{Preprint}, arXiv:2307.01952.

\bibitem[{Prompthero(2023)}]{openjourney}
Prompthero. 2023.
\newblock \href {[https://huggingface.co/prompthero/openjourney](https://huggingface.co/prompthero/openjourney)} {Openjourney}.

\bibitem[{Radford et~al.(2021)Radford, Kim, Hallacy, Ramesh, Goh, Agarwal, Sastry, Askell, Mishkin, Clark, Krueger, and Sutskever}]{radford2021learningtransferablevisualmodels}
Alec Radford, Jong~Wook Kim, Chris Hallacy, Aditya Ramesh, Gabriel Goh, Sandhini Agarwal, Girish Sastry, Amanda Askell, Pamela Mishkin, Jack Clark, Gretchen Krueger, and Ilya Sutskever. 2021.
\newblock \href {https://arxiv.org/abs/2103.00020} {Learning transferable visual models from natural language supervision}.
\newblock \emph{Preprint}, arXiv:2103.00020.

\bibitem[{Raffel et~al.(2023)Raffel, Shazeer, Roberts, Lee, Narang, Matena, Zhou, Li, and Liu}]{raffel2023exploringlimitstransferlearning}
Colin Raffel, Noam Shazeer, Adam Roberts, Katherine Lee, Sharan Narang, Michael Matena, Yanqi Zhou, Wei Li, and Peter~J. Liu. 2023.
\newblock \href {https://arxiv.org/abs/1910.10683} {Exploring the limits of transfer learning with a unified text-to-text transformer}.
\newblock \emph{Preprint}, arXiv:1910.10683.

\bibitem[{Rajpurkar et~al.(2016)Rajpurkar, Zhang, Lopyrev, and Liang}]{rajpurkar2016squad100000questionsmachine}
Pranav Rajpurkar, Jian Zhang, Konstantin Lopyrev, and Percy Liang. 2016.
\newblock \href {https://arxiv.org/abs/1606.05250} {Squad: 100,000+ questions for machine comprehension of text}.
\newblock \emph{Preprint}, arXiv:1606.05250.

\bibitem[{Rombach et~al.(2022)Rombach, Blattmann, Lorenz, Esser, and Ommer}]{Rombach_2022_CVPR}
Robin Rombach, Andreas Blattmann, Dominik Lorenz, Patrick Esser, and Bj\"orn Ommer. 2022.
\newblock High-resolution image synthesis with latent diffusion models.
\newblock In \emph{Proceedings of the IEEE/CVF Conference on Computer Vision and Pattern Recognition (CVPR)}, pages 10684--10695.

\bibitem[{Rostamzadeh et~al.(2018)Rostamzadeh, Hosseini, Boquet, Stokowiec, Zhang, Jauvin, and Pal}]{rostamzadeh2018fashiongengenerativefashiondataset}
Negar Rostamzadeh, Seyedarian Hosseini, Thomas Boquet, Wojciech Stokowiec, Ying Zhang, Christian Jauvin, and Chris Pal. 2018.
\newblock \href {https://arxiv.org/abs/1806.08317} {Fashion-gen: The generative fashion dataset and challenge}.
\newblock \emph{Preprint}, arXiv:1806.08317.

\bibitem[{Salimans et~al.(2016)Salimans, Goodfellow, Zaremba, Cheung, Radford, and Chen}]{salimans2016improved}
Tim Salimans, Ian Goodfellow, Wojciech Zaremba, Vicki Cheung, Alec Radford, and Xi~Chen. 2016.
\newblock Improved techniques for training gans.
\newblock \emph{Advances in neural information processing systems}, 29.

\bibitem[{Sarlin et~al.(2020)Sarlin, DeTone, Malisiewicz, and Rabinovich}]{sarlin2020supergluelearningfeaturematching}
Paul-Edouard Sarlin, Daniel DeTone, Tomasz Malisiewicz, and Andrew Rabinovich. 2020.
\newblock \href {https://arxiv.org/abs/1911.11763} {Superglue: Learning feature matching with graph neural networks}.
\newblock \emph{Preprint}, arXiv:1911.11763.

\bibitem[{Sauer et~al.(2023)Sauer, Lorenz, Blattmann, and Rombach}]{sauer2023adversarialdiffusiondistillation}
Axel Sauer, Dominik Lorenz, Andreas Blattmann, and Robin Rombach. 2023.
\newblock \href {https://arxiv.org/abs/2311.17042} {Adversarial diffusion distillation}.
\newblock \emph{Preprint}, arXiv:2311.17042.

\bibitem[{Sha et~al.(2023)Sha, Li, Yu, and Zhang}]{sha2023defakedetectionattributionfake}
Zeyang Sha, Zheng Li, Ning Yu, and Yang Zhang. 2023.
\newblock \href {https://arxiv.org/abs/2210.06998} {De-fake: Detection and attribution of fake images generated by text-to-image generation models}.
\newblock \emph{Preprint}, arXiv:2210.06998.

\bibitem[{Tan et~al.(2024)Tan, Li, Wang, Beigi, Jiang, Bhattacharjee, Karami, Li, Cheng, and Liu}]{tan2024largelanguagemodelsdata}
Zhen Tan, Dawei Li, Song Wang, Alimohammad Beigi, Bohan Jiang, Amrita Bhattacharjee, Mansooreh Karami, Jundong Li, Lu~Cheng, and Huan Liu. 2024.
\newblock \href {https://arxiv.org/abs/2402.13446} {Large language models for data annotation: A survey}.
\newblock \emph{Preprint}, arXiv:2402.13446.

\bibitem[{Wang et~al.(2019)Wang, Singh, Michael, Hill, Levy, and Bowman}]{wang2019gluemultitaskbenchmarkanalysis}
Alex Wang, Amanpreet Singh, Julian Michael, Felix Hill, Omer Levy, and Samuel~R. Bowman. 2019.
\newblock \href {https://arxiv.org/abs/1804.07461} {Glue: A multi-task benchmark and analysis platform for natural language understanding}.
\newblock \emph{Preprint}, arXiv:1804.07461.

\bibitem[{Wang et~al.(2023)Wang, Montoya, Munechika, Yang, Hoover, and Chau}]{wang-etal-2023-diffusiondb}
Zijie~J. Wang, Evan Montoya, David Munechika, Haoyang Yang, Benjamin Hoover, and Duen~Horng Chau. 2023.
\newblock \href {https://doi.org/10.18653/v1/2023.acl-long.51} {{D}iffusion{DB}: A large-scale prompt gallery dataset for text-to-image generative models}.
\newblock In \emph{Proceedings of the 61st Annual Meeting of the Association for Computational Linguistics (Volume 1: Long Papers)}, pages 893--911, Toronto, Canada. Association for Computational Linguistics.

\bibitem[{Wei et~al.(2024)Wei, He, Xia, Wong, Lin, and Han}]{wei2024systematicevaluationllmasajudgellm}
Hui Wei, Shenghua He, Tian Xia, Andy Wong, Jingyang Lin, and Mei Han. 2024.
\newblock \href {https://arxiv.org/abs/2408.13006} {Systematic evaluation of llm-as-a-judge in llm alignment tasks: Explainable metrics and diverse prompt templates}.
\newblock \emph{Preprint}, arXiv:2408.13006.

\bibitem[{Xu et~al.(2024)Xu, Liu, Wu, Tong, Li, Ding, Tang, and Dong}]{xu2024imagereward}
Jiazheng Xu, Xiao Liu, Yuchen Wu, Yuxuan Tong, Qinkai Li, Ming Ding, Jie Tang, and Yuxiao Dong. 2024.
\newblock Imagereward: Learning and evaluating human preferences for text-to-image generation.
\newblock \emph{Advances in Neural Information Processing Systems}, 36.

\bibitem[{Zeng et~al.(2024)Zeng, Bai, Tan, Wu, Feng, and Jiang}]{zeng2024codetaxoenhancingtaxonomyexpansion}
Qingkai Zeng, Yuyang Bai, Zhaoxuan Tan, Zhenyu Wu, Shangbin Feng, and Meng Jiang. 2024.
\newblock \href {https://arxiv.org/abs/2408.09070} {Codetaxo: Enhancing taxonomy expansion with limited examples via code language prompts}.
\newblock \emph{Preprint}, arXiv:2408.09070.

\bibitem[{Zhang et~al.(2024)Zhang, Wang, Xu, Wang, and Shi}]{DBLP:conf/cvpr/0011WXWS22}
Gong Zhang, Kai Wang, Xingqian Xu, Zhangyang Wang, and Humphrey Shi. 2024.
\newblock \href {https://doi.org/10.1109/CVPRW63382.2024.00182} {Forget-me-not: Learning to forget in text-to-image diffusion models}.
\newblock In \emph{{IEEE/CVF} Conference on Computer Vision and Pattern Recognition, {CVPR} 2024 - Workshops, Seattle, WA, USA, June 17-18, 2024}, pages 1755--1764. {IEEE}.

\bibitem[{Zheng et~al.(2023{\natexlab{a}})Zheng, Chiang, Sheng, Zhuang, Wu, Zhuang, Lin, Li, Li, Xing, Zhang, Gonzalez, and Stoica}]{NEURIPS2023_91f18a12}
Lianmin Zheng, Wei-Lin Chiang, Ying Sheng, Siyuan Zhuang, Zhanghao Wu, Yonghao Zhuang, Zi~Lin, Zhuohan Li, Dacheng Li, Eric Xing, Hao Zhang, Joseph~E Gonzalez, and Ion Stoica. 2023{\natexlab{a}}.
\newblock \href {https://proceedings.neurips.cc/paper_files/paper/2023/file/91f18a1287b398d378ef22505bf41832-Paper-Datasets_and_Benchmarks.pdf} {Judging llm-as-a-judge with mt-bench and chatbot arena}.
\newblock In \emph{Advances in Neural Information Processing Systems}, volume~36, pages 46595--46623. Curran Associates, Inc.

\bibitem[{Zheng et~al.(2023{\natexlab{b}})Zheng, Chiang, Sheng, Zhuang, Wu, Zhuang, Lin, Li, Li, Xing, Zhang, Gonzalez, and Stoica}]{DBLP:conf/nips/ZhengC00WZL0LXZ23}
Lianmin Zheng, Wei{-}Lin Chiang, Ying Sheng, Siyuan Zhuang, Zhanghao Wu, Yonghao Zhuang, Zi~Lin, Zhuohan Li, Dacheng Li, Eric~P. Xing, Hao Zhang, Joseph~E. Gonzalez, and Ion Stoica. 2023{\natexlab{b}}.
\newblock \href {http://papers.nips.cc/paper\_files/paper/2023/hash/91f18a1287b398d378ef22505bf41832-Abstract-Datasets\_and\_Benchmarks.html} {Judging llm-as-a-judge with mt-bench and chatbot arena}.
\newblock In \emph{Advances in Neural Information Processing Systems 36: Annual Conference on Neural Information Processing Systems 2023, NeurIPS 2023, New Orleans, LA, USA, December 10 - 16, 2023}.

\bibitem[{Zheng et~al.(2023{\natexlab{c}})Zheng, Chiang, Sheng, Zhuang, Wu, Zhuang, Lin, Li, Li, Xing, Zhang, Gonzalez, and Stoica}]{zheng2023judgingllmasajudgemtbenchchatbot}
Lianmin Zheng, Wei-Lin Chiang, Ying Sheng, Siyuan Zhuang, Zhanghao Wu, Yonghao Zhuang, Zi~Lin, Zhuohan Li, Dacheng Li, Eric~P. Xing, Hao Zhang, Joseph~E. Gonzalez, and Ion Stoica. 2023{\natexlab{c}}.
\newblock \href {https://arxiv.org/abs/2306.05685} {Judging llm-as-a-judge with mt-bench and chatbot arena}.
\newblock \emph{Preprint}, arXiv:2306.05685.

\bibitem[{Zhou et~al.(2023)Zhou, Liu, Xu, Iyer, Sun, Mao, Ma, Efrat, Yu, Yu, Zhang, Ghosh, Lewis, Zettlemoyer, and Levy}]{zhou2023limaalignment}
Chunting Zhou, Pengfei Liu, Puxin Xu, Srini Iyer, Jiao Sun, Yuning Mao, Xuezhe Ma, Avia Efrat, Ping Yu, Lili Yu, Susan Zhang, Gargi Ghosh, Mike Lewis, Luke Zettlemoyer, and Omer Levy. 2023.
\newblock \href {https://arxiv.org/abs/2305.11206} {Lima: Less is more for alignment}.
\newblock \emph{Preprint}, arXiv:2305.11206.

\end{thebibliography}

\clearpage
\onecolumn
\appendix

\section{TTI Models Description}
\label{sec:appendix_models}

To generate the images, we employed ten models and one retrieval approach.  It results in 12 systems in total. 

\subsection{U-Net-based models} 
Models based on the  architecture:
\begin{itemize}
    \item \textbf{SD-v1-5} (400M) \citep{Rombach_2022_CVPR} is a SD-v1-2 fine-tuned on 595k steps at resolution 512x512 on ``laion-aesthetics v2 5+'' and 10\% dropping of the text-conditioning to improve classifier-free guidance sampling.
    \item \textbf{SDXL} (6.6B) \citep{podell2023sdxlimprovinglatentdiffusion}. The U-Net within is 3 times larger comparing to classical SD models. Moreover, additional CLIP \citep{radford2021learningtransferablevisualmodels} text encoder is utilized increasing the number of parameters.
    \item \textbf{SDXL Turbo} (3.5B) \citep{sauer2023adversarialdiffusiondistillation} is a distilled version of SDXL-1.0.
    \item \textbf{Kandinsky 3} (12B) \citep{arkhipkin2023kandinsky}. The sizes of U-Net and text encoders were significantly increased in comparison to the second generation.
    \item \textbf{Playground-v2-aesthetic} (2.6B) \citep{playground-v2} has the same architecture as SDXL, and is trained on a dataset from Midjourney\footnote{https://www.midjourney.com}.
    \item \textbf{Openjourney} (123M) \citep{openjourney} is also trained on Midjourney images.
\end{itemize}

\subsection{Diffusion Transformers models}

Diffusion Transformers (DiTs) models:
\begin{itemize}
    \item \textbf{IF} (4.3B) \citep{deepfloyd/if}. A modular system consisting of a frozen text encoder and three sequential pixel diffusion modules.
    \item \textbf{SD3} (2B) \citep{esser2024scalingrectifiedflowtransformers} is a Multimodal DiT (MMDiT). The authors used two CLIP encoders and T5 \citep{raffel2023exploringlimitstransferlearning} for combining visual and textual inputs.
    \item \textbf{PixArt-Sigma} (900M) \citep{chen2024pixartsigmaweaktostrongtrainingdiffusion}. The authors employed novel attention mechanism for the sake of efficiency and high-quality training data for 4K images. 
    \item \textbf{Hunyuan-DiT} (1.5B) \citep{li2024hunyuandit} is a text-to-image diffusion transformer designed for fine-grained understanding of both English and Chinese, using a custom-built transformer structure and text encoder.
    \item \textbf{FLUX} (12B) \cite{flux} is a rectified flow Transformer capable of generating images from text descriptions. It is based on a hybrid architecture of multimodal and parallel diffusion transformer blocks.
\end{itemize}

\subsection{Retrieval}

We retrieved images from Wikimedia Commons\footnote{\url{https://commons.wikimedia.org/}}, following previous studies \cite{10.1145/3184558.3191646,jones2022abstract}. For 3370 total items, this process resulted in 1,790 unique images. For 20 concepts (32 dataset entities), no images were found. For 146 lemmas, the search returned images that had already been retrieved, likely due to the similarity of the concepts searched. We use the top-1 output from the main image search engine\footnote{For the lemma ``coin'', the search URL is \url{https://commons.wikimedia.org/w/index.php?search=coin&title=Special:MediaSearch&go=Go&type=image}}.

\section{Definitions Analysis}
\label{sec:definitions}

We also analyzed how different models benefit from the inclusion of definitions in the TTI prompt, examining the change in winning battles with definitions (all models are provided with definitions), as depicted in Figure~\ref{fig:def_profit}. Most models benefit from definitions according to human evaluation, though the trend is milder in GPT-4 evaluations, with preferences for Kandinsky3 and SD1.5 even dropping significantly. Despite the outlier of Kandinsky3, the overall trend between GPT-4 and human evaluations highlights the alignment of GPT-4's judgments with human preferences.

\begin{figure}[ht!]
    \centering
   \includegraphics[width=\textwidth]{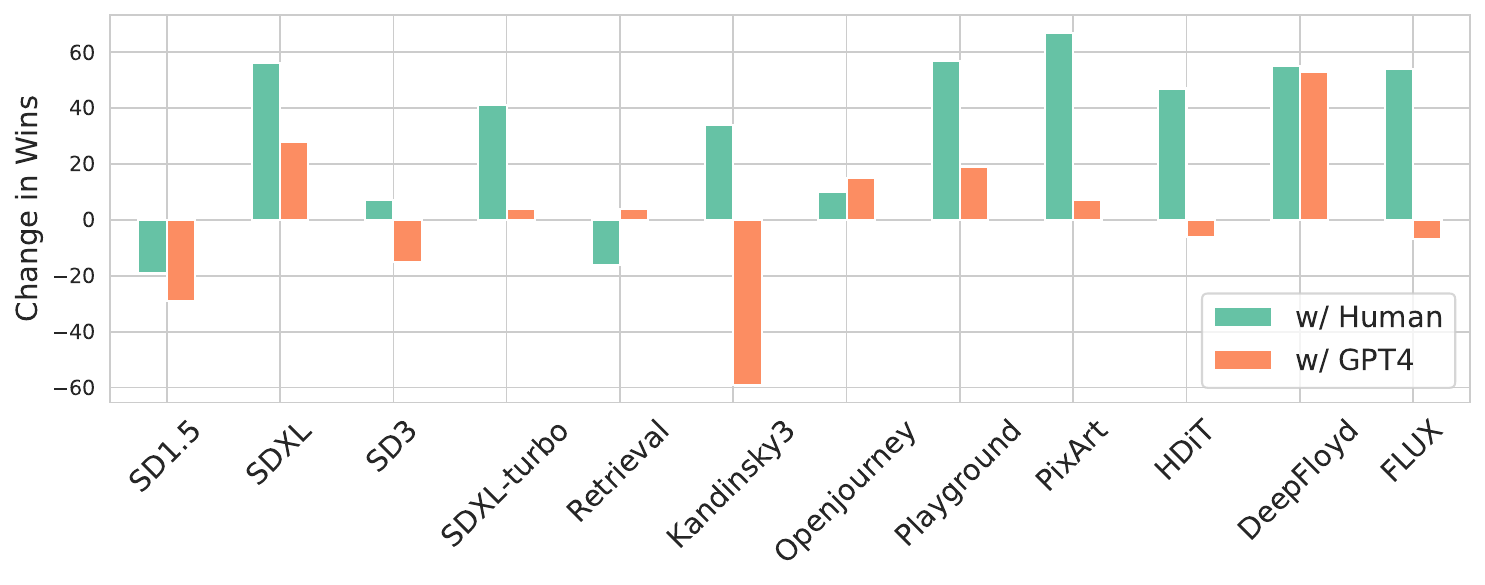}
  \caption{Summary change in battle wins with added definition in prompt.}
  \label{fig:def_profit}
\end{figure}

\section{Metrics Definition} \label{sec:metrics_def}

Let $V$ be a finite set of concepts (lemmas), where each $v \in V$ represents a semantic category. Let $(\mathcal{X}, \mathcal{A}, \mu)$ be a measurable space representing the image domain, where $\mathcal{X}$ is the set of all possible images, $\mathcal{A}$ is a $\sigma$-algebra of measurable subsets of $\mathcal{X}$, and $\mu$ is a base measure.  

For each concept $v \in V$, we assume a probability measure $P(\cdot \mid v)$ on $(\mathcal{X}, \mathcal{A})$ such that $P(X \in A \mid v)$ represents the probability that an image $X$ generated under the concept $v$ lies in the measurable set $A$. In other words, each concept $v$ defines a probability distribution over the image space $\mathcal{X}$.

\subsection{Lemma Similarity}

\begin{definition*}[Lemma Similarity]
Given a concept \( v \in V \) and an image \( x \in \mathcal{X} \), the \emph{Lemma Similarity} is defined as:
\[
S_{\text{lemma}}(v, x) := P(X = x \mid v).
\]
\end{definition*}

This measures the likelihood of observing the image \( x \) under the assumption that the concept \( v \) is the generating source. According to Theorem 1, it also reflects the likelihood of the concept \(v \) given the image \(x \), offering a principled way to assess how well an image aligns with the semantic properties of a concept. Maximizing this metric implies a stronger alignment between the concept and the generated image.

\begin{theorem}
Let \( V \) be a finite set of concepts, and suppose the prior distribution  is uniform \( P(V=v)=\frac{1}{|V|} \, \forall v \in V \). Then, \( \forall x \in X \, \forall v \in V \, \arg\max_{i \in V} S_{\text{lemma}}(v, x)  \propto \arg\max_{i \in V} P(V=v \mid X=x) \).
\end{theorem}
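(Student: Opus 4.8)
The statement is an immediate consequence of Bayes' rule together with the uniform-prior hypothesis, so the plan is simply to unwind the definitions and observe that the concept-independent factors do not affect the $\arg\max$. First I would write $S_{\text{lemma}}(v,x) = P(X=x \mid v)$ by definition, and then apply Bayes' theorem in the form
\[
P(V=v \mid X=x) = \frac{P(X=x \mid v)\, P(V=v)}{P(X=x)},
\]
which is valid for every $x$ with $P(X=x) > 0$ (a measure-theoretic caveat worth a one-line remark, since the definition writes $P(X=x)$ for what is really a density or an event of positive mass; one can read it as the appropriate Radon--Nikodym derivative with respect to the base measure $\mu$, and the argument goes through verbatim).

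Next I would substitute the uniform prior $P(V=v) = 1/|V|$. Since both $1/|V|$ and $P(X=x)$ are constants in $v$ (the former trivially, the latter because it is the $v$-free normalizing marginal $\sum_{v'} P(X=x\mid v')P(V=v')$), we get
\[
P(V=v \mid X=x) = \frac{1}{|V|\,P(X=x)}\, S_{\text{lemma}}(v,x),
\]
i.e.\ $P(V=v\mid X=x)$ is a fixed positive scalar multiple of $S_{\text{lemma}}(v,x)$, with the scalar depending only on $x$. Multiplying a function by a positive constant does not change where it attains its maximum over the finite set $V$, so $\arg\max_{v\in V} S_{\text{lemma}}(v,x) = \arg\max_{v\in V} P(V=v\mid X=x)$, which is the claimed proportionality (here "$\propto$" is being used to mean "has the same $\arg\max$ as", matching the paper's usage).

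There is essentially no hard step: the only things to be slightly careful about are (i) the measure-theoretic interpretation of $P(X=x)$ just mentioned, and (ii) noting that the result is a statement about the location of the maximizer, not the maximal value, so it is enough to control the $v$-dependence up to positive multiplicative constants. I would also state explicitly that $\arg\max$ may in principle be set-valued if ties occur, and that the equality of the two $\arg\max$ sets holds regardless, since scaling by a positive constant preserves ties as well. If desired, one can close by remarking that the uniformity hypothesis is exactly what is needed: without it the extra factor $P(V=v)$ would re-weight the concepts and the two $\arg\max$ operators could genuinely disagree.
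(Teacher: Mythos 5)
Your argument is the same as the paper's: apply Bayes' rule, cancel the uniform prior, note that the remaining denominator ($P(X=x)$, which the paper writes out explicitly as $\sum_{v\in V}P(X=x\mid v)/|V|$) is independent of the concept, and conclude that the $\arg\max$ is preserved. The extra remarks you add about the measure-theoretic reading of $P(X=x)$ and about $\arg\max$ being set-valued under ties are sound but do not change the substance of the argument.
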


\begin{proof}
By Bayes' rule, the posterior probability of concept \( i \) given an image \( x \) is:
\begin{equation}
\begin{split}
    P(V=i \mid X=x) = \frac{P(X=x \mid i) P(V=i)}{\sum_{v \in V} P(X=x \mid v) P(V=v)} =\frac{P(X=x \mid i) \cdot \frac{1}{|V|}}{\sum_{v \in V} P(X=x \mid v) \cdot \frac{1}{|V|}} = \\
   \frac{P(X=x \mid i)}{\sum_{v \in V} P(X=x \mid v)} \propto P(X=x \mid i)
\end{split}
\end{equation}

To find the concept \( i \) that maximizes the posterior \( P(V=i \mid X=x) \), we write:
\[
\hat{i} = \arg\max_{i \in V} P(V=i \mid X=x) = \arg\max_{i \in V} P(X=x \mid i) = \arg\max_{i \in V} S_{\text{lemma}}(i, x).
\]

Thus, under a uniform prior, maximizing the posterior probability is equivalent to maximizing the Lemma Similarity
\end{proof}

\subsection{Hypernym Similarity \& Cohyponym Similarity}

\begin{definition*}[Hypernym Similarity]
Let $A(i) \subseteq V$ be the set of hypernyms of a concept $i \in V$. For a given image $x \in \mathcal{X}$, we define the \emph{Hypernym Similarity} as:
\[
S_{\text{hyper}}(i,x) := P(X=x \mid A(i)) = \frac{1}{|A(i)|} \sum_{h \in A(i)} P(X=x \mid h).
\]
\end{definition*}

\begin{definition*}[Cohyponym Similarity]
Let $C(i) \subseteq V$ be the set of cohyponyms of a concept $i \in V$. For a given image $x \in \mathcal{X}$, we define the \emph{Cohyponym Similarity} as:
\[
S_{\text{cohyponym}}(i,x) := P(X=x \mid C(i)) = \frac{1}{|C(i)|} \sum_{ch \in C(i)} P(X=x \mid ch).
\]
\end{definition*}

\emph{Hypernym Similarity} and \emph{Cohyponym Similarity} represent the likelihood of observing $x$ under the average distribution of its ancestor and cohyponyms concepts respectively. Intuitively, they measure how well the image $x$ fits into the neighboring concepts either broader, more general semantic category represented by the ancestors of $i$ or similar, slightly different concepts from the same ancestor of $i$. According further to Theorem 2, maximizing those similarities is proportional to minimizing distance between image space conditioned on a concept and image space conditioned on specific neighbor space, therefore better reflecting neighbors semantic properties and covering tree structure.

\begin{theorem}
With large enough \(S_{\text{lemma}}(i, x)\) to properly represent our concept, \( \max S_{\text{hyper}}(i,x) \) and \(\max S_{\text{cohyponym}}(i,x)\)  \( \propto \min_{P(X \mid A(i))} D_{\text{KL}}\bigl(P(X \mid i) \,\|\, P(X \mid A(i))\bigr) \) .
\end{theorem}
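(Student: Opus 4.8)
The plan is to relate the Hypernym (and Cohyponym) Similarity to a KL divergence between the concept-conditioned image distribution $P(X\mid i)$ and the averaged neighbor distribution $P(X\mid A(i))$, and then show that, under the stated hypothesis that $S_{\text{lemma}}(i,x)$ is large enough that the image $x$ is a faithful representative of the concept $i$, maximizing $S_{\text{hyper}}(i,x)$ is (proportional to) minimizing that KL divergence. First I would write out $D_{\text{KL}}\bigl(P(X\mid i)\,\|\,P(X\mid A(i))\bigr) = \mathbb{E}_{X\sim P(\cdot\mid i)}\bigl[\log P(X\mid i) - \log P(X\mid A(i))\bigr]$ and note that the first term, the negative entropy $-H(P(X\mid i))$, does not depend on the choice of $A(i)$ (it is a property of the concept $i$ alone), so minimizing the KL divergence over $P(X\mid A(i))$ is equivalent to maximizing $\mathbb{E}_{X\sim P(\cdot\mid i)}\bigl[\log P(X\mid A(i))\bigr]$, i.e. maximizing the expected log-likelihood of the hypernym distribution under images drawn from the concept.

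Next I would invoke the hypothesis. The phrase ``with large enough $S_{\text{lemma}}(i,x)$ to properly represent our concept'' I would formalize as: the generated image $x$ carries (asymptotically) all the mass of $P(\cdot\mid i)$, so that the expectation $\mathbb{E}_{X\sim P(\cdot\mid i)}[\,\cdot\,]$ can be replaced by evaluation at $x$ (a point-mass / MAP approximation, consistent with Theorem 1, where a sufficiently Lemma-similar image is the arg-max representative of the concept). Under this substitution, $\mathbb{E}_{X\sim P(\cdot\mid i)}[\log P(X\mid A(i))] \approx \log P(X=x\mid A(i)) = \log S_{\text{hyper}}(i,x)$. Since $\log$ is monotone increasing, $\arg\max$ over the neighbor distribution of $\log S_{\text{hyper}}(i,x)$ coincides with $\arg\max$ of $S_{\text{hyper}}(i,x)$; hence maximizing $S_{\text{hyper}}(i,x)$ is equivalent to minimizing $D_{\text{KL}}\bigl(P(X\mid i)\,\|\,P(X\mid A(i))\bigr)$, which is the claim. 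The argument for $S_{\text{cohyponym}}(i,x)$ with $C(i)$ in place of $A(i)$ is verbatim identical, since the definitions differ only in which neighbor set is averaged over.

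The main obstacle is making the hypothesis ``large enough $S_{\text{lemma}}$'' precise enough to license replacing the expectation over $P(\cdot\mid i)$ by a single point evaluation at $x$ — in effect arguing that a high-Lemma-Similarity image behaves like a sufficient statistic for the concept's image distribution. I would handle this at the level of rigor used elsewhere in the paper (the CLIP-similarity approximations to the probabilities are already heuristic), stating it as a modelling assumption: when $S_{\text{lemma}}(i,x)=P(X=x\mid i)$ is near its maximum, $x$ is treated as the representative sample of $P(\cdot\mid i)$, so the plug-in estimate $D_{\text{KL}}\approx \mathrm{const}(i) - \log S_{\text{hyper}}(i,x)$ holds up to the concept-dependent constant $-H(P(X\mid i))$. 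Everything else — the entropy term being constant in the optimization variable, and monotonicity of $\log$ — is routine.
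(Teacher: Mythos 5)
Your proof is correct, but it takes a genuinely different route from the paper's. The paper argues pointwise: the KL minimizer is the one making the ratio \(P(X=x\mid i)/P(X=x\mid A(i))\approx 1\) for all \(x\); since \(P(X=x\mid i)\) is fixed, matching it requires increasing \(P(X=x\mid A(i))=S_{\text{hyper}}(i,x)\), and for the \(x\) that are most probable under \(i\) (i.e.\ with large \(S_{\text{lemma}}\)), this is exactly the term that dominates the KL sum and that raising \(S_{\text{hyper}}\) shrinks. You instead use the standard cross-entropy decomposition \(D_{\text{KL}}\bigl(P(X\mid i)\,\|\,P(X\mid A(i))\bigr) = -H\bigl(P(X\mid i)\bigr) - \mathbb{E}_{X\sim P(\cdot\mid i)}\bigl[\log P(X\mid A(i))\bigr]\), observe the entropy term is constant in the optimization variable, and then collapse the remaining expectation to a single evaluation at the generated image \(x\) via the point-mass reading of ``large enough \(S_{\text{lemma}}\),'' at which point monotonicity of \(\log\) finishes. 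The two arguments rely on the hypothesis in essentially the same place (to privilege the generated \(x\) among all images), but your decomposition is cleaner and makes visible exactly what the hypothesis buys: it licenses the plug-in estimate \(D_{\text{KL}}\approx\text{const}(i)-\log S_{\text{hyper}}(i,x)\). The paper's version is more elementary and does not need the entropy term explicitly, but it is also more heuristic about why the ``most probable \(x\)'' suffices, whereas your point-mass framing states the modelling assumption directly. Both are informal arguments at the same level of rigor as the rest of the appendix, and both correctly generalize verbatim to the cohyponym case.
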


\begin{proof}
    The proof will be based for the ancestor case, however proving for cohyponyms is similar with only change of ancestor set to cohyponyms set.
    
    Consider the KL divergence:
    \[
    D_{\text{KL}}\bigl(P(X \mid i) \,\|\, P(X \mid A(i))\bigr) 
    = \sum_{x \in \mathcal{X}} P(X=x \mid i) \log\frac{P(X=x \mid i)}{P(X=x \mid A(i))}.
    \]
    \[
    \arg\min_{P(X \mid A(i))} D_{\text{KL}}\bigl(P(X \mid i) \,\|\, P(X \mid A(i))\bigr) \implies
    \frac{P(X=x \mid i)}{P(X=x \mid A(i))} \approx 1 \quad \forall x.
    \]
     
    As $P(X=x \mid i)$ is fixed in precise setting, achieving $\frac{P(X=x \mid i)}{P(X=x \mid A(i))} \approx 1$ requires increasing $P(X=x \mid A(i))$ subject to large enough $P(X=x \mid i)$. Since $S_{\text{hyper}}(i,x) = P(X=x \mid A(i))$, increasing $S_{\text{hyper}}(i,x)$ for the most probable $x$ under $i$ reduces the KL divergence.
\end{proof}

\subsection{Specificity}

\begin{definition*}[Specificity]
Let $C(i)$ be the set of cohyponyms of a concept $i \in V$. Define:
\[
P(X=x \mid C(i)) := \frac{1}{|C(i)|} \sum_{c \in C(i)} P(X=x \mid c).
\]
The \emph{Specificity} of an image $x \in \mathcal{X}$ with respect to a concept $i \in V$ is:
\[
\text{Spec}(i,x) := \frac{P(X=x \mid i)}{P(X=x \mid C(i))}.
\]
\end{definition*}

Specificity measures how much more likely it is that \( x \) was generated by a concept \( i \) compared to one of its cohyponyms. A high Specificity value indicates that the probability of \( x \) under \( i \) is significantly larger than under \( C(i) \), the distribution over its cohyponyms. According to Theorems 3 and 4, Specificity highlights the uniqueness of the node representation by maximizing the distance to the cohyponyms nodes distribution and increasing the mutual information between the concept and image spaces. However, this metric relies on a high \emph{Lemma Similarity} value; otherwise, the reflected uniqueness may be misleading due to poor alignment with the target node's distribution.

\begin{theorem}
\[
\max \text{Spec}(i,x) \propto \max
D_{\text{KL}}(P(X\mid i)\|P(X\mid C(i)))
\]

\end{theorem}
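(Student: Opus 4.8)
The plan is to unfold the definition of Specificity as a ratio $\text{Spec}(i,x) = P(X=x\mid i)/P(X=x\mid C(i))$ and observe that its logarithm is exactly the integrand (pointwise log-ratio) appearing in the KL divergence $D_{\text{KL}}(P(X\mid i)\,\|\,P(X\mid C(i))) = \sum_{x}P(X=x\mid i)\log\frac{P(X=x\mid i)}{P(X=x\mid C(i))}$. So $\log\text{Spec}(i,x)$ is one term of the sum defining the KL divergence, weighted by $P(X=x\mid i)$. Since $\log$ is monotone increasing, maximizing $\text{Spec}(i,x)$ is equivalent to maximizing $\log\text{Spec}(i,x)$, and the argument will then mirror the reasoning already used in the proof of Theorem 2: under the standing assumption that $S_{\text{lemma}}(i,x) = P(X=x\mid i)$ is large (so that $x$ is a high-probability representative of $i$ and dominates the KL sum), pushing up the log-ratio at such $x$ pushes up the KL divergence.

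First I would state the standing assumption explicitly (a high $S_{\text{lemma}}(i,x)$, exactly as in Theorem 2), since without it the claim is false — one could make $\text{Spec}$ large at an $x$ that carries negligible mass under $P(\cdot\mid i)$ without affecting the KL value. Second, I would write $D_{\text{KL}}(P(X\mid i)\,\|\,P(X\mid C(i))) = \sum_{x}P(X=x\mid i)\log\text{Spec}(i,x)$, making the identification term-by-term. Third, I would note that $P(X=x\mid i)$ is fixed in the "precise setting" (the same phrasing Theorem 2's proof uses), so the only free quantity is $P(X=x\mid C(i))$ in the denominator; decreasing it for the dominant $x$ both increases $\text{Spec}(i,x)$ and increases the corresponding KL term, hence the KL divergence. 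Fourth, I would invoke monotonicity of $\log$ to transfer the conclusion from $\log\text{Spec}$ back to $\text{Spec}$ itself, yielding the claimed proportionality $\max\text{Spec}(i,x) \propto \max D_{\text{KL}}(P(X\mid i)\,\|\,P(X\mid C(i)))$.

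The main obstacle is that "$\propto$" and "$\max$" are being used quite loosely here — this is really a heuristic monotonicity statement rather than a literal proportionality of optimizers, and the same informality is already present in Theorems 2--3 of the paper. So the honest move is to follow the paper's own established level of rigor: treat $P(X=x\mid i)$ as fixed, argue that the sole lever is the denominator $P(X=x\mid C(i))$ on the high-$S_{\text{lemma}}$ region, and show that this lever moves $\text{Spec}$ and $D_{\text{KL}}$ in the same direction. I would not attempt to make the "$\propto$" literally precise (e.g. by exhibiting a constant of proportionality), since no such constant exists in general; instead the proof should make clear that it is the monotone coupling between the pointwise ratio and the divergence — restricted to the concept's high-mass region — that is being asserted, exactly paralleling the preceding proof.
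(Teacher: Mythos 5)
Your proposal matches the paper's own proof: both unfold $\text{Spec}$ as the ratio $P(X=x\mid i)/P(X=x\mid C(i))$, recognize $\log\text{Spec}(i,x)$ as the pointwise log-ratio appearing in $D_{\text{KL}}\bigl(P(X\mid i)\,\|\,P(X\mid C(i))\bigr)$, and argue that with $P(X=x\mid i)$ held fixed, decreasing the denominator $P(X=x\mid C(i))$ raises $\text{Spec}$ and the corresponding KL term together. Your added remark that the claim genuinely needs the high-$S_{\text{lemma}}$ caveat (since inflating $\text{Spec}$ at a negligible-mass $x$ does nothing to the KL sum) is a small but correct tightening; the paper's proof sidesteps this by tacitly requiring the ratio to increase for all $x$ rather than at a single $x$.
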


\begin{proof}
\[
\text{Spec}(i,x) = \frac{P(X=x \mid i)}{P(X=x \mid C(i))}.
\]
\[
\log(\text{Spec}(i,x)) = \log\frac{P(X=x \mid i)}{P(X=x \mid C(i))}.
\]
\[
D_{\text{KL}}(P(X\mid i)\|P(X\mid C(i))) = \sum_{x} P(X=x\mid i)\log\frac{P(X=x\mid i)}{P(X=x\mid C(i))}.
\]

For fixed $P(X=x\mid i)$, increasing $\frac{P(X=x\mid i)}{P(X=x\mid C(i))}$ for all $x$ (i.e., maximizing $\text{Spec}(i,x)$) increases each $\log\frac{P(X=x\mid i)}{P(X=x\mid C(i))}$ term. Thus, the sum $D_{\text{KL}}(P(X\mid i)\|P(X\mid C(i)))$ is maximized.
\end{proof}

\begin{theorem}
Let $V$ and $X$ be random variables over concepts and images. $\max \text{Spec}(i,x) \forall v \in V \, , x \in X \propto \max I(V;X)$.
\end{theorem}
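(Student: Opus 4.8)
The plan is to connect Specificity to mutual information by observing that mutual information $I(V;X)$ decomposes, via the chain rule / definition, into an expectation over concepts of a KL divergence between the class-conditional image distribution $P(X\mid v)$ and some reference distribution over $X$. Concretely, I would first write $I(V;X) = \sum_{v\in V} P(V=v)\, D_{\text{KL}}\bigl(P(X\mid v)\,\|\,P(X)\bigr)$, and then argue that in our taxonomic setting the relevant ``background'' against which a concept $v$ must be distinguished is the distribution induced by its cohyponyms $C(v)$ — i.e., $P(X)$ restricted to the local neighborhood is well-approximated (or, for the purpose of this proportionality statement, can be replaced) by $P(X\mid C(v)) = \frac{1}{|C(v)|}\sum_{c\in C(v)} P(X=x\mid c)$, since under a uniform prior the mixture over the sibling concepts is precisely the marginal over that subtree level. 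Under the uniform prior assumption already used in Theorems 1--4, $P(V=v)=1/|V|$ is constant, so maximizing $I(V;X)$ is proportional to maximizing $\sum_{v} D_{\text{KL}}\bigl(P(X\mid v)\,\|\,P(X\mid C(v))\bigr)$.

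Next I would invoke Theorem 3, which already establishes that $\max \text{Spec}(i,x) \propto \max D_{\text{KL}}\bigl(P(X\mid i)\,\|\,P(X\mid C(i))\bigr)$ for each concept $i$. Applying this simultaneously for all $v\in V$ — which is exactly the quantifier ``$\forall v\in V$'' in the statement — and summing over $v$ (legitimate because each term is nonnegative and the prior weights are equal), maximizing Specificity across all concepts drives up each summand $D_{\text{KL}}\bigl(P(X\mid v)\,\|\,P(X\mid C(v))\bigr)$, hence drives up their weighted sum, which by the previous paragraph is proportional to $I(V;X)$. Chaining the two proportionalities gives $\max \text{Spec}(i,x)\ \forall v \propto \max I(V;X)$, as claimed.

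The main obstacle — and the step I would be most careful about — is justifying the replacement of the true marginal $P(X)$ in the mutual-information decomposition by the local cohyponym mixture $P(X\mid C(v))$. In full generality $P(X) = \frac{1}{|V|}\sum_{u\in V} P(X\mid u)$ sums over \emph{all} concepts, not just the siblings of $v$, so the identification is only exact if contributions from concepts outside the local neighborhood are negligible on the support of $P(X\mid v)$ (a reasonable modeling assumption: an image strongly associated with concept $v$ has vanishing likelihood under semantically distant concepts). I would state this explicitly as the operating assumption (consistent with the paper's repeated caveat that these metrics must be read ``with large enough $S_{\text{lemma}}$'' and that cohyponyms are the meaningful confusers), and then the proportionality follows cleanly. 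A secondary, more routine point is noting that ``$\propto$'' here is used, as throughout Theorems 1--4, in the loose sense of ``is increased by / shares its argmax direction with'' rather than strict numerical proportionality, so the summation-of-KL-terms argument suffices and no constant needs to be tracked.
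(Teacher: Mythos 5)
Your proof is correct and follows essentially the same route as the paper's. The paper writes $I(V;X)=\sum_{v,x}P(V=v,X=x)\log\tfrac{P(X=x\mid v)}{P(X=x)}$, invokes the uniform prior so that $P(X=x)=\tfrac{1}{|V|}\sum_{v'}P(X=x\mid v')$, and then asserts directly that increasing $\text{Spec}(i,x)$ is proportional to increasing $\tfrac{P(X=x\mid i)}{P(X=x)}$, which raises each term of the sum. Your version groups the $x$-sum into a KL divergence and chains through Theorem~3 rather than manipulating the log-ratio inline, but that is a cosmetic reorganization of the same sum, and the load-bearing step is identical in both: replacing the global marginal $P(X)$ (a mixture over \emph{all} of $V$) with the local cohyponym mixture $P(X\mid C(v))$. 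The paper performs this replacement silently; you name it, state the operating assumption under which it is reasonable (concepts outside the sibling set contribute negligibly on the support of $P(X\mid v)$), and tie it back to the paper's standing caveat about high $S_{\text{lemma}}$. That explicitness is an improvement over the paper's own write-up, not a departure from it. Your observation that ``$\propto$'' is used throughout in the loose ``monotone in / shares argmax'' sense is also accurate and matches how the paper treats Theorems~1--3.
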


\begin{proof}
\[
I(V;X) = \sum_{v,x} P(V=v,X=x)\log\frac{P(X=x\mid V=v)}{P(X=x)}.
\]

For $v=i$:
\[
\frac{P(X=x\mid i)}{P(X=x)} = \frac{P(X=x\mid i)}{\sum_{v'}P(V=v')P(X=x\mid v')}.
\]

Consider the uniform prior ($P(V=v')=\frac{1}{|V|}$):
\[
P(X=x)=\frac{1}{|V|}\sum_{v'}P(X=x\mid v'),\quad
P(X=x\mid C(i))=\frac{1}{|C(i)|}\sum_{c \in C(i)}P(X=x\mid c).
\]

\[
\text{Spec}(i,x)=\frac{P(X=x\mid i)}{P(X=x\mid C(i))}.
\]

Increasing $\text{Spec}(i,x) \propto \frac{P(X=x\mid i)}{P(X=x)}$, raising each term $P(V=i,X=x)\log\frac{P(X=x\mid i)}{P(X=x)}$, thus increasing $I(V;X)$.
\end{proof}

\section{GPT-4 Prompts}\label{sec:appendix_prompts}

We show the technical style prompt for GPT-4 in Figure~\ref{fig:fullprompt} for more clarity on how images and user prompt were provided. We employed ``gpt-4o-mini'' version with API calls with images in high resolution.

\begin{figure}[ht!]
    \centering
    \includegraphics[width=\linewidth]{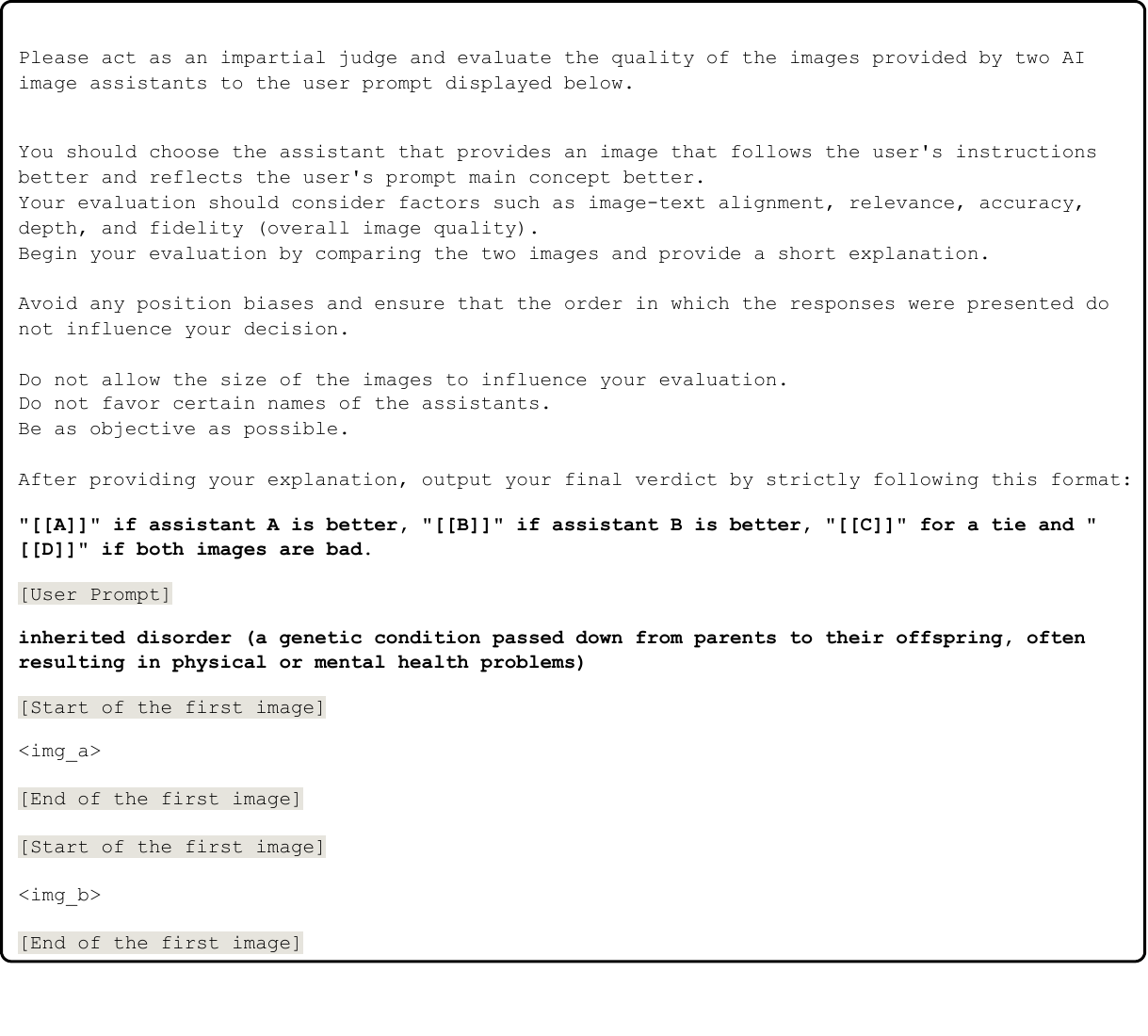}
    \vspace{-1cm}
    \caption{Full prompt example for evaluating text-to-image assistants.}
    \label{fig:fullprompt}
\end{figure}

The prompt for ``gpt-4o-mini'' to generate definitions for TaxoLLaMA3.1 predictions is presented below. 

\lb{chatgpt_dataset}
{}
\begin{verbatim}
Write a definition for the word/phrase in one sentence.

Example:
Word: caddle
Definition: act as a caddie and carry clubs for a player

Word: bichon
Definition:
\end{verbatim}

\newpage
\section{Technical Details} \label{sec:appendix_techniqal}

For text-to-image, we used the recommended generation parameters for each model, the HuggingFace Diffusers library, and a single NVIDIA A100 GPU. All models were utilized in FP16 precision and produced images with resolutions of 512x512 or 1024x1024. Additionally, we experimented with prompting, adding definitions from the WordNet database to help with ambiguity resolution, as this has shown benefits for LLMs in the past \cite{moskvoretskii-etal-2024-large}.

\section{Additional Figures}\label{sec:appendix_images}

In this appendix, we include graphs for our evaluation, which results outlined in the main text.

\begin{figure*}[th!]
  \includegraphics[width=\textwidth]{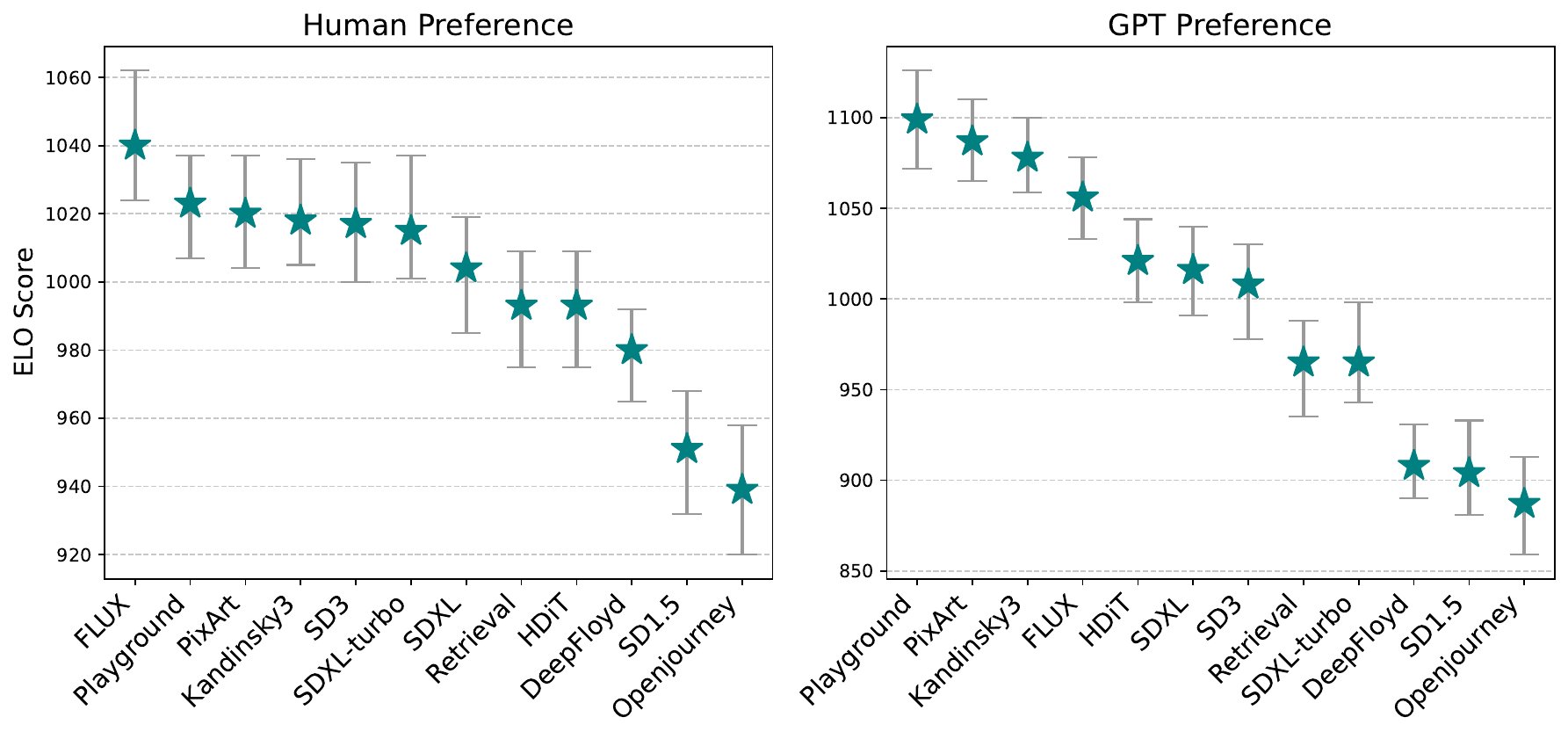}
  \caption{ELO scores for human and GPT4 preferences. Prompt did not include the definition.}
  \label{fig:elo_nodef}
\end{figure*}

\begin{figure*}[th!]
  \centering
  \includegraphics[width=\textwidth]{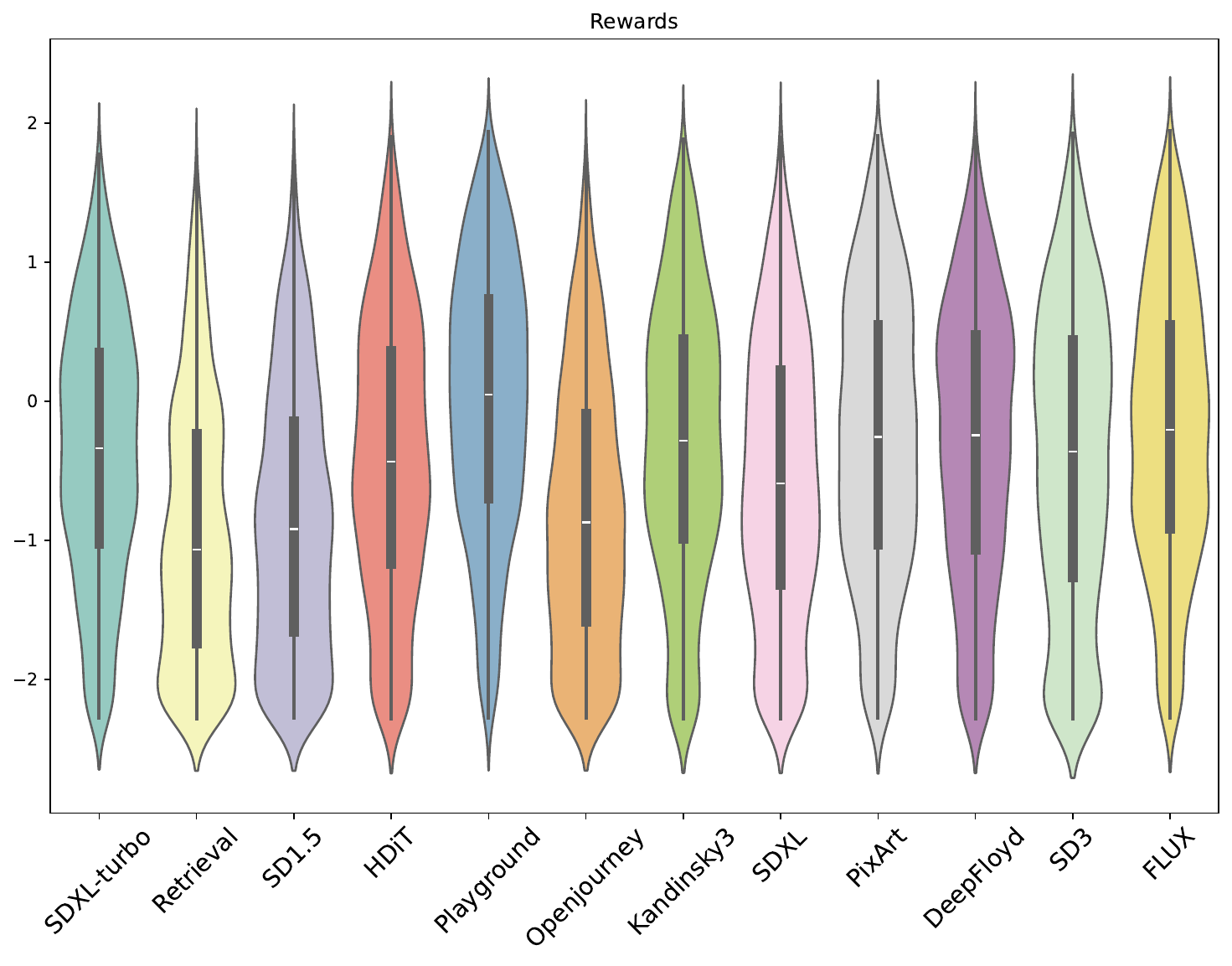}
  \caption{Distribution of rewards for each model, calculated with reward model described in Section~\ref{sec:metrics}}
  \label{fig:reward}
\end{figure*}

\begin{figure*}[ht!]
  \includegraphics[width=\textwidth]{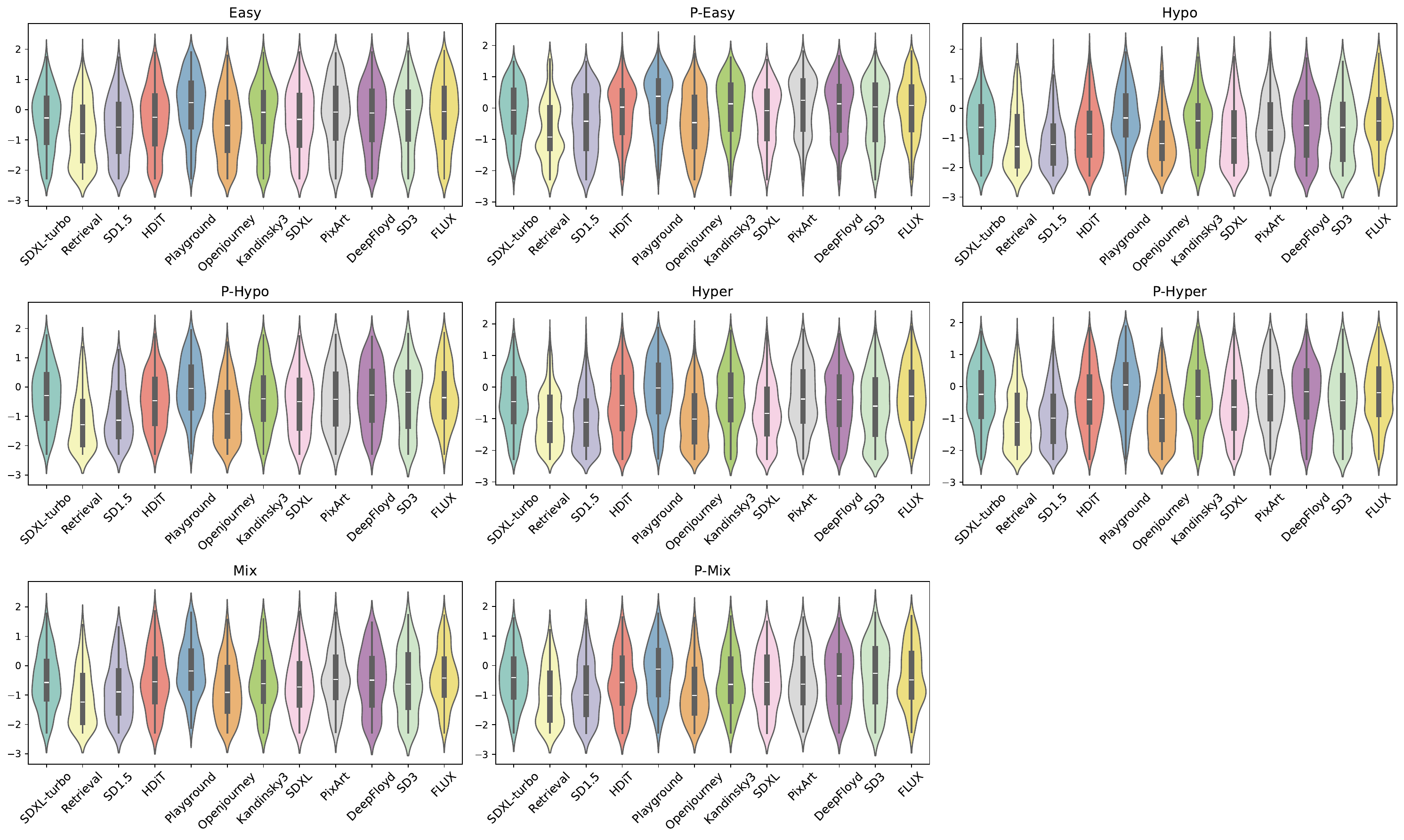}
  \caption{Distribution of rewards for each model across subsets, calculated with reward model described in Section~\ref{sec:metrics}}
  \label{fig:reward_subset}
\end{figure*}

\begin{figure}[ht!]
  \centering
  \includegraphics[width=0.4\textwidth]{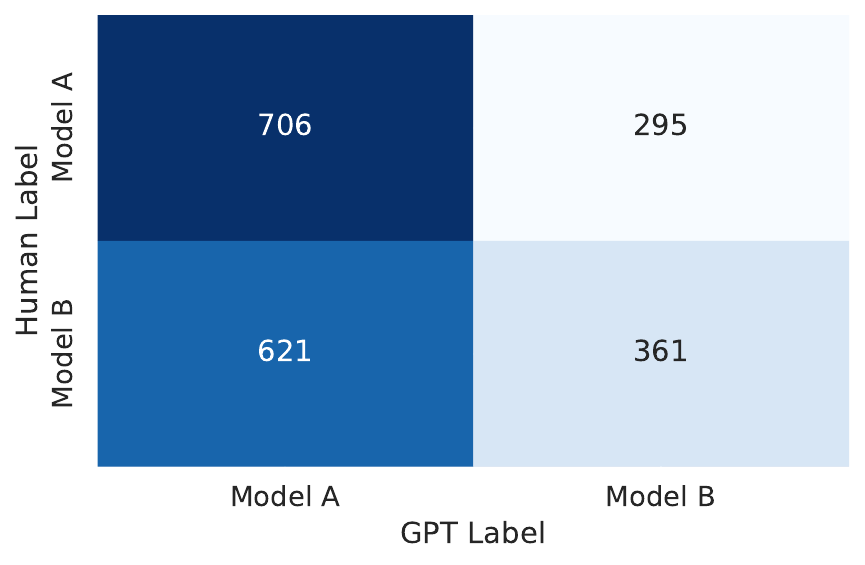}
  \caption{Confusion matrix for human and GPT preferences, excluding Tie labels to avoid distracting the analysis. GPT rarely assigns Ties, with fewer than 20 instances. The prompt included a definition.}
  \label{fig:conf_matrix}
\end{figure}


\clearpage
\section{Additional Tables} \label{sec:appendix_tables}

In this section we provide the detailed tables for every metric evaluated in the paper. 

\paragraph{FID} results are demonstrated in Tables \ref{tab:fid_is_def} and \ref{tab:fid_subsets}.

\paragraph{IS} results are shown in Tables \ref{tab:is_subset} and \ref{tab:fid_is_def}.

\paragraph{Lemma Similarity} results are shown in Table~\ref{table:lemma_clipscores}

\paragraph{Hypernym Similarity} results are shown in Table~\ref{table:hypernym_clipscores}

\paragraph{Cohyponym Similarity} results are shown in Table~\ref{table:cohyponym_clipscores}

\paragraph{Specificity} results are shown in Table~\ref{table:specificity_scores}

\paragraph{Reward model} p-values of Mann-Whitney test on comparing means shown in Table~\ref{table:pval_reward}

\paragraph{ELO Scores} for human and GPT labeling within each subset with and without definitions are shown in Tables \ref{table:elo_gpt_def}, \ref{table:elo_gpt_nodef}, \ref{table:elo_human_def} and \ref{table:elo_human_nodef}.

\begin{table*}
\centering
\resizebox{\textwidth}{!}{
\begin{tabular}{lcccccccc}
\toprule
\textbf{Model} & \multicolumn{4}{c}{\textbf{Ground Truth}} & \multicolumn{4}{c}{\textbf{Predicted}} \\ 
\cmidrule(lr){2-5} \cmidrule(lr){6-9}
 & \textbf{Easy} & \textbf{Hypo} & \textbf{Hyper} & \textbf{Mix} & \textbf{P-Easy} & \textbf{P-Hypo} & \textbf{P-Hyper} & \textbf{P-Mix} \\
\midrule
Playground  &  1125 (+61/-59) &  1139 (+137/-111) &  1148 (+50/-56) &   1066 (+97/-105) &  1072 (+87/-85) &   1095 (+118/-89) &  1141 (+53/-59) &  1047 (+130/-105) \\
FLUX        &  1013 (+65/-78) &  1104 (+153/-151) &  1088 (+48/-50) &   982 (+105/-125) &  1066 (+82/-60) &   967 (+131/-107) &  1025 (+46/-41) &  1096 (+144/-132) \\
PixArt      &  1050 (+43/-67) &  1125 (+181/-100) &  1086 (+60/-40) &   1038 (+104/-80) &  1135 (+82/-66) &   1159 (+143/-95) &  1107 (+47/-49) &  1063 (+101/-136) \\
SDXL        &   960 (+75/-72) &  1113 (+145/-149) &  1056 (+63/-61) &  1063 (+134/-128) &  1061 (+78/-67) &   1112 (+114/-86) &  1050 (+49/-48) &  1010 (+148/-120) \\
HDiT        &   981 (+61/-61) &    955 (+100/-97) &  1004 (+44/-51) &  1053 (+122/-137) &   980 (+78/-59) &   1046 (+138/-86) &  1074 (+52/-61) &   965 (+148/-134) \\
Kandinsky3  &  1010 (+72/-70) &  1035 (+103/-101) &   998 (+51/-55) &    958 (+135/-82) &  1051 (+74/-55) &   999 (+100/-104) &  1043 (+48/-40) &  1005 (+102/-113) \\
Retrieval   &   965 (+81/-76) &    884 (+98/-106) &   979 (+47/-58) &  1014 (+119/-105) &   953 (+65/-72) &   880 (+111/-135) &   995 (+51/-54) &   971 (+138/-137) \\
SD3         &  1056 (+78/-59) &    949 (+118/-99) &   962 (+41/-53) &   983 (+122/-113) &   997 (+49/-59) &  1090 (+123/-120) &   961 (+62/-57) &   1104 (+116/-85) \\
SDXL-turbo  &  1004 (+86/-69) &    999 (+102/-93) &   957 (+49/-48) &   960 (+114/-148) &   917 (+74/-86) &    964 (+82/-117) &   969 (+46/-47) &  1025 (+110/-102) \\
DeepFloyd   &   981 (+53/-63) &     909 (+76/-95) &   943 (+50/-43) &  1053 (+128/-110) &   931 (+56/-64) &   949 (+152/-158) &   941 (+40/-65) &  1036 (+105/-109) \\
Openjourney &   997 (+65/-59) &   849 (+102/-125) &   889 (+56/-62) &    962 (+97/-107) &   987 (+59/-77) &    880 (+87/-162) &   826 (+54/-55) &   825 (+125/-225) \\
SD1.5       &   852 (+69/-90) &   933 (+102/-120) &   885 (+58/-55) &   863 (+110/-115) &   842 (+65/-70) &    853 (+73/-130) &   864 (+48/-64) &   847 (+108/-133) \\
\bottomrule
\end{tabular}
}
\caption{ELO score for GPT Preferences for subsets with definition in input.}
\label{table:elo_gpt_def}
\end{table*}
\begin{table*}
\centering
\resizebox{\textwidth}{!}{
\begin{tabular}{lcccccccc}
\toprule
\textbf{Model} & \multicolumn{4}{c}{\textbf{Ground Truth}} & \multicolumn{4}{c}{\textbf{Predicted}} \\ 
\cmidrule(lr){2-5} \cmidrule(lr){6-9}
 & \textbf{Easy} & \textbf{Hypo} & \textbf{Hyper} & \textbf{Mix} & \textbf{P-Easy} & \textbf{P-Hypo} & \textbf{P-Hyper} & \textbf{P-Mix} \\
\midrule
Playground  &  1091 (+82/-47) &  1110 (+154/-123) &  1116 (+45/-42) &  1049 (+159/-101) &   1069 (+72/-66) &  1136 (+148/-118) &  1127 (+66/-53) &  1093 (+167/-104) \\
PixArt      &  1037 (+77/-71) &  1137 (+110/-105) &  1113 (+49/-50) &   1094 (+103/-90) &   1122 (+86/-73) &  1048 (+122/-111) &  1081 (+45/-46) &  1076 (+123/-102) \\
Kandinsky3  &  1094 (+66/-74) &   1065 (+95/-110) &  1090 (+44/-57) &   1021 (+84/-107) &   1051 (+76/-67) &   1083 (+120/-72) &  1089 (+46/-52) &  1117 (+112/-120) \\
FLUX        &   954 (+48/-65) &  1030 (+103/-113) &  1057 (+55/-57) &  1119 (+135/-106) &  1137 (+109/-64) &  1097 (+122/-100) &  1068 (+58/-46) &  1032 (+132/-128) \\
HDiT        &  1028 (+58/-55) &   954 (+102/-129) &  1040 (+45/-41) &  1039 (+107/-102) &   1014 (+92/-56) &   1055 (+136/-86) &  1028 (+51/-48) &   835 (+118/-177) \\
SDXL        &  1015 (+52/-61) &   939 (+104/-120) &  1029 (+58/-50) &   998 (+137/-156) &   1034 (+63/-64) &   998 (+123/-105) &  1008 (+44/-45) &  1033 (+133/-144) \\
SD3         &  1076 (+66/-76) &   951 (+108/-103) &  1006 (+50/-60) &  1099 (+129/-146) &   1014 (+62/-65) &    969 (+99/-100) &   964 (+34/-56) &   1078 (+124/-83) \\
Retrieval   &   926 (+79/-70) &    996 (+99/-110) &   973 (+53/-43) &  1019 (+127/-109) &    917 (+81/-70) &   959 (+116/-104) &   994 (+49/-45) &   938 (+157/-138) \\
SDXL-turbo  &  1045 (+76/-56) &     969 (+94/-94) &   906 (+56/-54) &   967 (+104/-150) &    950 (+66/-63) &    868 (+80/-146) &   980 (+43/-45) &  1073 (+115/-101) \\
DeepFloyd   &   918 (+65/-63) &    900 (+131/-90) &   903 (+46/-60) &   888 (+108/-112) &    862 (+56/-86) &    976 (+99/-149) &   896 (+54/-61) &   980 (+125/-125) \\
SD1.5       &  870 (+71/-105) &    976 (+95/-134) &   888 (+58/-49) &    925 (+96/-115) &    894 (+69/-75) &     961 (+90/-80) &   905 (+34/-45) &   866 (+107/-187) \\
Openjourney &   940 (+76/-50) &   969 (+108/-135) &   874 (+43/-57) &    774 (+93/-139) &    930 (+60/-70) &    846 (+86/-129) &   854 (+47/-60) &    872 (+91/-154) \\
\bottomrule
\end{tabular}
}
\caption{ELO score for GPT Preferences for subsets with no definition in input.}
\label{table:elo_gpt_nodef}
\end{table*}
\begin{table*}
\centering
\resizebox{\textwidth}{!}{
\begin{tabular}{lcccccccc}
\toprule
\textbf{Model} & \multicolumn{4}{c}{\textbf{Ground Truth}} & \multicolumn{4}{c}{\textbf{Predicted}} \\ 
\cmidrule(lr){2-5} \cmidrule(lr){6-9}
 & \textbf{Easy} & \textbf{Hypo} & \textbf{Hyper} & \textbf{Mix} & \textbf{P-Easy} & \textbf{P-Hypo} & \textbf{P-Hyper} & \textbf{P-Mix} \\
\midrule
DeepFloyd   &  1056 (+45/-38) &    957 (+72/-77) &   956 (+40/-27) &    945 (+89/-85) &  1013 (+60/-62) &    946 (+64/-90) &   993 (+28/-41) &  1027 (+55/-60) \\
Playground  &  1055 (+37/-46) &   1102 (+96/-95) &  1078 (+36/-33) &   1094 (+81/-64) &  1017 (+67/-53) &    994 (+63/-62) &  1075 (+39/-33) &  1019 (+70/-68) \\
FLUX        &  1050 (+48/-44) &  1105 (+128/-73) &  1088 (+37/-38) &  1048 (+104/-97) &  1218 (+77/-70) &   1070 (+74/-78) &  1051 (+29/-34) &  1044 (+90/-78) \\
SDXL-turbo  &  1039 (+45/-46) &    987 (+57/-71) &   980 (+36/-41) &   945 (+89/-136) &  1018 (+53/-56) &   1027 (+69/-71) &  1033 (+36/-37) &  1053 (+79/-80) \\
SD3         &  1033 (+48/-41) &   1003 (+76/-71) &  1002 (+33/-35) &    996 (+58/-86) &   976 (+54/-47) &    976 (+68/-58) &   963 (+35/-32) &   959 (+57/-70) \\
SDXL        &  1015 (+39/-36) &  1033 (+74/-113) &  1050 (+39/-31) &  1089 (+109/-85) &   980 (+54/-52) &   1060 (+73/-68) &  1035 (+32/-32) &   997 (+80/-71) \\
HDiT        &   994 (+32/-43) &    946 (+70/-59) &  1031 (+33/-35) &    988 (+70/-76) &  1034 (+67/-70) &   1015 (+60/-58) &  1019 (+33/-38) &   953 (+80/-77) \\
PixArt      &   990 (+43/-65) &   1027 (+69/-62) &  1082 (+33/-31) &   1050 (+78/-67) &  1030 (+59/-57) &   1052 (+71/-70) &  1036 (+36/-35) &  1075 (+72/-68) \\
Kandinsky3  &   961 (+42/-48) &   1059 (+81/-63) &  1014 (+38/-39) &   1005 (+87/-60) &   999 (+61/-87) &    992 (+51/-71) &  1063 (+40/-30) &   992 (+64/-97) \\
Retrieval   &   960 (+50/-48) &    953 (+90/-83) &   940 (+39/-47) &  990 (+107/-104) &   890 (+62/-62) &  1023 (+111/-63) &   947 (+37/-45) &  1030 (+81/-90) \\
Openjourney &   941 (+45/-41) &    907 (+73/-66) &   885 (+37/-49) &   902 (+73/-102) &   928 (+79/-70) &    906 (+67/-70) &   874 (+42/-42) &   988 (+72/-73) \\
SD1.5       &   900 (+46/-59) &    914 (+72/-61) &   889 (+38/-36) &    942 (+76/-67) &   891 (+58/-70) &    933 (+68/-79) &   904 (+30/-34) &   857 (+61/-81) \\
\bottomrule
\end{tabular}
}
\caption{ELO score for Human Preferences for subsets with definition in input.}
\label{table:elo_human_def}
\end{table*}

 \begin{table*}
\centering
\resizebox{\textwidth}{!}{
\begin{tabular}{lcccccccc}
\toprule
\textbf{Model} & \multicolumn{4}{c}{\textbf{Ground Truth}} & \multicolumn{4}{c}{\textbf{Predicted}} \\ 
\cmidrule(lr){2-5} \cmidrule(lr){6-9}
 & \textbf{Easy} & \textbf{Hypo} & \textbf{Hyper} & \textbf{Mix} & \textbf{P-Easy} & \textbf{P-Hypo} & \textbf{P-Hyper} & \textbf{P-Mix} \\
\midrule
Playground  &  1044 (+46/-33) &   1044 (+70/-68) &  1043 (+28/-30) &  1006 (+66/-67) &  1006 (+62/-55) &   972 (+76/-55) &  1039 (+32/-29) &   973 (+61/-69) \\
PixArt      &   940 (+35/-41) &   1036 (+64/-74) &  1035 (+36/-30) &  1138 (+84/-84) &  1037 (+50/-46) &   954 (+68/-94) &  1020 (+26/-31) &  1041 (+74/-75) \\
Kandinsky3  &   979 (+47/-33) &   1060 (+97/-62) &  1027 (+39/-32) &  1034 (+63/-86) &   978 (+58/-49) &  1016 (+56/-72) &  1045 (+43/-37) &   998 (+73/-80) \\
SD3         &  1048 (+37/-54) &    967 (+67/-69) &  1026 (+32/-31) &  1074 (+80/-76) &  1044 (+69/-51) &   970 (+70/-52) &   996 (+27/-32) &   951 (+68/-76) \\
SDXL        &  1022 (+46/-41) &    948 (+54/-52) &  1017 (+39/-27) &  939 (+114/-83) &  1016 (+44/-49) &  1026 (+92/-60) &   985 (+30/-24) &  1125 (+75/-75) \\
FLUX        &  1011 (+51/-46) &  1015 (+75/-100) &  1008 (+42/-43) &  1012 (+92/-84) &  1144 (+75/-59) &  1102 (+79/-64) &  1043 (+32/-26) &   941 (+64/-83) \\
HDiT        &   964 (+42/-41) &    988 (+61/-87) &  1001 (+28/-34) &   961 (+92/-93) &  1040 (+48/-41) &   949 (+69/-70) &  1001 (+36/-34) &   983 (+63/-88) \\
Retrieval   &  1041 (+49/-49) &   1027 (+79/-72) &   995 (+32/-29) &  1051 (+71/-83) &   893 (+61/-57) &  1093 (+86/-73) &   976 (+37/-38) &  1027 (+96/-84) \\
SDXL-turbo  &   992 (+48/-41) &    989 (+60/-65) &   984 (+37/-37) &   986 (+90/-88) &  1010 (+40/-56) &  1033 (+73/-68) &  1053 (+37/-23) &  1049 (+64/-66) \\
DeepFloyd   &  1000 (+37/-40) &    969 (+62/-54) &   970 (+35/-39) &  1037 (+76/-78) &   959 (+40/-56) &  1013 (+64/-86) &   979 (+34/-31) &   963 (+57/-82) \\
SD1.5       &   968 (+52/-56) &    991 (+73/-69) &   958 (+36/-31) &   918 (+85/-79) &   935 (+48/-51) &   947 (+54/-65) &   948 (+25/-36) &   970 (+71/-63) \\
Openjourney &   983 (+48/-43) &    958 (+63/-52) &   930 (+32/-42) &   838 (+67/-96) &   931 (+63/-53) &   918 (+56/-63) &   908 (+33/-38) &   973 (+83/-95) \\
\bottomrule
\end{tabular}
}
\caption{ELO score for Human Preferences for subsets with no definition in input.}
\label{table:elo_human_nodef}
\end{table*}

\begin{table*}[ht!]
\centering
\begin{tabular}{lcc}
\toprule
Model & Inception Score & FID \\
\midrule
DeepFloyd & 19.6 & 62 \\
Kandinsky3 & 19.4 & 64 \\
PixArt & 19.8 & 73 \\
Playground & 20.9 & 71 \\
Openjourney & 15.4 & 68 \\
SD1.5 & 18.0 & 59 \\
SDXL-turbo & 10.9 & 89 \\
SD3 & 21.2 & 63 \\
HDiT & 18.2 & 67 \\
SDXL & 19.1 & 63 \\
FLUX & 20.9 & 68 \\
Retrieval & 19.1 & - \\
\bottomrule
\end{tabular}
\caption{FID and IS metrics for different models on the full dataset without repetitions}
\end{table*}
\begin{table*}[ht!]
\centering
\begin{tabular}{lcccc}
\toprule
\textbf{Model} & \multicolumn{2}{c}{\textbf{IS}} & \multicolumn{2}{c}{\textbf{FID}} \\
 & \textbf{def} & \textbf{no\_def} & \textbf{def} & \textbf{no\_def} \\
\midrule
DeepFloyd & 19.6 & \cellcolor{red!30}12.1 & 62 & \cellcolor{red!30}131 \\
Kandinsky3 & 19.4 & \cellcolor{red!30}18.2 & 64 & \cellcolor{red!30}75 \\
PixArt & 19.8 & \cellcolor{red!30}18.8 & 73 & 73 \\
Playground & 20.9 & \cellcolor{red!30}20.0 & 71 & \cellcolor{red!30}74 \\
Openjourney & 15.4 & \cellcolor{red!30}13.9 & 68 & \cellcolor{red!30}73 \\
SD1.5 & 18.0 & \cellcolor{red!30}17.5 & 59 & \cellcolor{red!30}60 \\
SDXL-turbo & 10.9 & \cellcolor{green!30}12.1 & 89 & \cellcolor{green!30}65 \\
SD3 & 21.2 & \cellcolor{green!30}23.5 & 63 & \cellcolor{red!30}65 \\
HDiT & 18.2 & \cellcolor{green!30}20.2 & 67 & \cellcolor{red!30}85 \\
SDXL & 19.1 & \cellcolor{green!30}19.6 & 63 & \cellcolor{red!30}68 \\
FLUX & 20.9 & \cellcolor{green!30}22.0 & 68 & \cellcolor{red!30}72 \\
\bottomrule
\end{tabular}
\caption{Comparing FID and IS for datasets with and without definition}
\label{tab:fid_is_def}
\end{table*}
 
\begin{table*}[ht!]
\centering
\begin{tabular}{lcccccccc}
\toprule
\multirow{2}{*}{\textbf{Model}} & \multicolumn{4}{c}{\textbf{Ground Truth}} & \multicolumn{4}{c}{\textbf{Predicted}} \\
 & \makecell{\textbf{Hypo}} & \makecell{\textbf{Hyper}} & \makecell{\textbf{Mix}} & \makecell{\textbf{Easy}} & \makecell{\textbf{Hypo}} & \makecell{\textbf{Hyper}} & \makecell{\textbf{Mix}} & \makecell{\textbf{Easy}} \\
\midrule
DeepFloyd & 8.2 & 9.3 & 6.6 & 12.9 & 8.3 & 11.3 & 6.8 & 7.3 \\
Kandinsky-3 & 7.4 & 10.3 & 6.6 & 12.8 & 7.8 & 11.7 & 7.0 & 8.4 \\
PixArt & 7.6 & 10.5 & 6.6 & 13.5 & 7.6 & 11.2 & 7.1 & 8.6 \\
Playground & 8.6 & 11.0 & 7.0 & 13.3 & 8.0 & 12.0 & 7.8 & 8.3 \\
Openjourney & 6.7 & 8.2 & 6.3 & 12.5 & 6.9 & 9.0 & 6.4 & 7.0 \\
SD1.5 & 7.1 & 9.0 & 6.9 & 12.9 & 7.4 & 9.5 & 6.6 & 8.1 \\
SDXL-turbo & 5.2 & 6.7 & 4.7 & 9.9 & 5.3 & 6.8 & 5.1 & 6.1 \\
SD3 & 8.4 & 9.5 & 7.2 & 13.3 & 8.1 & 12.0 & 7.3 & 8.8 \\
HDiT & 8.0 & 10.1 & 6.8 & 11.2 & 7.3 & 11.8 & 7.0 & 7.7 \\
SDXL & 7.6 & 10.4 & 7.1 & 12.8 & 7.5 & 11.4 & 7.3 & 8.1 \\
FLUX & 8.3 & 10.5 & 7.5 & 12.9 & 8.3 & 12.7 & 7.2 & 8.0 \\
Retrieval & 7.8 & 10.5 & 6.7 & 11.4 & 8.5 & 12.7 & 6.7 & 8.2 \\
\bottomrule
\end{tabular}
\caption{Inception Score per subsets with definitions}
\label{tab:is_subset}
\end{table*}

 \begin{table*}[th!]
\centering
\begin{tabular}{lcccccccc}
\toprule
\multirow{2}{*}{\textbf{Model}} & \multicolumn{4}{c}{\textbf{Ground Truth}} & \multicolumn{4}{c}{\textbf{Predicted}} \\
 & \makecell{\textbf{Hypo}} & \makecell{\textbf{Hyper}} & \makecell{\textbf{Mix}} & \makecell{\textbf{Easy}} & \makecell{\textbf{Hypo}} & \makecell{\textbf{Hyper}} & \makecell{\textbf{Mix}} & \makecell{\textbf{Easy}} \\
\midrule
DeepFloyd & 195 & 134 & 191 & 128 & 220 & 219 & 147 & 193 \\
Kandinsky3 & 203 & 134 & 197 & 127 & 229 & 224 & 155 & 191 \\
PixArt & 203 & 143 & 191 & 134 & 229 & 230 & 163 & 203 \\
Playground & 198 & 143 & 188 & 130 & 234 & 225 & 163 & 196 \\
Openjourney & 200 & 145 & 197 & 127 & 225 & 228 & 159 & 198 \\
SD1.5 & 192 & 135 & 192 & 125 & 229 & 221 & 154 & 186 \\
SDXL-turbo & 207 & 158 & 211 & 146 & 235 & 226 & 170 & 203 \\
SD3 & 193 & 135 & 193 & 133 & 225 & 218 & 153 & 184 \\
HDiT & 201 & 141 & 187 & 126 & 226 & 225 & 160 & 198 \\
SDXL & 192 & 133 & 199 & 129 & 230 & 223 & 163 & 191 \\
FLUX & 163 & 132 & 186 & 139 & 165 & 115 & 182 & 171 \\
\bottomrule
\end{tabular}
\caption{Fréchet Inception Distance Score per subsets with definitions}
\label{tab:fid_subsets}
\end{table*}

\begin{table*}
\centering
\resizebox{\textwidth}{!}{
\begin{tabular}{lcccccccc}
\toprule
\textbf{Model} & \multicolumn{4}{c}{\textbf{Ground Truth}} & \multicolumn{4}{c}{\textbf{Predicted}} \\ 
\cmidrule(lr){2-5} \cmidrule(lr){6-9}
 & \textbf{Easy} & \textbf{Hypo} & \textbf{Hyper} & \textbf{Mix} & \textbf{P-Easy} & \textbf{P-Hypo} & \textbf{P-Hyper} & \textbf{P-Mix} \\ \midrule
HDiT & 0.65 & 0.60 & 0.61 & 0.61 & 0.57 & 0.59 & 0.59 & 0.60 \\ 
Retrieval & 0.63 & 0.56 & 0.57 & 0.60 & 0.54 & 0.57 & 0.56 & 0.60 \\ 
Kandinsky3 & 0.66 & 0.61 & 0.61 & 0.62 & 0.58 & 0.60 & 0.59 & 0.61 \\ 
Openjourney & 0.63 & 0.59 & 0.59 & 0.60 & 0.57 & 0.58 & 0.59 & 0.59 \\ 
DeepFloyd & 0.64 & 0.60 & 0.60 & 0.62 & 0.57 & 0.59 & 0.60 & 0.60 \\ 
SDXL-turbo & 0.67 & 0.62 & 0.62 & 0.63 & 0.59 & 0.61 & 0.60 & 0.61 \\ 
Playground & 0.66 & 0.60 & 0.61 & 0.61 & 0.57 & 0.59 & 0.59 & 0.60 \\ 
SDXL & 0.66 & 0.60 & 0.61 & 0.61 & 0.58 & 0.59 & 0.59 & 0.60 \\ 
PixArt & 0.65 & 0.60 & 0.60 & 0.62 & 0.57 & 0.59 & 0.59 & 0.61 \\ 
SD3 & 0.66 & 0.61 & 0.61 & 0.62 & 0.57 & 0.60 & 0.60 & 0.60 \\ 
SD1.5 & 0.64 & 0.60 & 0.60 & 0.61 & 0.57 & 0.58 & 0.60 & 0.59 \\
FLUX & 0.70 & 0.62 & 0.61 & 0.63 & 0.57 & 0.59 & 0.59 & 0.6 \\

\bottomrule
\end{tabular}
}
\caption{Hypernym CLIPScore Across Different Subsets}
\label{table:hypernym_clipscores}
\end{table*}

\begin{table*}
\centering
\resizebox{\textwidth}{!}{
\begin{tabular}{lcccccccc}
\toprule
\textbf{Model} & \multicolumn{4}{c}{\textbf{Ground Truth}} & \multicolumn{4}{c}{\textbf{Predicted}} \\ 
\cmidrule(lr){2-5} \cmidrule(lr){6-9}
 & \textbf{Easy} & \textbf{Hypo} & \textbf{Hyper} & \textbf{Mix} & \textbf{Easy} & \textbf{Hypo} & \textbf{Hyper} & \textbf{Mix} \\ \midrule
HDiT & 0.73 & 0.66 & 0.67 & 0.68 & 0.72 & 0.67 & 0.68 & 0.68 \\ 
Retrieval & 0.68 & 0.60 & 0.62 & 0.66 & 0.63 & 0.63 & 0.63 & 0.69 \\ 
Kandinsky3 & 0.73 & 0.67 & 0.67 & 0.69 & 0.73 & 0.68 & 0.68 & 0.69 \\ 
Openjourney & 0.71 & 0.66 & 0.66 & 0.68 & 0.71 & 0.67 & 0.66 & 0.69 \\ 
DeepFloyd & 0.71 & 0.67 & 0.66 & 0.69 & 0.70 & 0.67 & 0.66 & 0.69 \\ 
SDXL-turbo & 0.76 & 0.71 & 0.71 & 0.73 & 0.75 & 0.72 & 0.71 & 0.74 \\ 
Playground & 0.74 & 0.67 & 0.68 & 0.70 & 0.72 & 0.69 & 0.68 & 0.70 \\ 
SDXL & 0.73 & 0.67 & 0.67 & 0.69 & 0.73 & 0.69 & 0.68 & 0.70 \\ 
PixArt & 0.72 & 0.66 & 0.67 & 0.69 & 0.72 & 0.67 & 0.67 & 0.68 \\ 
SD3 & 0.73 & 0.68 & 0.67 & 0.70 & 0.72 & 0.69 & 0.68 & 0.70 \\ 
SD1.5 & 0.73 & 0.68 & 0.67 & 0.70 & 0.72 & 0.69 & 0.68 & 0.70 \\
FLUX  & 0.71 & 0.65 & 0.66 & 0.68 & 0.72 & 0.68 & 0.67 & 0.69 \\
\bottomrule
\end{tabular}
}
\caption{Lemma CLIPScore Across Different Subsets}
\label{table:lemma_clipscores}
\end{table*}

\begin{table*}
\centering
\resizebox{\textwidth}{!}{
\begin{tabular}{lcccc}
\toprule
\textbf{Model} & \textbf{CLIP-Score} & \textbf{Hypernym CLIP-Score} & \textbf{Cohyponym CLIP-Score} & \textbf{Specificity} \\ 
\midrule
HDiT & 0.69 & 0.60 & 0.58 & 1.2 \\
Retrieval & 0.64 & 0.57 & 0.56 & 1.16 \\ 
Kandinsky3 & 0.69 & 0.61 & 0.59 & 1.19 \\ 
Openjourney & 0.68 & 0.59 & 0.57 & 1.2 \\ 
DeepFloyd & 0.68 & 0.60 & 0.58 & 1.18 \\ 
SDXL-turbo & \textbf{0.72} & \textbf{0.62} & \textbf{0.60} & 1.23 \\ 
Playground & 0.70 & 0.60 & 0.58 & 1.22 \\
SDXL & 0.69 & 0.60 & 0.58 & 1.2 \\
PixArt & 0.68 & 0.60 & 0.58 & 1.19 \\
SD3 & 0.70 & 0.60 & 0.58 & 1.21 \\
SD1.5 & 0.69 & 0.59 & 0.57 & \textbf{1.23} \\
FLUX & 0.68 & 0.61 & 0.58 & 1.17 \\

\bottomrule
\end{tabular}
}
\caption{Summary of CLIPscore Metrics Across Models}
\label{table:clipscores}
\end{table*}

\begin{table*}
\centering
\resizebox{\textwidth}{!}{
\begin{tabular}{lcccccccc}
\toprule
\textbf{Model} & \multicolumn{4}{c}{\textbf{Ground Truth}} & \multicolumn{4}{c}{\textbf{Predicted}} \\ 
\cmidrule(lr){2-5} \cmidrule(lr){6-9}
 & \textbf{Easy} & \textbf{Hypo} & \textbf{Hyper} & \textbf{Mix} & \textbf{P-Easy} & \textbf{P-Hypo} & \textbf{P-Hyper} & \textbf{P-Mix} \\ \midrule
{HDiT}         & 1.21  & 1.14  & 1.15  & 1.16  & 1.34  & 1.18  & 1.18  & 1.18  \\ 
{Retrieval}     & 1.17  & 1.11  & 1.13  & 1.14  & 1.24  & 1.15  & 1.15  & 1.18  \\ 
{Kandinsky3}    & 1.21  & 1.14  & 1.15  & 1.16  & 1.32  & 1.18  & 1.19  & 1.18  \\ 
{Openjourney}   & 1.22  & 1.16  & 1.15  & 1.17  & 1.32  & 1.18  & 1.18  & 1.21  \\ 
{DeepFloyd}     & 1.18  & 1.16  & 1.14  & 1.15  & 1.29  & 1.16  & 1.16  & 1.18  \\ 
{SDXL-turbo}    & 1.23  & 1.18  & 1.18  & 1.20  & 1.34  & 1.22  & 1.22  & 1.24  \\ 
{Playground}    & 1.24  & 1.16  & 1.17  & 1.20  & 1.34  & 1.21  & 1.20  & 1.21  \\ 
{SDXL}          & 1.22  & 1.15  & 1.15  & 1.18  & 1.33  & 1.20  & 1.19  & 1.20  \\ 
{PixArt}        & 1.20  & 1.14  & 1.15  & 1.16  & 1.34  & 1.17  & 1.18  & 1.17  \\ 
{SD3}           & 1.23  & 1.17  & 1.16  & 1.19  & 1.34  & 1.20  & 1.20  & 1.20  \\ 
{SD1.5}         & 1.24  & 1.19  & 1.18  & 1.20  & 1.34  & 1.22  & 1.21  & 1.23  \\ 
FLUX & 1.14 & 1.10 & 1.12 & 1.13 & 1.32 & 1.18 & 1.18 & 1.20\\


\bottomrule
\end{tabular}
}
\caption{Specificity Scores Across Different Models and Subsets}
\label{table:specificity_scores}
\end{table*}

\begin{table*}
\centering
\resizebox{\textwidth}{!}{
\begin{tabular}{lcccccccc}
\toprule
\textbf{Model} & \multicolumn{4}{c}{\textbf{Ground Truth}} & \multicolumn{4}{c}{\textbf{Predicted}} \\ 
\cmidrule(lr){2-5} \cmidrule(lr){6-9}
 & \textbf{Easy} & \textbf{Hypo} & \textbf{Hyper} & \textbf{Mix} & \textbf{P-Easy} & \textbf{P-Hypo} & \textbf{P-Hyper} & \textbf{P-Mix} \\ \midrule
HDiT & 0.61 & 0.59 & 0.58 & 0.60 & 0.54 & 0.57 & 0.58 & 0.58 \\ 
Retrieval & 0.59 & 0.55 & 0.55 & 0.59 & 0.51 & 0.56 & 0.55 & 0.58 \\ 
Kandinsky3 & 0.61 & 0.59 & 0.59 & 0.61 & 0.55 & 0.58 & 0.59 & 0.59 \\ 
Openjourney & 0.59 & 0.58 & 0.57 & 0.59 & 0.54 & 0.57 & 0.57 & 0.57 \\ 
DeepFloyd & 0.61 & 0.58 & 0.58 & 0.61 & 0.55 & 0.57 & 0.58 & 0.59 \\ 
SDXL-turbo & 0.62 & 0.60 & 0.60 & 0.62 & 0.56 & 0.59 & 0.60 & 0.60 \\ 
Playground & 0.60 & 0.58 & 0.58 & 0.60 & 0.54 & 0.57 & 0.58 & 0.58 \\ 
SDXL & 0.61 & 0.58 & 0.58 & 0.60 & 0.55 & 0.58 & 0.58 & 0.60 \\ 
PixArt & 0.60 & 0.59 & 0.58 & 0.61 & 0.54 & 0.57 & 0.58 & 0.59 \\ 
SD3 & 0.61 & 0.59 & 0.58 & 0.60 & 0.54 & 0.58 & 0.58 & 0.59 \\ 
SD1.5 & 0.60 & 0.58 & 0.57 & 0.60 & 0.54 & 0.56 & 0.57 & 0.57 \\
FLUX & 0.63 & 0.59 & 0.59 & 0.61 & 0.54 & 0.57 & 0.57 & 0.58 \\

\bottomrule
\end{tabular}
}
\caption{Cohyponym CLIPScore Across Different Subsets}
\label{table:cohyponym_clipscores}
\end{table*}

\begin{table*}
\centering
\resizebox{\textwidth}{!}{
\begin{tabular}{l|cccccccccccc}
\toprule
 & SDXL-turbo & Retrieval & SD1.5 & HDiT & Playground & Openjourney & Kandinsky3 & SDXL & PixArt & DeepFloyd & SD3 & FLUX \\ 
 
 \midrule
SDXL-turbo   & 0          & 0         & 0     & 0.003 & 0          & 0           & 0.036      & 0    & 0      & 0.027     & 0.029 & 0\\ 
Retrieval    & 0          & 0         & 0     & 0     & 0          & 0           & 0          & 0    & 0      & 0         & 0   & 0  \\ 
SD1.5        & 0          & 0         & 0     & 0     & 0          & 0.037       & 0          & 0    & 0      & 0         & 0   & 0  \\ 
HDiT         & 0.003      & 0         & 0     & 0     & 0          & 0           & 0          & 0    & 0      & 0         & 0.702& 0 \\ 
Playground   & 0          & 0         & 0     & 0     & 0          & 0           & 0          & 0    & 0      & 0         & 0   & 0  \\ 
Openjourney  & 0          & 0         & 0.037 & 0     & 0          & 0           & 0          & 0    & 0      & 0         & 0   & 0  \\ 
Kandinsky3   & 0.036      & 0         & 0     & 0     & 0          & 0           & 0          & 0    & 0.167  & 0.95      & 0    & 0 \\ 
SDXL         & 0          & 0         & 0     & 0     & 0          & 0           & 0          & 0    & 0      & 0         & 0    & 0 \\ 
PixArt       & 0          & 0         & 0     & 0     & 0          & 0           & 0.167      & 0    & 0      & 0.14      & 0   & 0  \\ 
DeepFloyd    & 0.027      & 0         & 0     & 0     & 0          & 0           & 0.95       & 0    & 0.14   & 0         & 0   & 0  \\ 
SD3          & 0.029      & 0         & 0     & 0.702 & 0          & 0           & 0          & 0    & 0      & 0         & 0   & 0  \\ 
FLUX  & 0          & 0         & 0     & 0     & 0          & 0           & 0          & 0    & 0      & 0         & 0   & 0  \\

\bottomrule
\end{tabular}
}
\caption{P-value of Mann-Whitney mean differences test in rewards for models. Values below $0.000$ are marked as 0. To identify the side of difference, refer to the violin plot in Figure~\ref{fig:reward}}
\label{table:pval_reward}
\end{table*}

\clearpage
\section{Models' Mistake Analysis} \label{sec:appendix_vlm_mistakes}

This analysis is made for generation without definitions.

All models struggle with depicting

a. abstract concepts;

b. nonfrequent and specific words ("orifice.n.01" with the lemma "rima");

c. notions of people with specific functional role ("holder.n.02" with the lemma "holder", for example).
\bigskip

Abstract concepts are handled with the following:
\begin{enumerate}
\item{Text in images (although not all models succeed in writing);}
\begin{figure}[h!]
\centering
    \begin{subfigure}[b]{0.3\textwidth}
        \includegraphics[width=\textwidth]{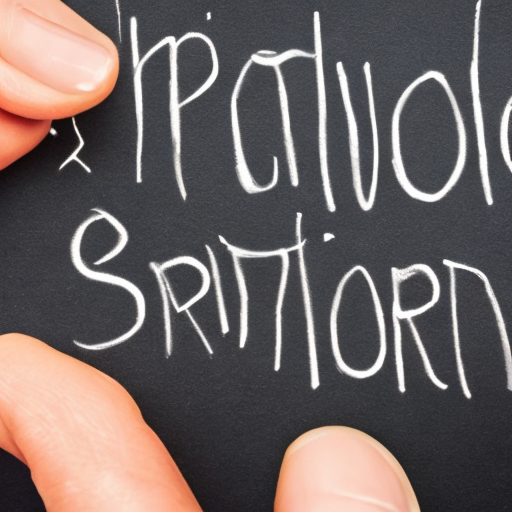}
        \caption{Openjourney, prevention.n.01, prevention}
    \end{subfigure}
    \begin{subfigure}[b]{0.3\textwidth}
    \includegraphics[width=\textwidth]{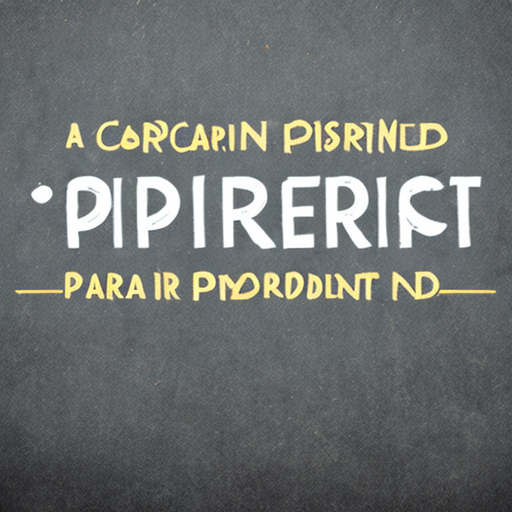}
        \caption{Openjourney, corporal\underline{ } punishment.n.01,  corporal punishment}
    \end{subfigure}
    \begin{subfigure}[b]{0.3\textwidth}
        \includegraphics[width=\textwidth]{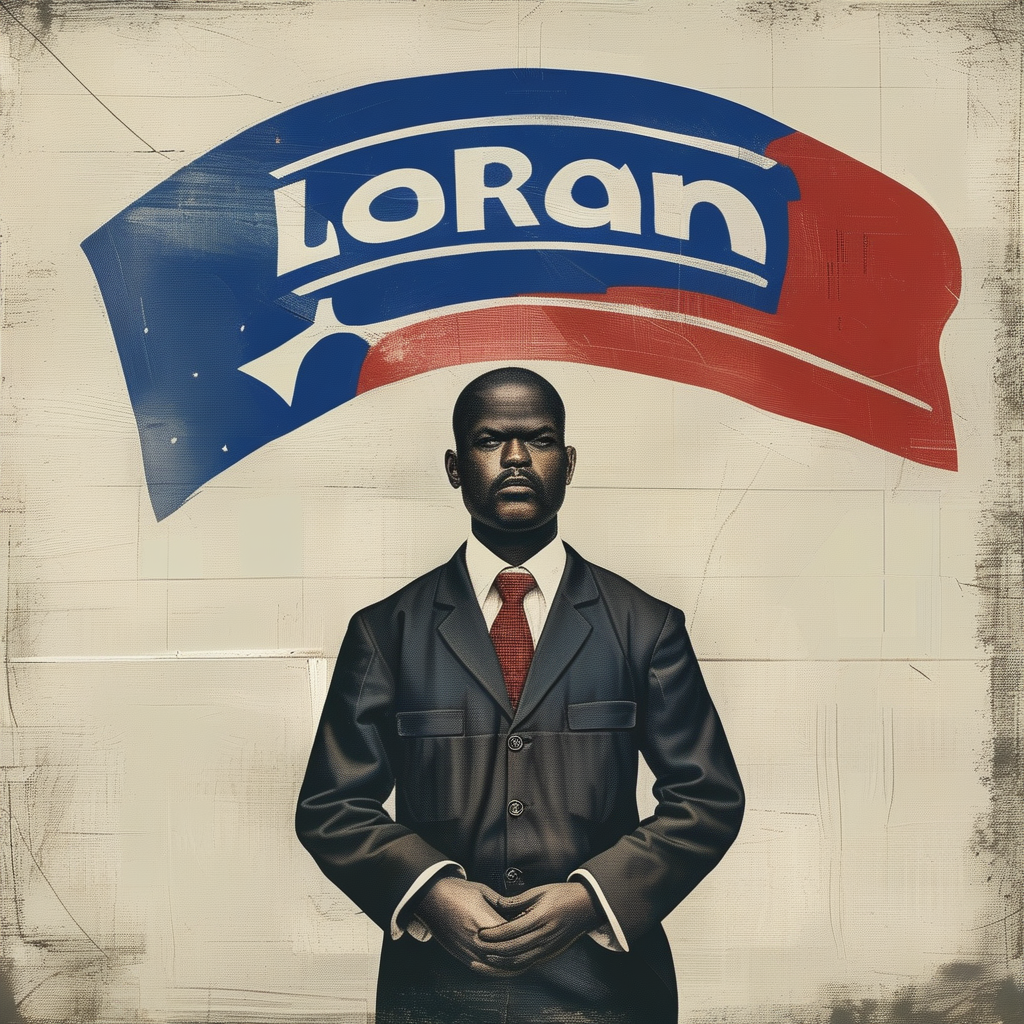}
        \caption{HDit, reform\underline{ }movement.n.01, labor union}
    \end{subfigure}
    \\
    \begin{subfigure}[b]{0.3\textwidth}
        \includegraphics[width=\textwidth]
    {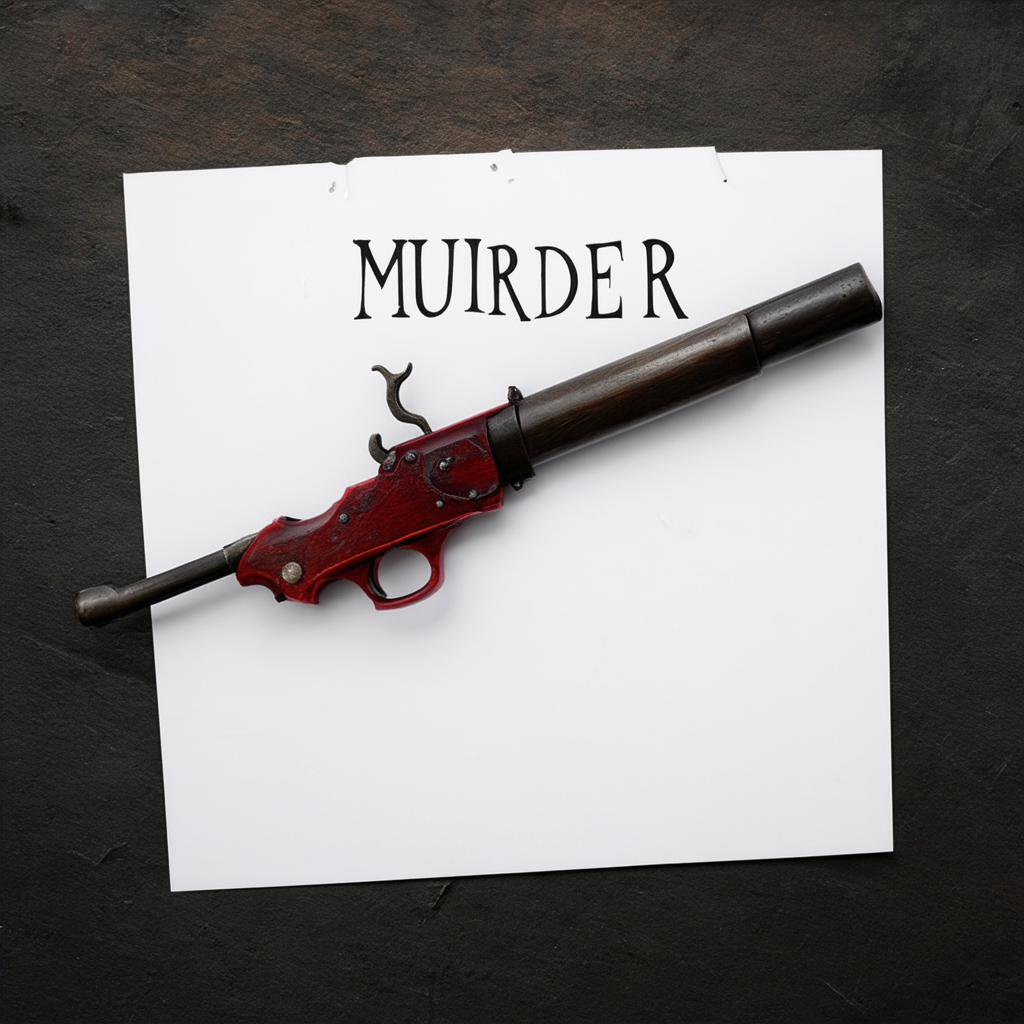}
    \caption{SD3, murder.n.01, murder}
    \end{subfigure}
    \begin{subfigure}[b]{0.3\textwidth}
        \includegraphics[width=\textwidth]{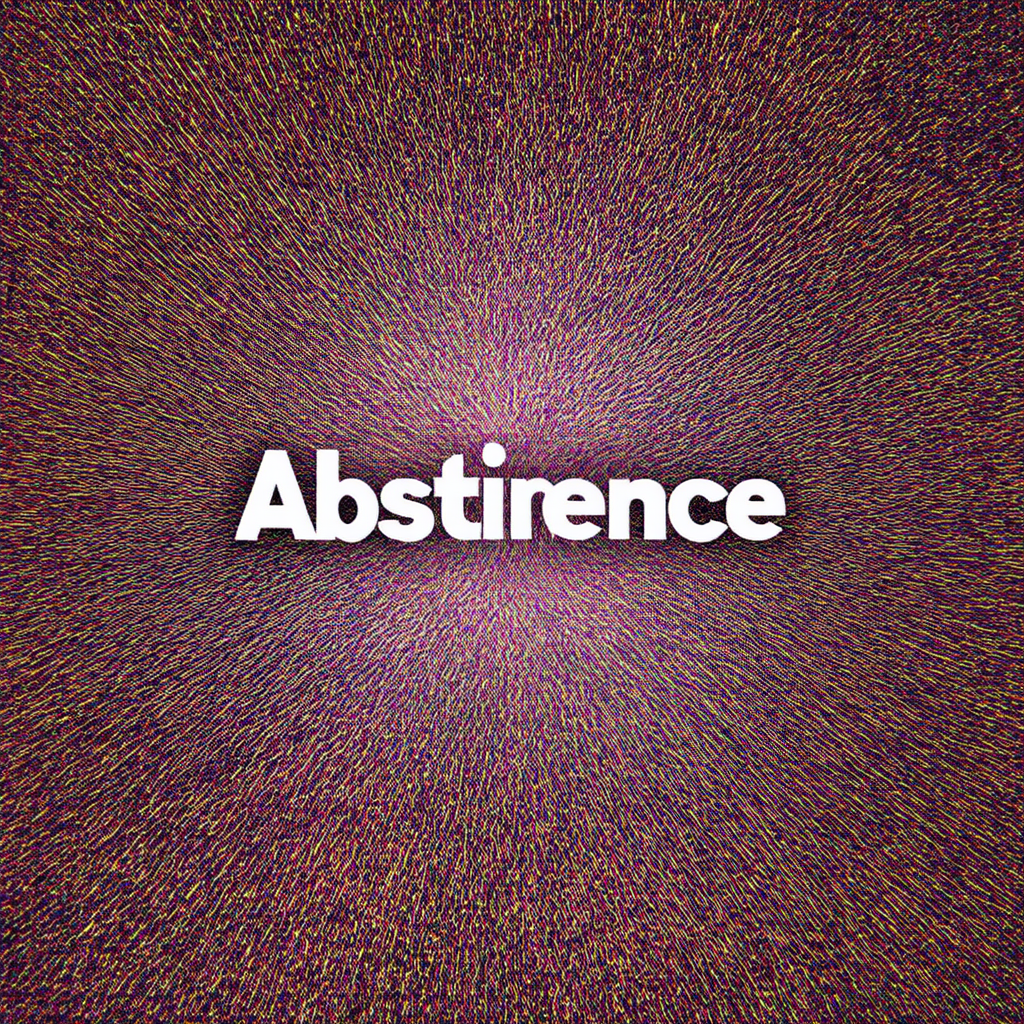}
        \caption{SD3, abstinence.n.02, abstinence}
    \end{subfigure}
    \begin{subfigure}[b]{0.3\textwidth}
        \includegraphics[width=\textwidth]{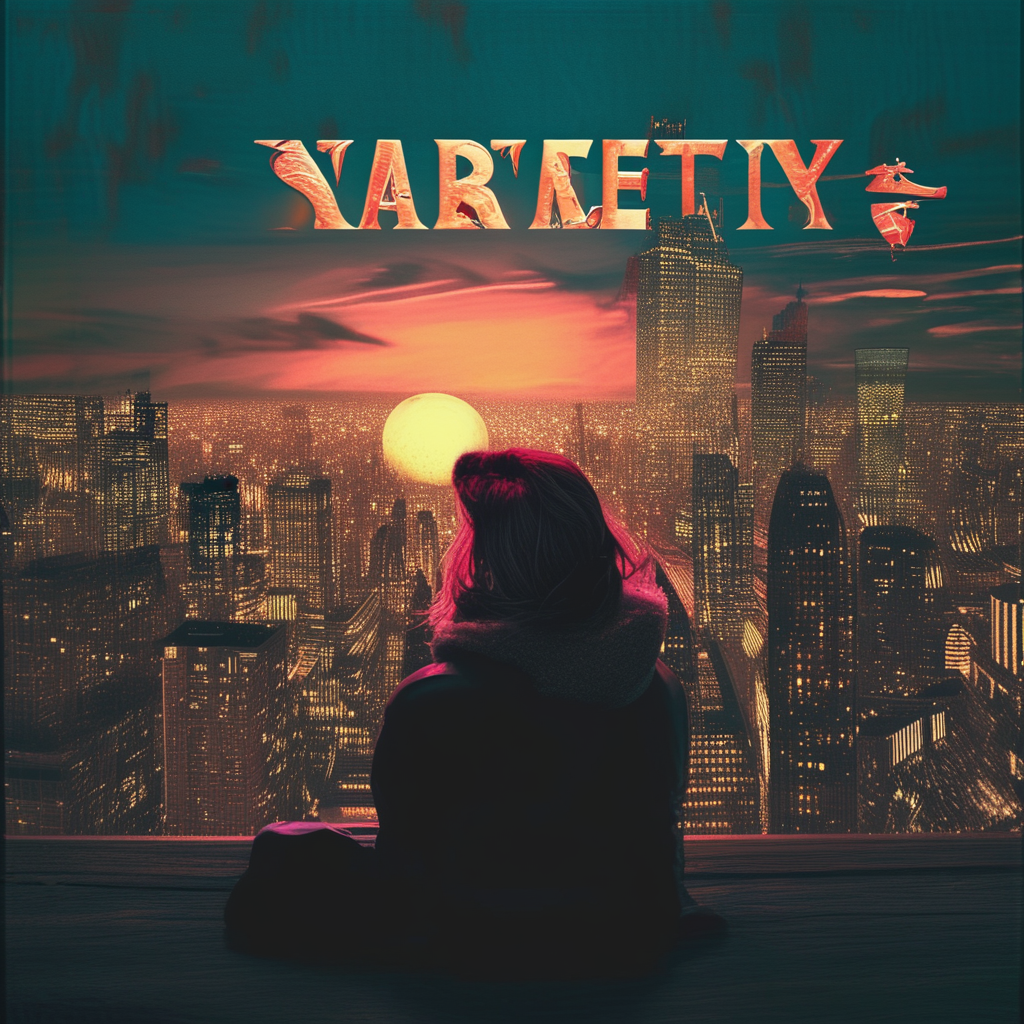}
        \caption{PixArt, broadcast.n.01, headline}
    \end{subfigure}
    \caption{Text in images}
\end{figure}

\item{Abstract images;}
\begin{figure}[h!]
\centering
    \begin{subfigure}[b]{0.3\textwidth}
        \includegraphics[width=\textwidth]{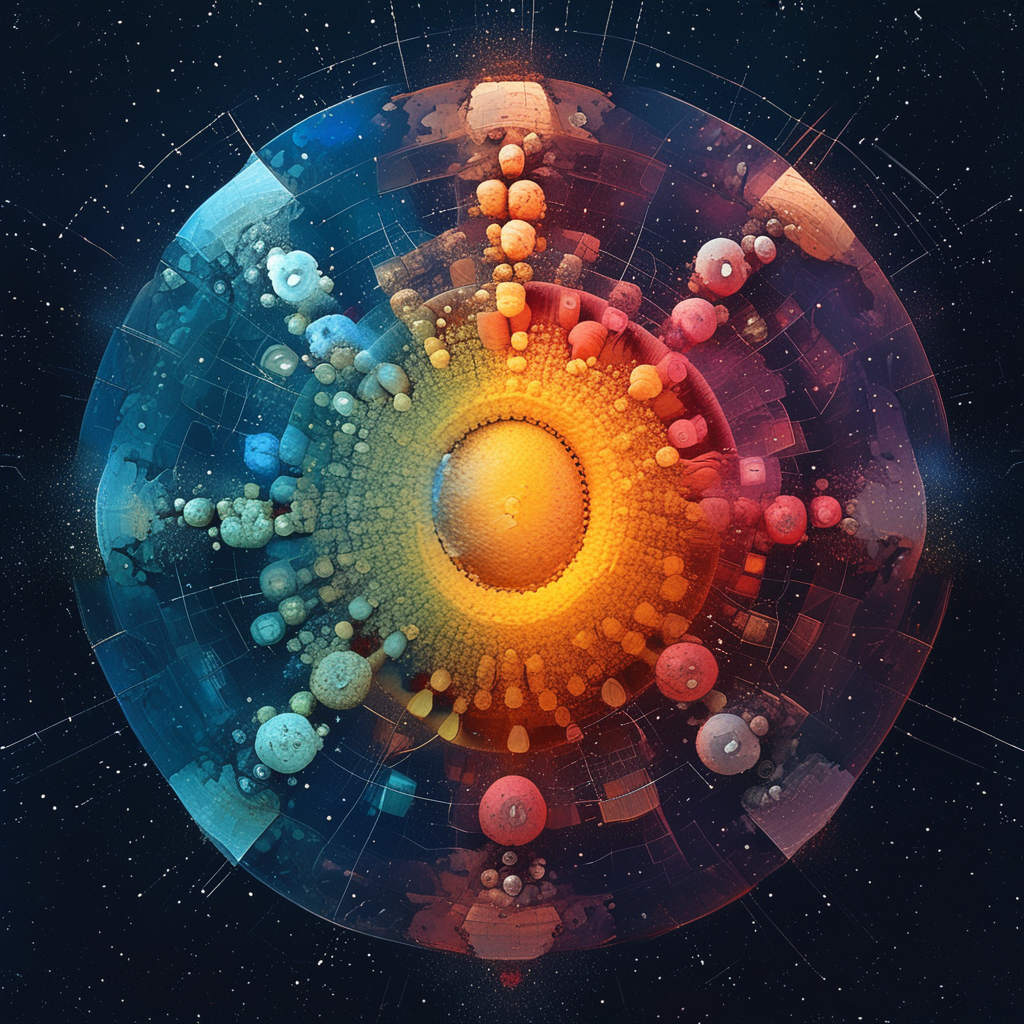}
        \caption{PixArt, binomial.n.01, binomial}
    \end{subfigure}
    \begin{subfigure}[b]{0.3\textwidth}
        \includegraphics[width=\textwidth]{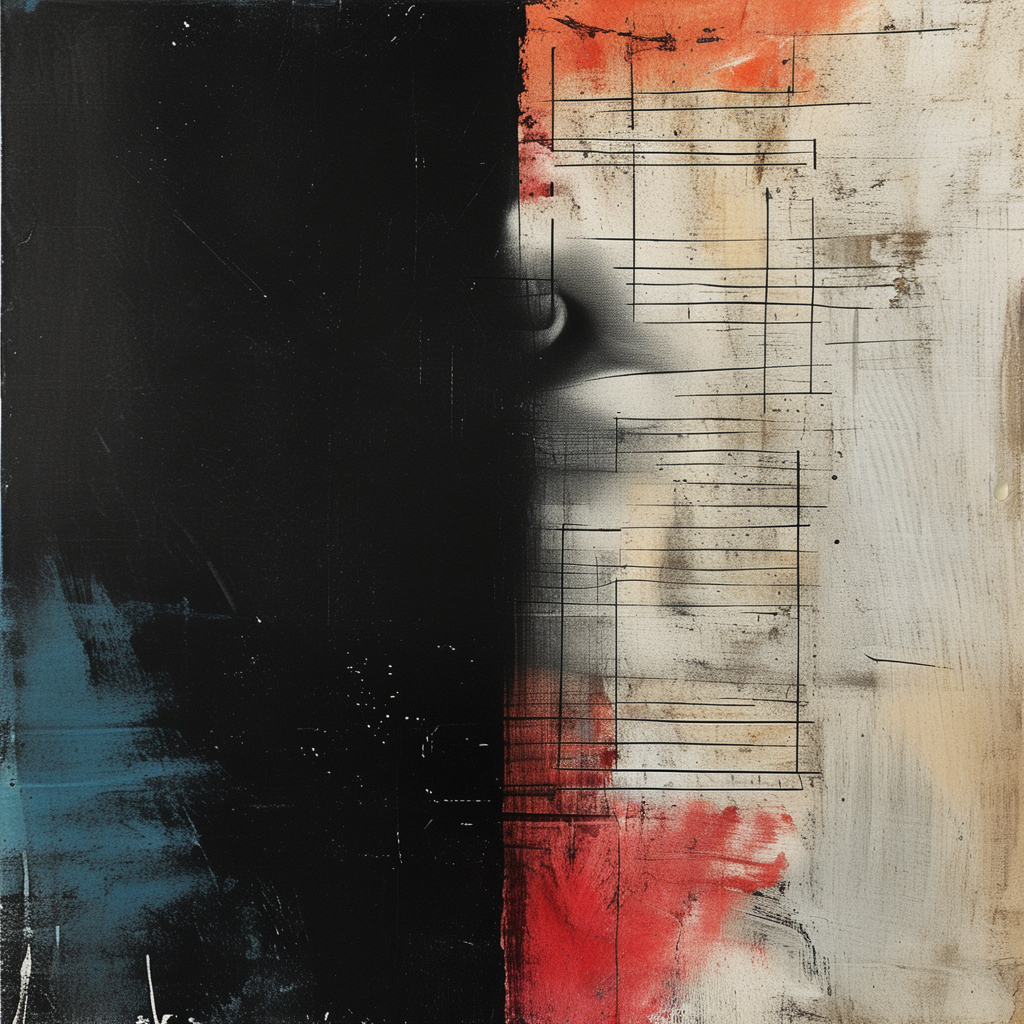}
        \caption{PixArt, statement.n.01, statement}
    \end{subfigure}
    \begin{subfigure}[b]{0.3\textwidth}
        \includegraphics[width=\textwidth]{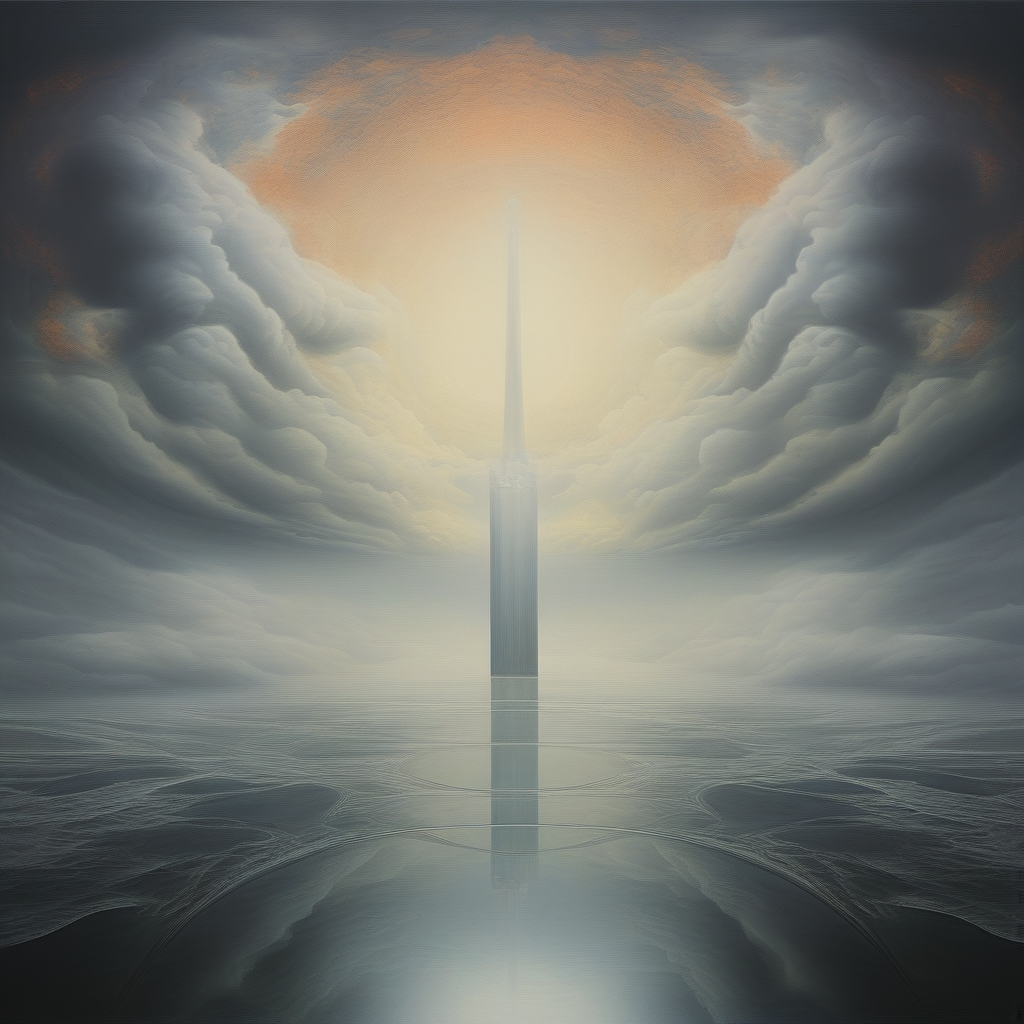}
        \caption{HDiT, disrespect.n.01, obscenity}
    \end{subfigure}
    \\
    \begin{subfigure}[b]{0.3\textwidth}
        \includegraphics[width=\textwidth]{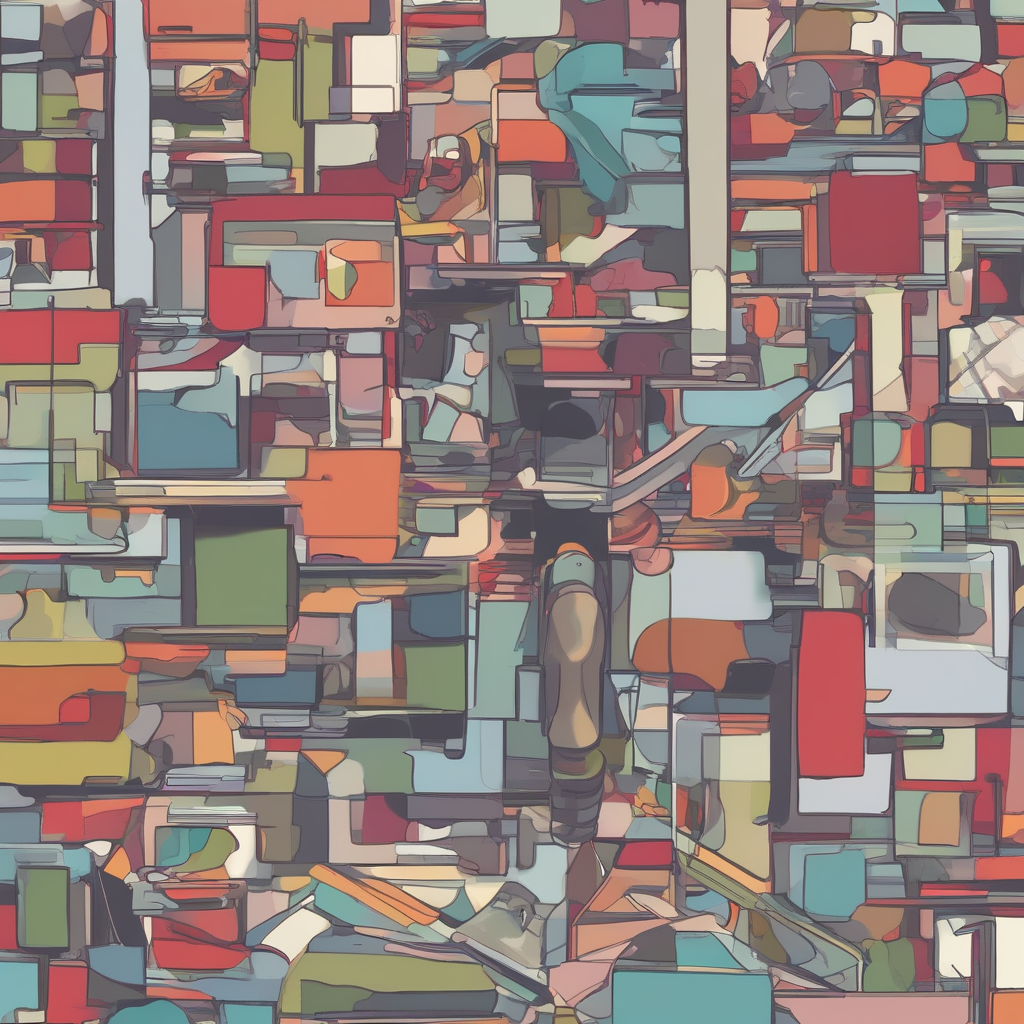}
        \caption{SDXL, clearance.n.03, clearance}
    \end{subfigure}
    \begin{subfigure}[b]{0.3\textwidth}
        \includegraphics[width=\textwidth]{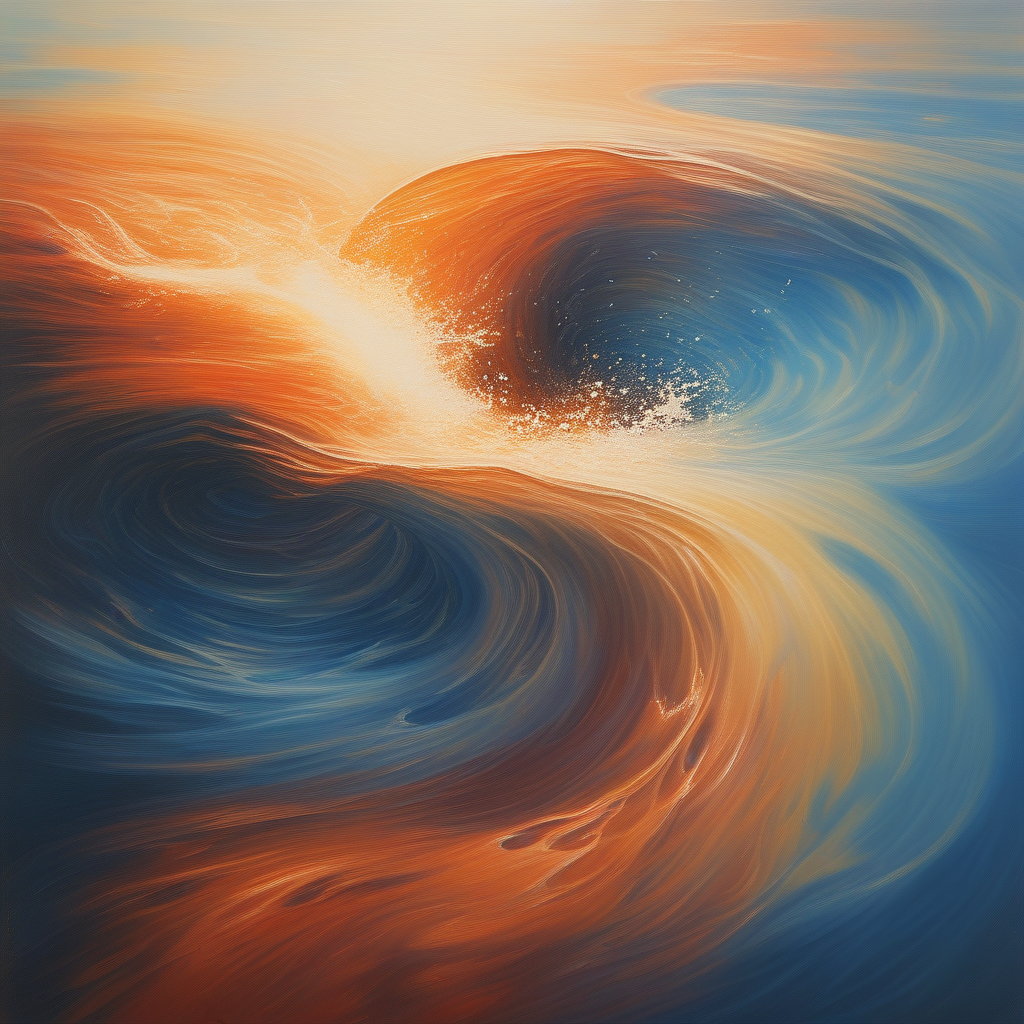}
        \caption{HDiT, financial\underline{ }gain.n.01, flow}
    \end{subfigure}
    \begin{subfigure}[b]{0.3\textwidth}
        \includegraphics[width=\textwidth]{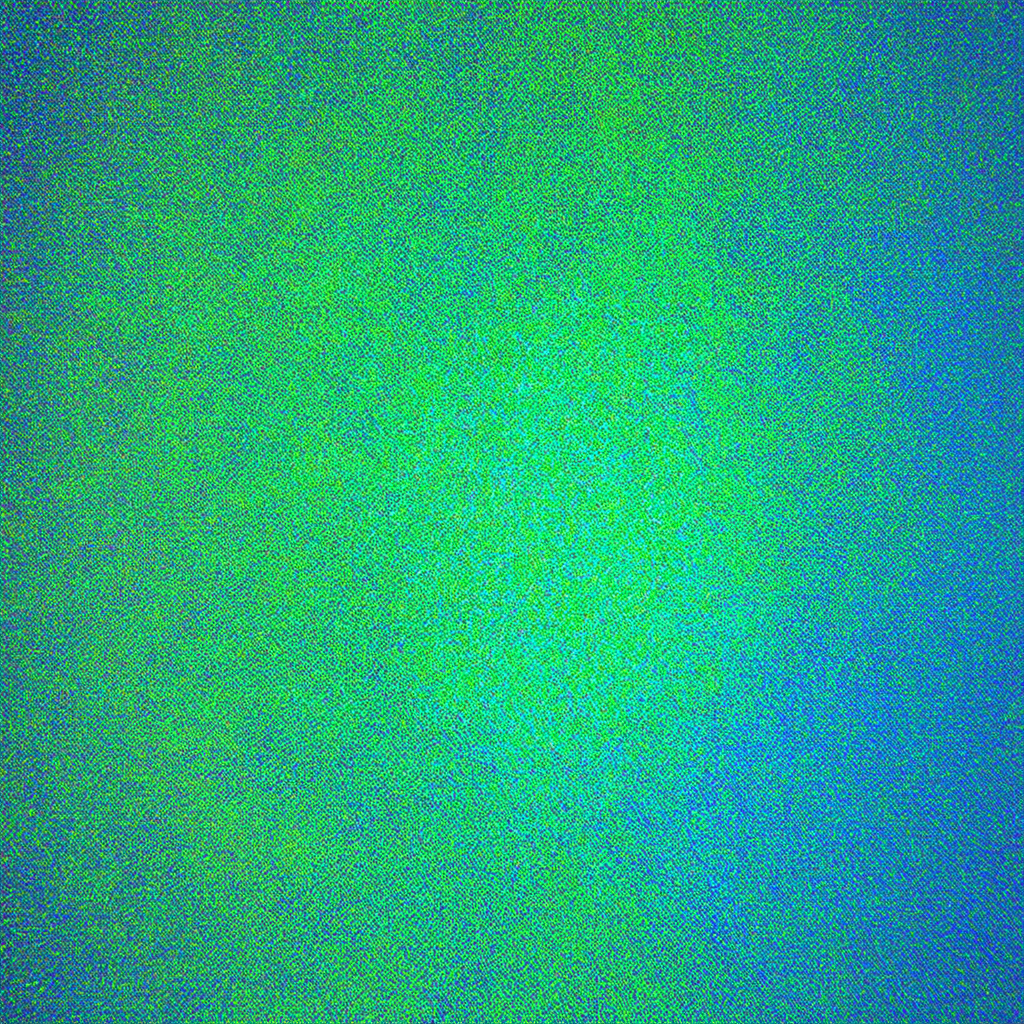}
        \caption{SD3, \scriptsize{somesthesia.n.02, somesthesia}}
    \end{subfigure}
    \caption{Abstract images}
\end{figure}
\end{enumerate}

Other unwanted behaviors for the purposes of illustrating taxonomies include
\begin{enumerate}
\item{Generating playing cards for the concepts (most seen in Openjourney, also present in SD1.5);}
\begin{figure}[h!]
\centering
    \begin{subfigure}[b]{0.3\textwidth}
        \includegraphics[width=\textwidth]{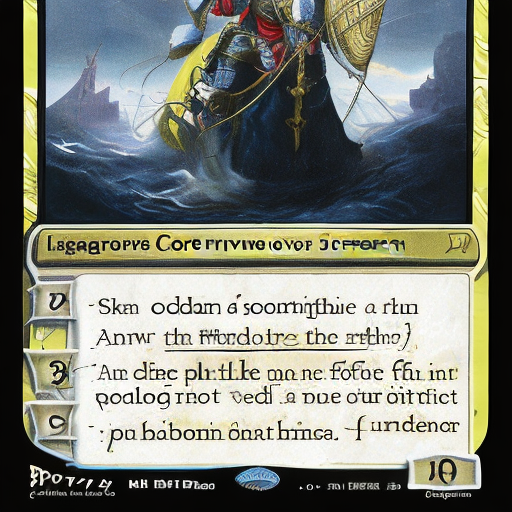}
        \caption{Openjourney, sovereign.n.01, sovereign}
    \end{subfigure}
    \begin{subfigure}[b]{0.3\textwidth}
        \includegraphics[width=\textwidth]{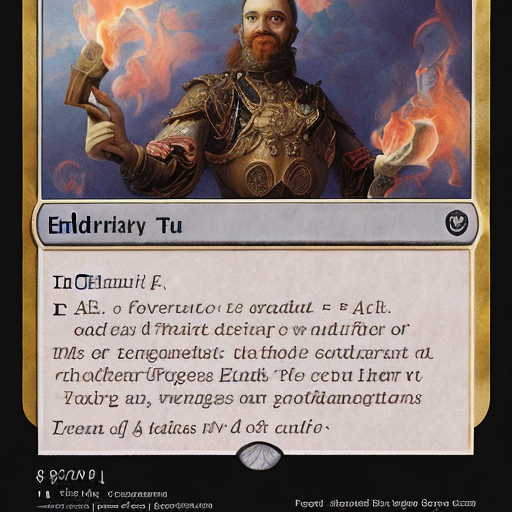}
        \caption{Openjourney, enthusiast.n.01, enthusiast}
    \end{subfigure}
    \begin{subfigure}[b]{0.3\textwidth}
        \includegraphics[width=\textwidth]{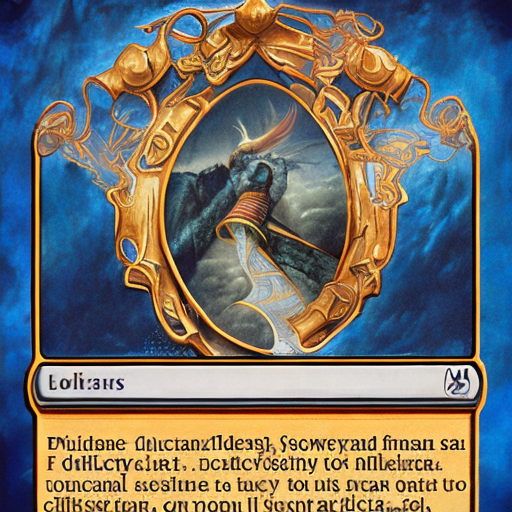}
        \caption{Openjourney, policyholder.n.01, policyholder}
    \end{subfigure}
    \\
    \begin{subfigure}[b]{0.3\textwidth}
        \includegraphics[width=\textwidth]{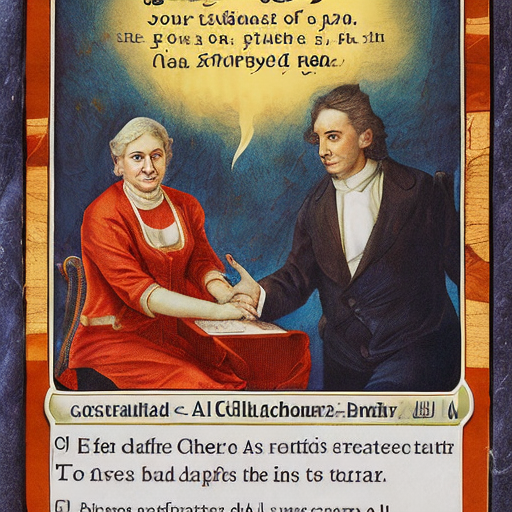}
        \caption{Openjourney, agreement.n.01, agreement}
    \end{subfigure}
    \begin{subfigure}[b]{0.3\textwidth}
        \includegraphics[width=\textwidth]{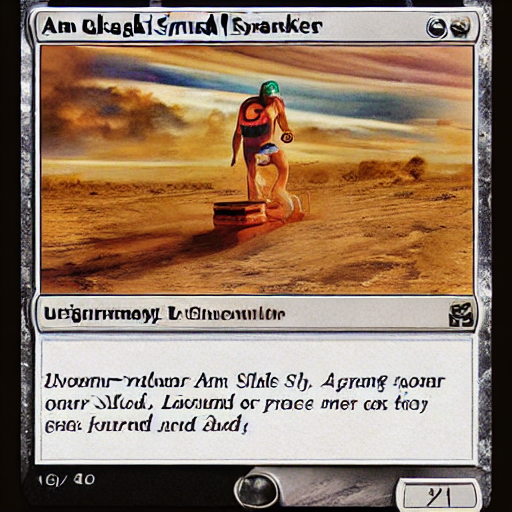}
        \caption{Openjourney, ground-shaker.n.01, ground-shaker}
    \end{subfigure}
    \begin{subfigure}[b]{0.3\textwidth}
        \includegraphics[width=\textwidth]{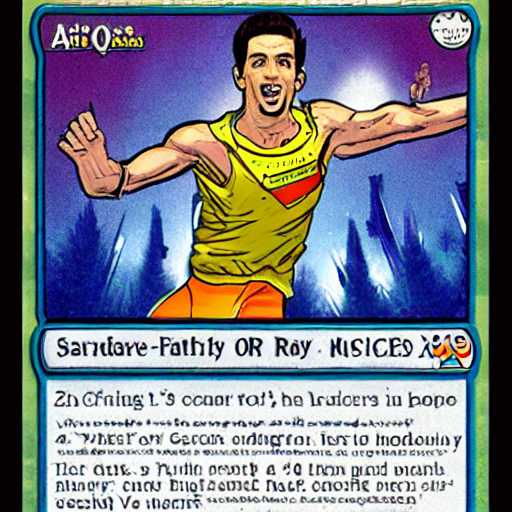}
        \caption{SD1.5, finisher.n.01, finisher}
    \end{subfigure}
    \caption{Playing cards in Openjourney and SD1.5}
\end{figure}


\item{Abstract ornamental circles (also most found in Openjourne, and some in SD1.5).}
\begin{figure}[h!]
\centering
    \begin{subfigure}[b]{0.3\textwidth}
        \includegraphics[width=\textwidth]{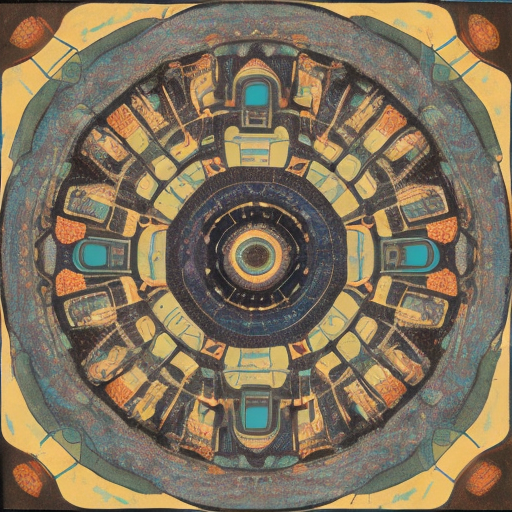}
        \caption{Openjourney, object.n.01, object}
    \end{subfigure}
    \begin{subfigure}[b]{0.3\textwidth}
        \includegraphics[width=\textwidth]{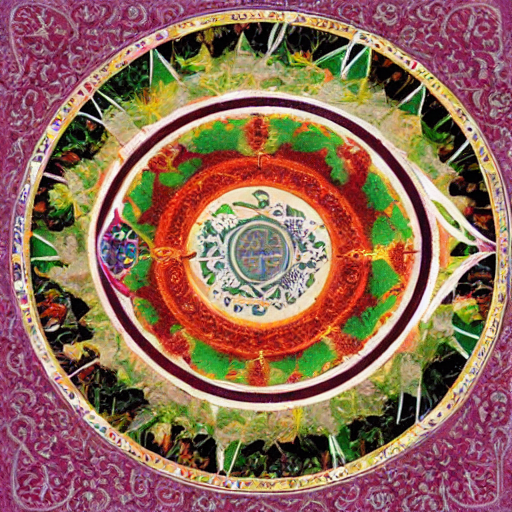}
        \caption{SD1.5, salat.n.01, salat}
    \end{subfigure}
    \begin{subfigure}[b]{0.3\textwidth}
        \includegraphics[width=\textwidth]{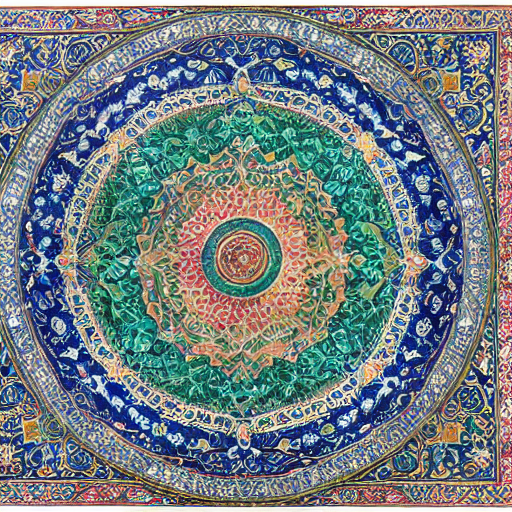}
        \caption{SD1.5, person.n.01, iranian}
    \end{subfigure}
    \\
    \begin{subfigure}[b]{0.3\textwidth}
        \includegraphics[width=\textwidth]{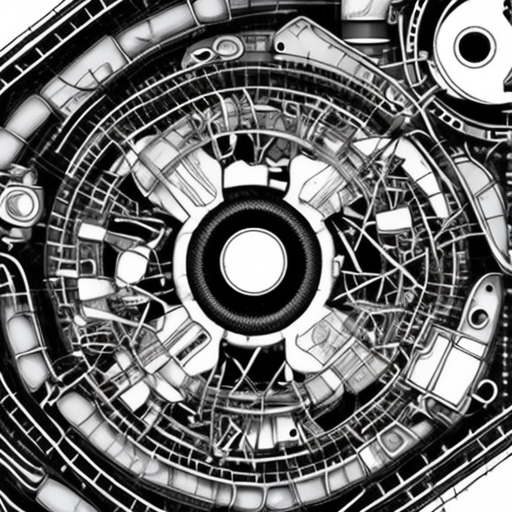}
        \caption{Openjourney, engineering.n.03, engineering}
    \end{subfigure}
    \begin{subfigure}[b]{0.3\textwidth}
        \includegraphics[width=\textwidth]{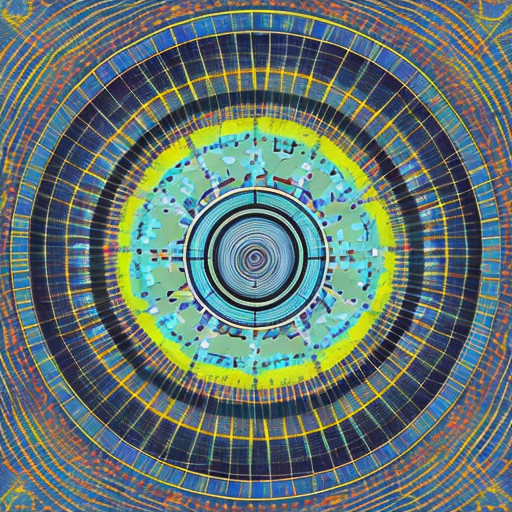}
        \caption{Openjourney, lingt\underline{}pen.n.01, light pen}
    \end{subfigure}
    \begin{subfigure}[b]{0.3\textwidth}
        \includegraphics[width=\textwidth]{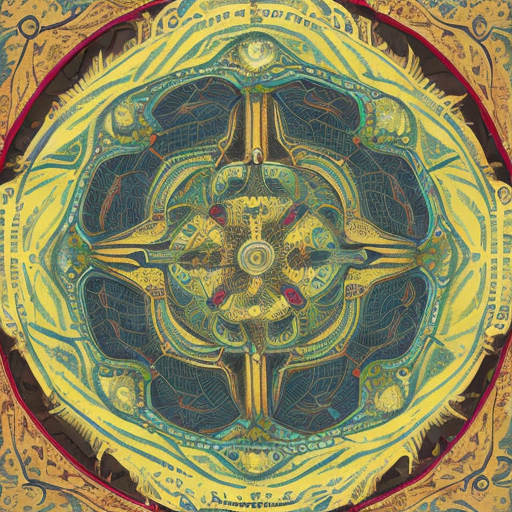}
        \caption{Openjourney, organization.n.01, baptist association}
    \end{subfigure}
    \caption{Abstract ornamental circles}
\end{figure}

\item{Depicturing monsters when facing rare animal names (seen in Kandinsky3).}
\begin{figure}[h!]
\centering
    \begin{subfigure}[b]{0.3\textwidth}
        \includegraphics[width=\textwidth]{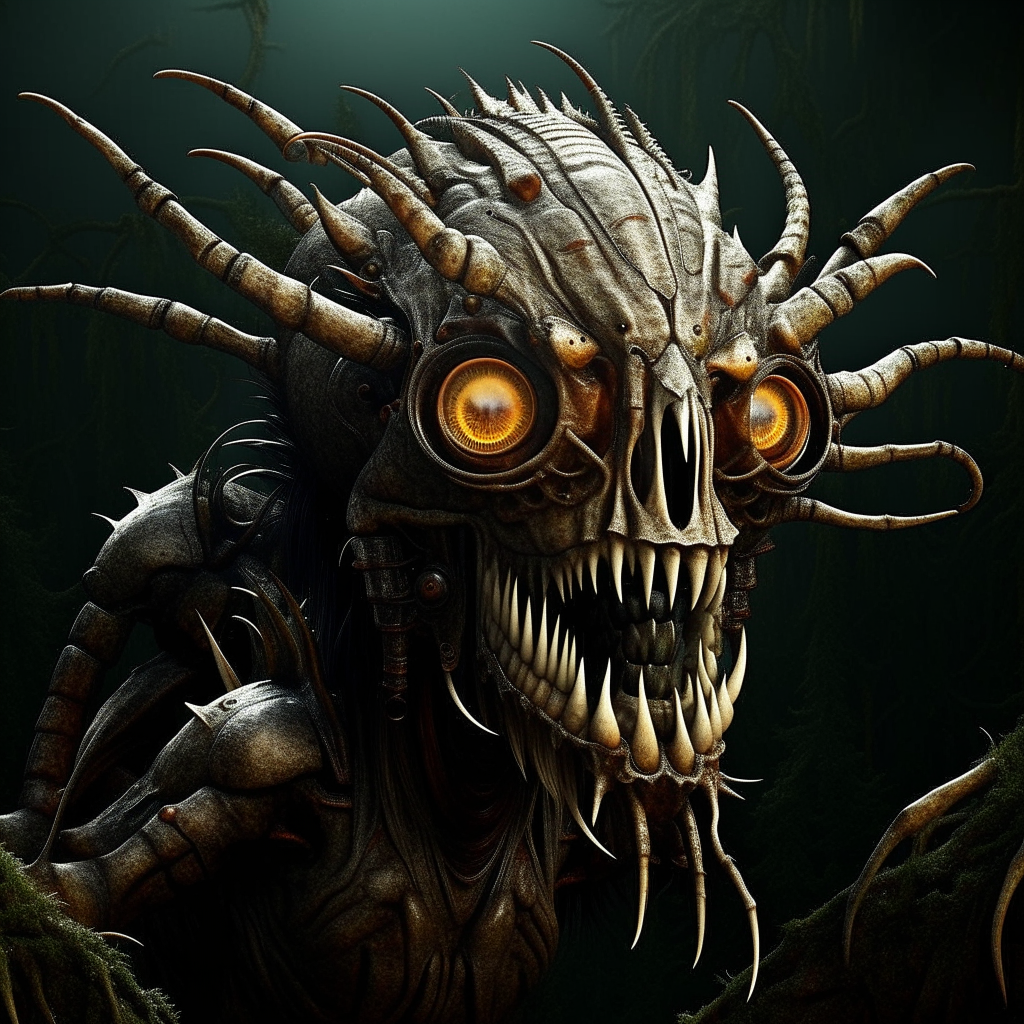}
        \caption{Kandinsky3, acanthocephalan.n.01, acanthocephalan}
    \end{subfigure}
    \begin{subfigure}[b]{0.3\textwidth}
        \includegraphics[width=\textwidth]{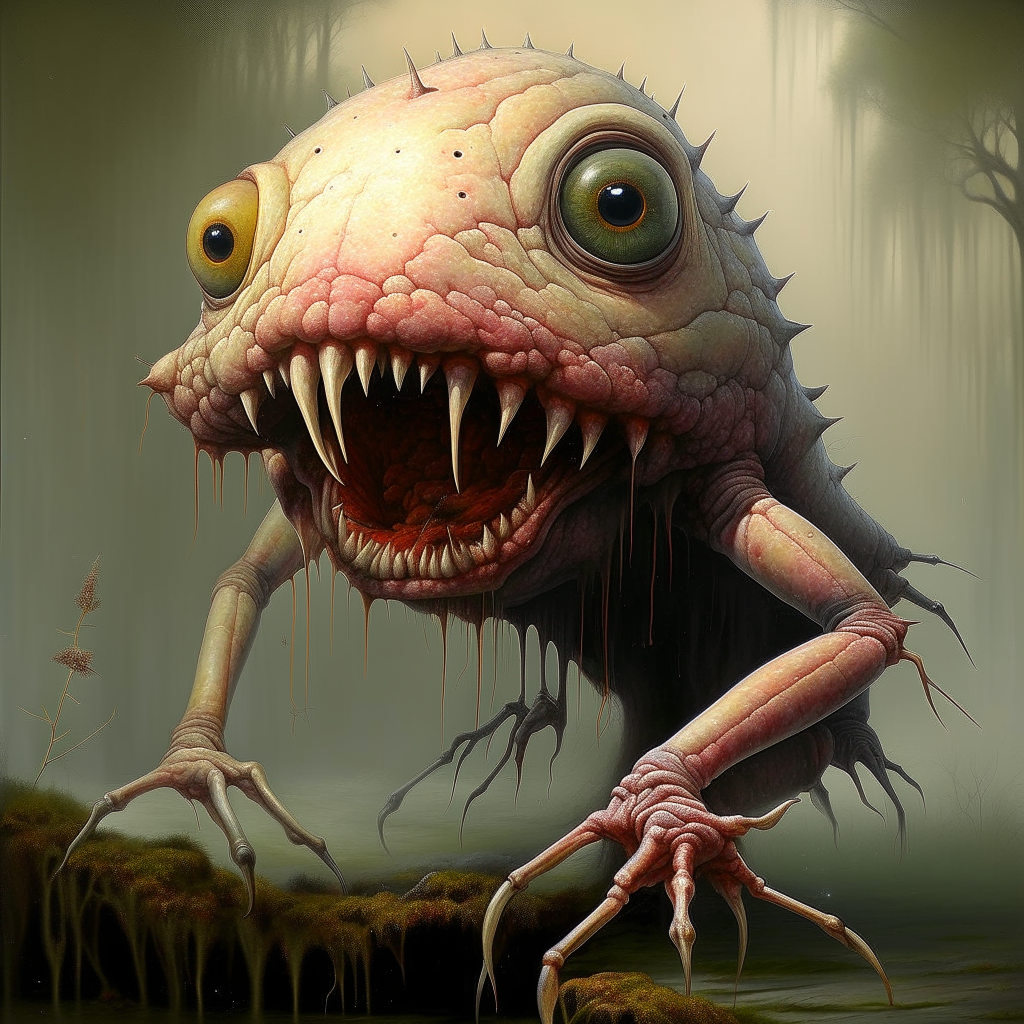}
        \caption{Kandinsky3, gelechiid.n.01, gelechiid}
    \end{subfigure}
    \caption{Monsters for rare animal names in Kandinsky}
\end{figure}
\end{enumerate}

Most importantly, models struggle closer to the leaves of a taxonomy: they tend to create an image of a parent concept without necessary features of the child (see figure \ref{cheese_image}).

\begin{figure}[h!]
    \centering
    \includegraphics[width=0.8\textwidth]{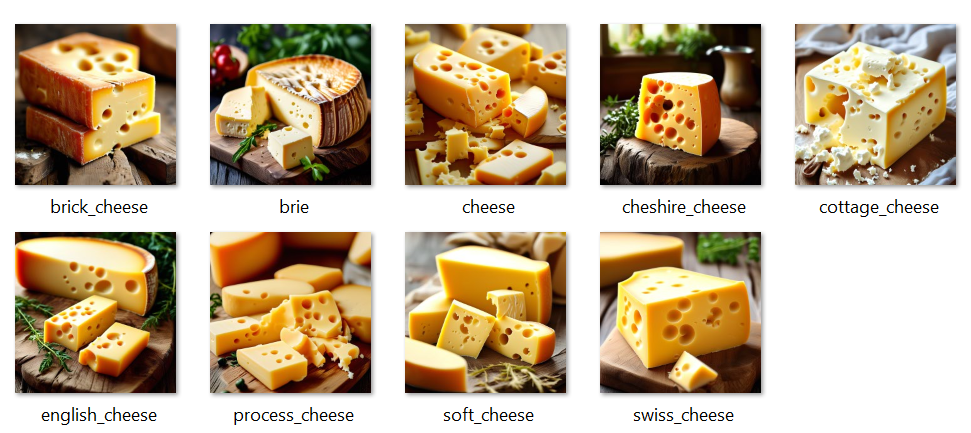}
    \caption{PixArt images for "cheese" and some of its hyponyms}
    \label{fig:image1} 
\end{figure}

\clearpage
\section{Demonstration System Examples}\label{sec:appendix_demo}

\begin{figure}[h]
    \centering
    \includegraphics[width=0.7\linewidth]{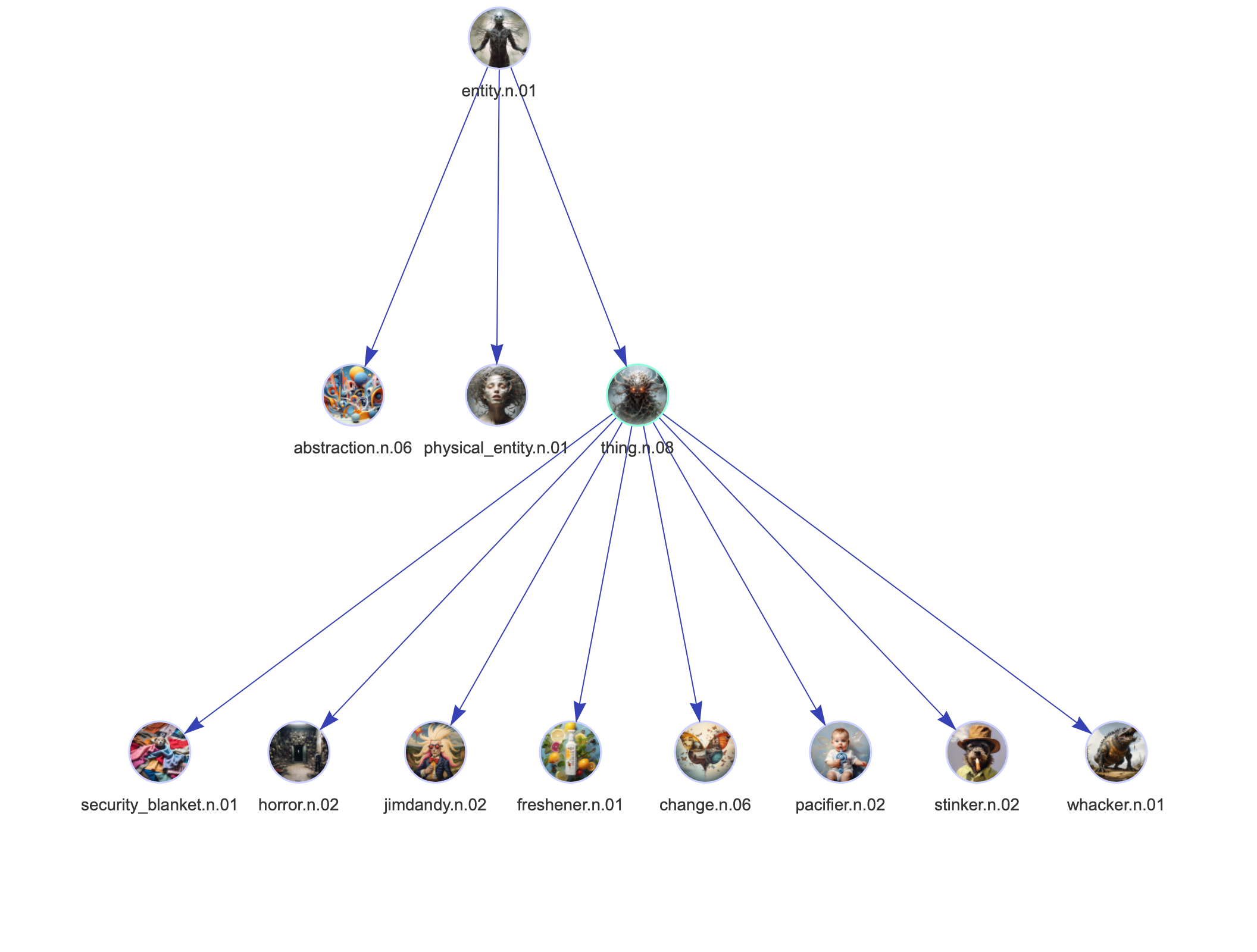}
    \caption{Subgraph starting from the root node ``entity.n.01''. Images are generated with the best TTI model.}
    \label{fig:16}
\end{figure}

\begin{figure}[h]
    \centering
    \includegraphics[width=0.7\linewidth]{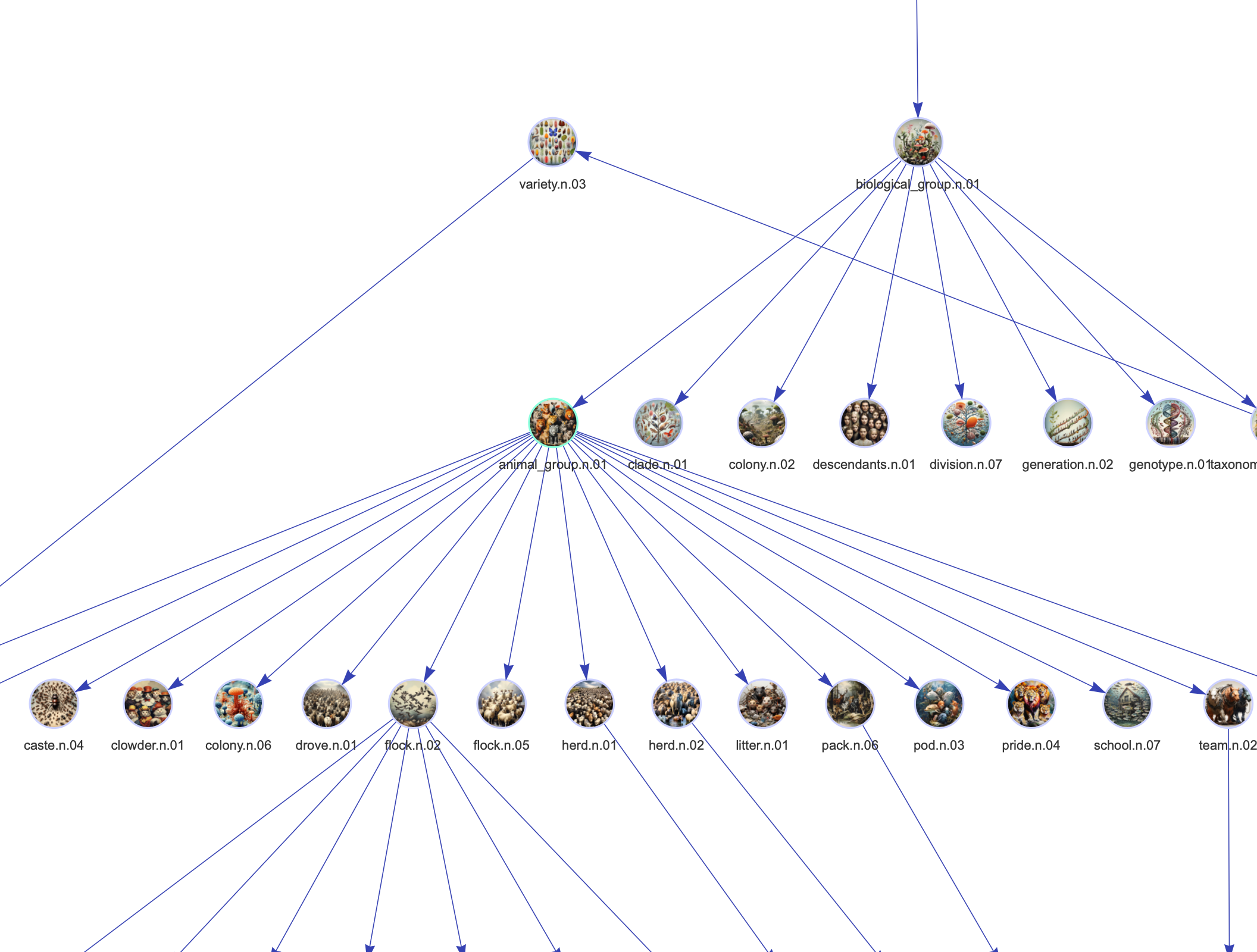}
    \caption{Subgraph starting from the node ``biological\_group.n.01''. Images are generated with the best TTI model.}
    \label{fig:17}
\end{figure}

\begin{figure}[h!]
\centering
    \begin{subfigure}[b]{0.4\textwidth}
        \includegraphics[width=\textwidth]{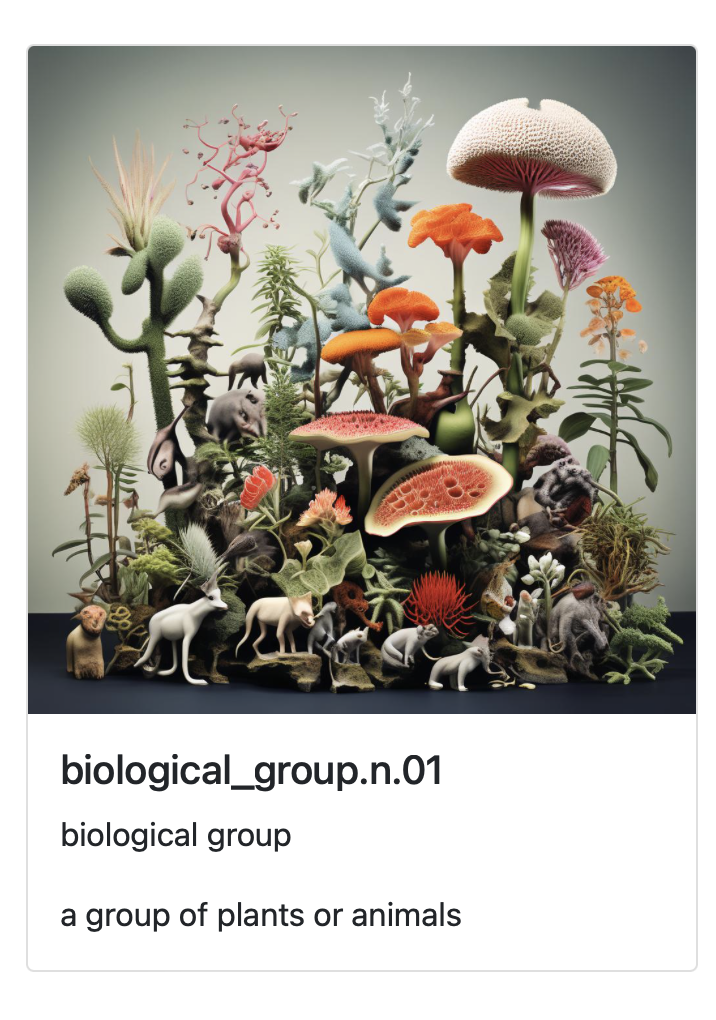}
    \end{subfigure}
    \begin{subfigure}[b]{0.4\textwidth}
        \includegraphics[width=\textwidth]{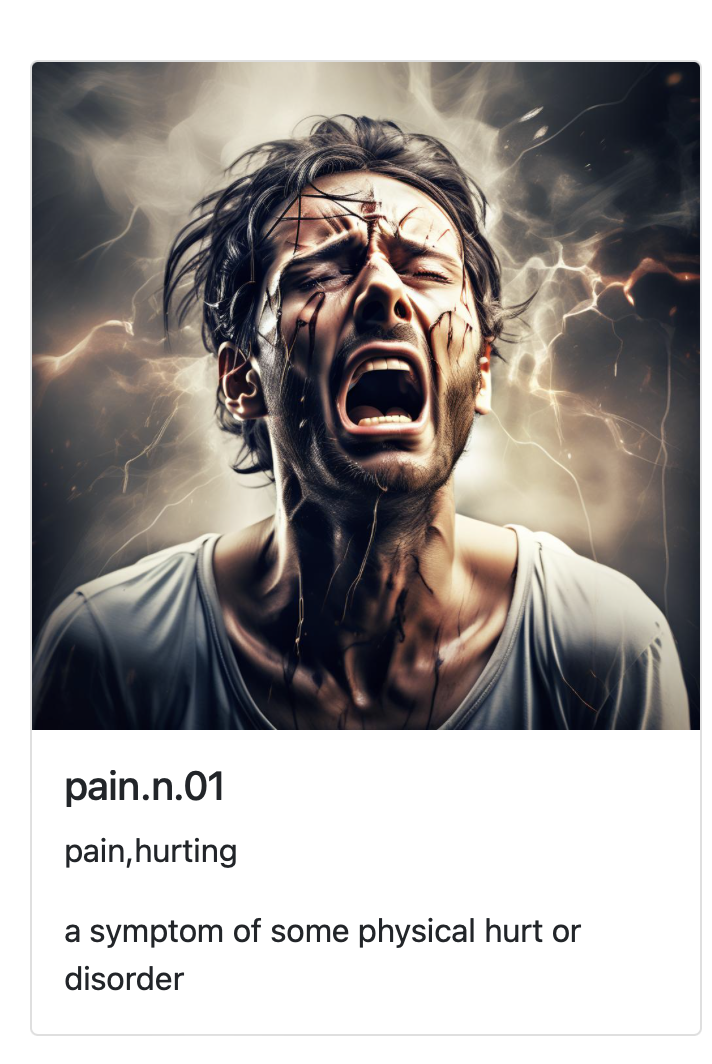}
    \end{subfigure}
    \\
    \begin{subfigure}[b]{0.4\textwidth}
       \includegraphics[width=\textwidth]{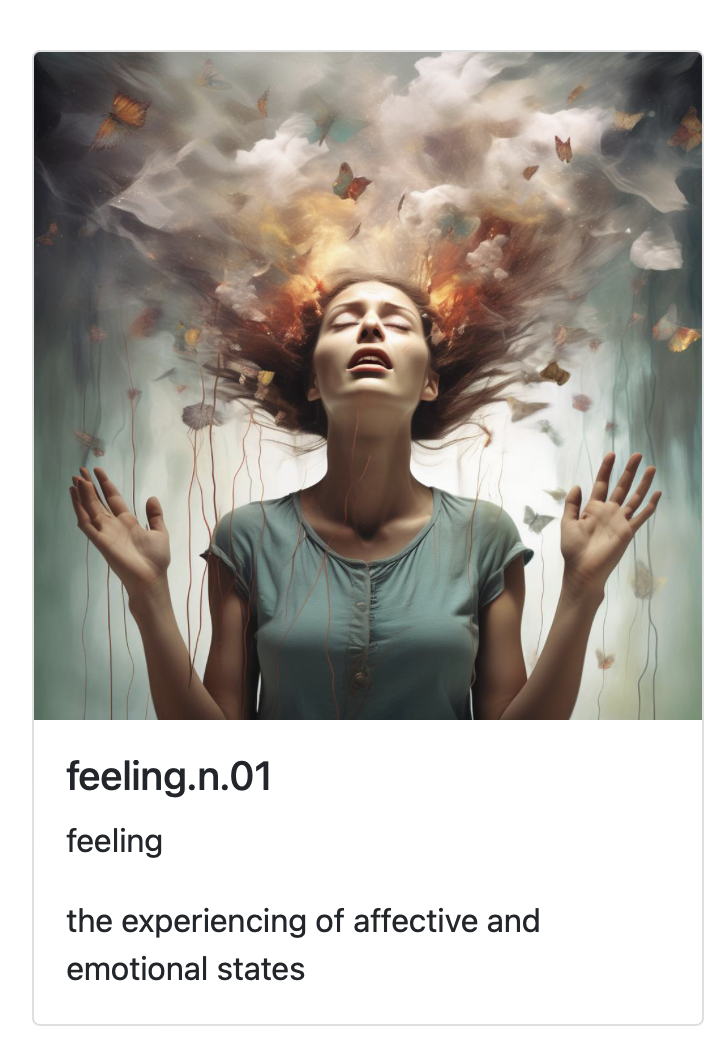} 
    
    \end{subfigure}
    \begin{subfigure}[b]{0.4\textwidth}
\includegraphics[width=\textwidth]{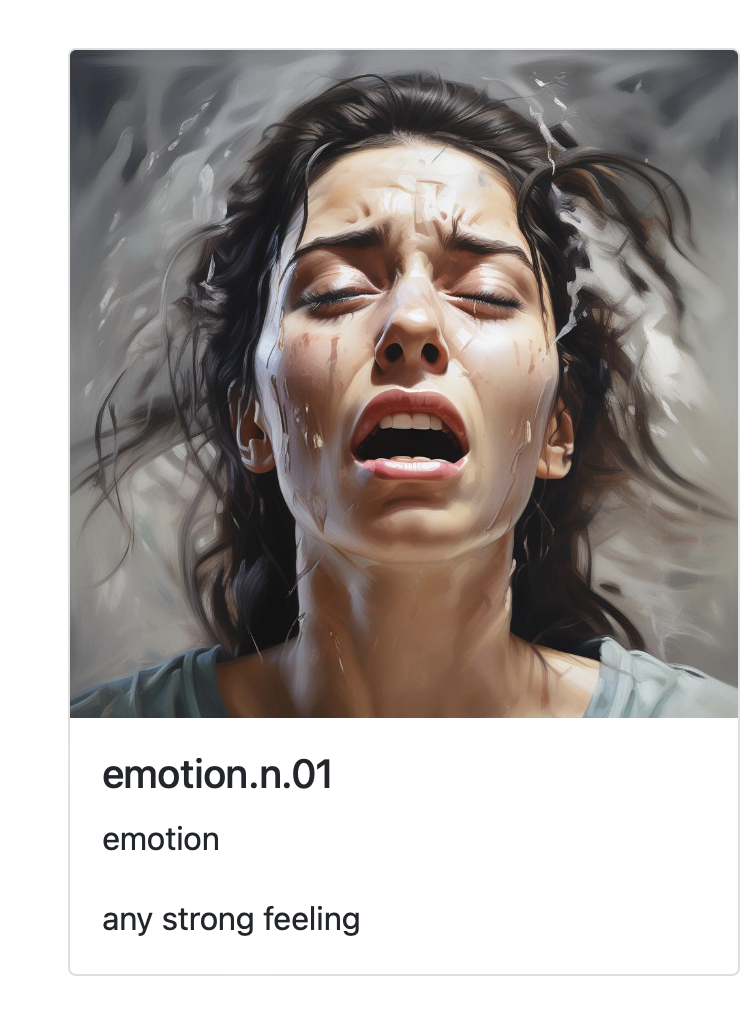}
    \end{subfigure}
    \caption{Node descriptions from the demonstration system with the generated image using the best-performing model.}
    \label{cheese_image}
\end{figure}

\end{document}